\documentclass{article} 
\usepackage[preprint, nonatbib]{neurips_2020}
\usepackage{neurips_2020}
\usepackage[utf8]{inputenc} 
\usepackage[T1]{fontenc}    
\usepackage{hyperref}       
\usepackage{url}            
\usepackage{booktabs}       
\usepackage{amsfonts}       
\usepackage{nicefrac}       
\usepackage{thm-restate}
\usepackage{microtype}      
\usepackage{xcolor}
\usepackage{graphicx}
\usepackage{bm}
\usepackage{bbm}
\usepackage{thm-restate}
\usepackage{multirow,enumitem}
\usepackage{amsmath,amsfonts,bm}
\usepackage{amssymb,amsthm}
\renewcommand{\[}{\begin{eqnarray}}
\renewcommand{\]}{\end{eqnarray}}

\DeclareMathOperator*{\mathbb{E}}{\mathbb{E}}
\DeclareMathOperator*{\E}{\mathbb{E}}

\newcommand{\bk}{{\bf k}}
\newcommand{\bu}{{\bf u}}
\newcommand{\bv}{{\bf v}}
\usepackage{mathbbol}

\newtheorem{lemma}{Lemma}

\newtheorem{definition}{Definition}
\newtheorem{proposition}{Proposition}
\newtheorem{remark}{Remark}


\author{%
Etai Littwin\thanks{Equal Contribution} \\
School of Computer Science \\
Tel Aviv University \\
Tel Aviv, Israel \\
\texttt{etai.littwin@gmail.com} \\
\And
Tomer Galanti$^*$ \\
School of Computer Science \\
Tel Aviv University \\
Tel Aviv, Israel \\
\texttt{tomerga2@tauex.tau.ac.il} \\
\AND
Lior Wolf \\
Facebook AI Research (FAIR) \& \\
School of Computer Science \\
Tel Aviv University \\
Tel Aviv, Israel \\
\texttt{wolf@fb.com} \\
}

\begin{document}
\title{On Random Kernels of Residual Architectures}
\maketitle



\begin{abstract}
We derive finite width and depth corrections for the Neural Tangent Kernel (NTK) of ResNets and DenseNets. Our analysis reveals that finite size residual architectures are initialized much closer to the ``kernel regime'' than their vanilla counterparts: while in networks that do not use skip connections, convergence to the NTK requires one to fix the depth, while increasing the layers' width. Our findings show that in ResNets, convergence to the NTK may occur when depth and width simultaneously tend to infinity, provided with a proper initialization. In DenseNets, however, convergence of the NTK to its limit as the width tends to infinity is guaranteed, at a rate that is independent of both the depth and scale of the weights. Our experiments validate the theoretical results and demonstrate the advantage of deep ResNets and DenseNets for kernel regression with random gradient features.
\end{abstract}


\section{Introduction}

Understanding the effect of different architectures on the ability to train deep networks has long been a major research topic. A popular playing ground for studying the forward and backward propagation of signals at the point of initialization, is the ``infinite width'' regime~\cite{Neal1996PriorsFI,infinite,deep_info,mean_res, GP,exact}. In this regime, Gaussian Process behaviour emerges in pre-activations, when the weights are sampled i.i.d from a normal distribution, giving rise to tractable training dynamics~\cite{NTK,NIPS2019_9063,GP,g.2018gaussian}. 

This notion was first made precise by the Neural Tangent Kernel (NTK) paper~\cite{NTK}, in which it is shown that the training dynamics of fully connected networks trained with gradient descent can be characterized by a kernel, when the width of the network approaches infinity. Specifically, the evolution through time of the function computed by the network follows the dynamics of kernel regression. 
Let $f(x;w) \in \mathbb{R}$ denote the output of a fully connected feed forward network of width $n$, with i.i.d normally distributed weights $w$ and input $x \in \mathbb{R}^{n_0}$. The \emph{neural tangent kernel} (NTK) is given by: $\mathcal{G}(x,x';w) := \frac{\partial f(x;w)}{\partial w} \cdot \frac{\partial^\top f(x';w)}{\partial w}$. As the width of each layer approaches infinity, provided with proper scaling and initialization of the weights, it holds that $\mathcal{G}(x,x';w)$ converges in probability to the infinite width limit kernel function:
\begin{equation}\label{eq:limit}
\lim_{n \rightarrow \infty}\mathcal{G}(x,x';w) = \mathcal{K}(x,x')
\end{equation}
As shown in~\cite{NTK}, when the width tends to infinity, minimizing the squared loss $\mathcal{L}(w)$ using gradient descent is equivalent to a kernel regression with kernel $\mathcal{K}$.

Recent empirical support has demonstrated the power of NTK and CNTK (convolutional neural tangent kernel) on practical datasets, showing new state of the art results for kernel methods, surpassing other known kernels by a large margin~\cite{exact,CNTK,arora2020harnessing}.
It is, therefore, interesting to understand how far the training dynamics of practically sized architectures deviate from the ``infinite width'' regime. To that end, an important subtlety worth considering is the rate of convergence in Eq.~\ref{eq:limit}, and its dependence on other hyper parameters, such as, depth and scale. This question has recently been addressed in the case of vanilla feed forward fully connected networks~\cite{finite_ntk}, where it is shown that the normalized variance of the diagonal entries of the NTK is exponential in the ratio between the depth $L$ and width $n$:
\begin{equation}\label{eq1}
\frac{Var\big(\mathcal{G}(x,x;w)\big)}{\mathbb{E}[\mathcal{G}(x,x;w)]^2} \sim \exp \Big[\frac{CL}{n}\Big] - 1
\end{equation}
where $C>0$ is a constant. Hence, convergence to the limiting kernel cannot happen when both are taken to infinity at the same rate. {From Eq.~\ref{eq1} it is evident that for an $L$-depth vanilla network, the width should be at least $\Omega(L)$ in order to maintain a fixed ratio in the exponent of Eq.~\ref{eq1}. In this case, the total parameter complexity of the network is at least $\Omega(L^3)$. This important observation suggests that deep and narrow vanilla networks operate far from the ``infinite width'' regime at initialization. In this work, we derive finite width and depth corrections to the NTK of residual and densely connected architectures, revealing a depth invariant property unique to these architectures. From this analysis it is evident that, in contrast to vanilla ReLU networks, the required parameter complexities of $L$-depth ResNets and DenseNets is as small as $\mathcal{O}(L)$ and $\mathcal{O}(L^2)$ (resp.) in order to maintain a bounded normalized variance.} 

However, the presented analysis of the asymptotic behaviour of the ratio in Eq.~\ref{eq1} is lacking, since only individual entries along the diagonal are investigated, and it does not consider the joint distribution of the full NTK matrix. To present a more complete analysis, we conduct extensive empirical experiments on MNIST and multiple small UCI datasets using random draws of $\mathcal{G}$ as kernel approximations, demonstrating the power of random gradient features $\nabla_w f(x;w)$ of deep residual architectures. Surprisingly, for fixed width ResNets and DenseNets, the performance of kernel regression using $\mathcal{G}$ as a substitute for $\mathcal{K}$ improve with depth and approach the latter, whereas in vanilla architectures, clear degradation is observed.


Our main contributions are as follows. 
\begin{enumerate}[leftmargin=*]
\item Thms.~\ref{thm:eq} and~\ref{thm:dual} introduce a forward-backward norm propagation duality for a wide family of ReLU feedforward architectures, which is a useful tool for analyzing the rate of convergence of $\mathcal{G}(x,x;w)$, for finite sized networks.
\item In Thms.~\ref{thm:res_ntk_var} and~\ref{thm:dense_ntk_var}, we rigorously derive finite width and depth corrections for ResNet and DenseNet architectures, revealing a fundamentally different relationship between width, depth and $\mathcal{G}(x,x;w)$. Unlike vanilla architectures, when properly scaled, convergence to the limiting kernel is achieved, when taking both the width and the depth of the architecture to infinity simultaneously. 
\item Our experiments validate the convergence rates of both the diagonal $\mathcal{G}(x,x;w)$ and off-diagonal $\mathcal{G}(x,x';w)$ NTK terms. In addition, they demonstrate the advantage of deep ResNets and DenseNets over vanilla networks for kernel regression with random gradient features on MNIST and multiple small UCI datasets.
\end{enumerate}



\section{Preliminaries And Notations}\label{sec:arch}

Throughout the paper, we make use of the following notations. Let $f(x;w) \in \mathbb{R}$ denote the output of a parameterized function $f$ on input $x \in \mathbb{R}^{n_0}$ with a vector $w$ of real valued parameters. Throughout the paper, we assume that the coordinates of $w$ are i.i.d and normally distributed. With no loss of generality, we also assume that $\|x\|_2=1$. The ReLU non-linearity is denoted by $\phi(x) := \max(0,x)$. The intermediate outputs of a neural network are denoted by $\{y^l(x)\}_{l=0}^L$ (see Eqs.~\ref{constant_res} and~\ref{constant_dense}), for a fixed input $x\in \mathbb{R}^{n_0}$. For simplicity, the dependence of the outputs on $x$ is often made implicit $\{y^l\}_{l=0}^L$ when the specific input used to calculate the outputs can be inferred from context. $y_i^l$ denotes the $i$'th component of the vector $y^l$, and $n_1,...,n_L$ denote the width of the corresponding layers, with $n_0$ the length of the input vector. We denote by $\|x\|_2$ the Euclidean norm of the vector $x$ and by $\|W\|_2$ the Frobenius norm of the matrix $W$. We denote the weight matrix associated with layer $l$ by $W^l$, with lower case letters $w_{i,j}^l$ denoting the individual components of $W^l$. Additional superscripts $W^{l,k}$ are used, when several weight matrices are associated with layer $l$. Weights appearing without any superscript $w$ denote all the weights concatenated into a vector. 
The NTK of the function $f$ is denoted by $\mathcal{G}(x,x';w) := \frac{\partial f(x;w)}{\partial w} \cdot \frac{\partial^{\top} f(x';w)}{\partial w}$.


{\bf Residual networks} have reintroduced the concept of bypass connections~\cite{ResNets}, allowing the training of deep and narrow models with relative ease. A generic, residual architecture $f(x;w)$, with residual branches of depth $m$, takes the form: $f(x;w) = \frac{1}{\sqrt{n_L}} \cdot W^{L} \cdot y^L$, where, for all $l \in [L]$, $y^l$ is defined recursively as follows:
\begin{equation}\label{constant_res}
\begin{aligned}
y^l = 
\begin{cases}
\frac{1}{\sqrt{n_0}} \cdot W^0 x & l=0\\
y^{l-1} + \sqrt{\alpha_l}y^{l-1,m} &o.w
\end{cases}~\textnormal{ and }~
y^{l-1,h} = 
\begin{cases} 
      \sqrt{\frac{1}{n_{l-1,h-1}}}W^{l,h}q^{l-1,h-1} & 1<h\leq m  \\
      \sqrt{\frac{1}{n_{l-1}}} \cdot W^{l,h}y^{l-1} & h=1   \end{cases}
\end{aligned}
\end{equation}

Here, $\{\alpha_l\}_{l=1}^L$ are scaling coefficients, $W^0 \in \mathbb{R}^{n'_0 \times n_0}$, $W^{l,h} \in \mathbb{R}^{ n_{l-1,h} \times n_{l-1,h-1}},W^{l,1} \in \mathbb{R}^{n_{l-1,1} \times n_{l-1}},W^{l,m} \in \mathbb{R}^{n_l \times n_{l-1,m-1}}$, $q^{l,h} = \sqrt{2}\phi(y^{l,h})$ (see Fig.~\ref{fig:figs} for an illustration).


\begin{figure*}[t]
\centering
\begin{tabular}{cc}
\includegraphics[ width=.45\linewidth]{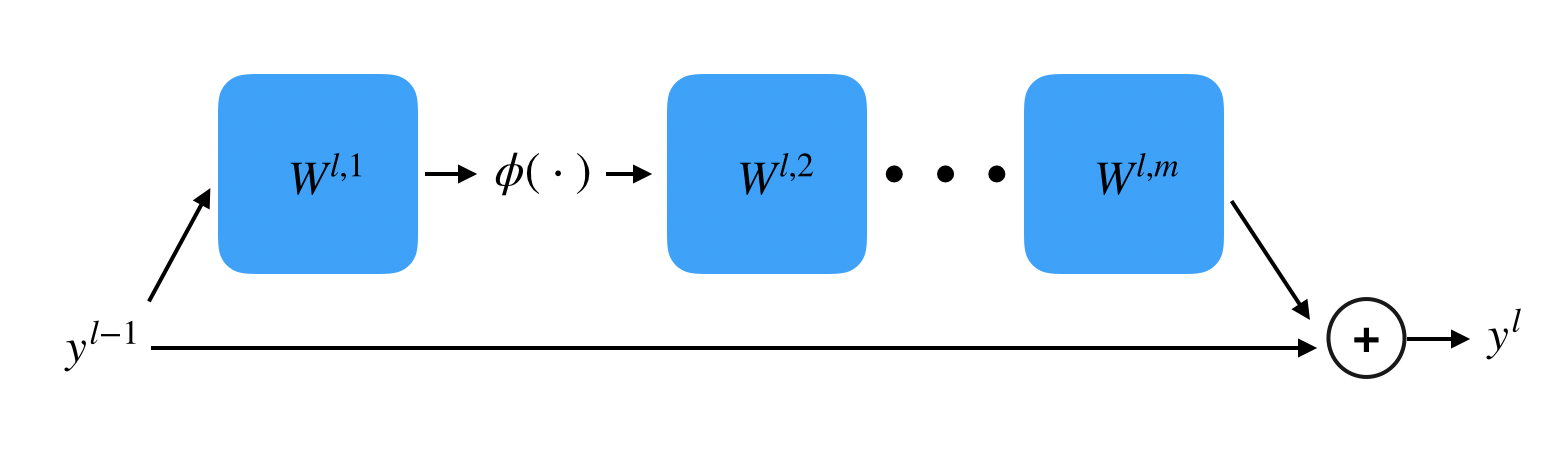}&
\includegraphics[width=.45\linewidth]{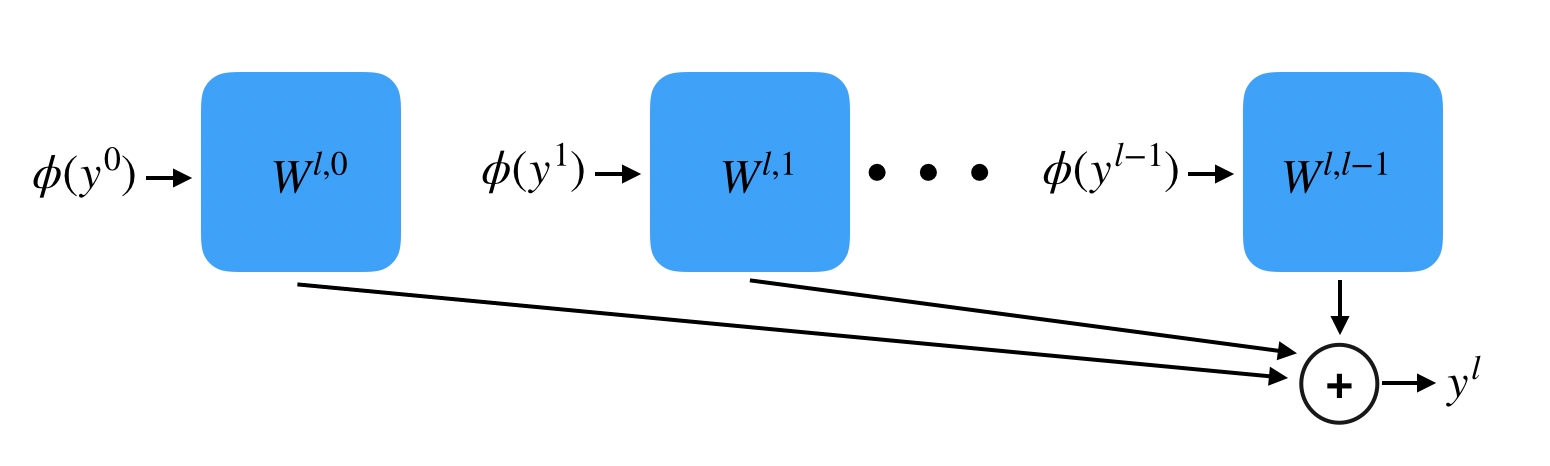}\\
(a) & (b) \\
\end{tabular}
  \caption{An illustration of {\bf (a)} ResNet and {\bf (b)} DenseNet, as given in Eqs.~\ref{constant_res} and~\ref{constant_dense} (with constant width and absent scaling coefficients).} 
  \label{fig:figs}
\end{figure*}

{\bf DenseNets} were recently introduced~\cite{DenseNets}, demonstrating faster training, as well as improved performance on several popular datasets. The main architectual features introduced by DenseNets include the connection of each layer output to all subsequent layers, using concatenation operations, instead of summation, such that the weights of layer $l$ multiply the concatenation of the outputs $y^0,...,y^{l-1}$. A DenseNet $f(x;w)$ is defined in the following manner: $f(x;w) := \frac{1}{\sqrt{n_L}} \cdot W^L \cdot y^L$, where, for all $l \in [L]$, $y^l$ is defined recursively as follows:
\begin{equation}\label{constant_dense}
\begin{aligned}
y^l = 
\begin{cases}
\frac{1}{\sqrt{n_0}} \cdot W^0 x & l=0 \\
\sqrt{\frac{\alpha}{n_{l-1} \cdot l}}\sum_{h=0}^{l-1}W^{l,h} q^h & o.w
\end{cases}
\end{aligned}
\end{equation}
where $\alpha$ is a scaling coefficient and
$W^{l,h} \in \mathbb{R}^{n_l \times n_{l-1}}$ (see Fig.~\ref{fig:figs} for an illustration).

\section{Forward-Backward Norm Propagation Duality}\label{sec:dual}

In this work, we aim to derive an expression for the first and second moments of the diagonal entries $\mathcal{G}(x,x;w)$ at the point of initialization $w$, given by the Jacobian squared norm evaluated on $x$:
\begin{equation}\label{ee}
\mathcal{G}(x,x;w) = \|J(x)\|^2_2 = \sum_{\bk}\|J^{\bk}(x)\|^2_2 
\end{equation}
where $J^{\bk}(x) := \frac{\partial f}{\partial W^{\bk}}$ denotes the per-weight Jacobian. Bold letters (a.k.a $\bk,\bu,\bv$) stand for identities of matriices in the network. For instance, in ResNets, $\bk$ can take values in $\{0,L\} \cup [L] \times [m]$. The sum $\sum_{\bk}\|J^{\bk}\|^2_2$ denotes summation over every weight matrix in the network.
In the following analysis, we assume that the output of $f$ is computed using a single fixed sample $x$. 
To facilitate our derivation, we introduce a link between the propagation of the norm of the activations, and the norm of the per-layer Jacobian in random ReLU networks of finite width and depth. This link will then allow us to study the statistical properties of the full Jacobian in general architectures incorporating residual connections and concatenations with relative ease.  Specifically, we would like to establish a connection between the first and second moments of the squared norm of the output $f(x;w)^2$, and those of the per layer Jacobian norm $\|J^{\bk}\|^2_2$. Using a path-based notation, for any weight matrix $W^{\bk}$, the output $f(x;w)$ can be decomposed to paths that go through $W^{\bk}$ (i.e, paths that include weights from $W^{\bk}$), denoted by $f_{\bk}(x;w)$, and paths that skip $W^{\bk}$, denoted by the complement $f^c_{\bk}(x;w)$. Namely: 
\begin{equation}\label{eq:complement}
f(x;w) = f_{\bk}(x;w)+f^c_{\bk}(x;w) 
=\sum_{\gamma \in S_{\bk}}c_\gamma z_{\gamma}\prod_{l=1}^{|\gamma|} w_{\gamma,l} + \sum_{\gamma \in S \setminus S_{\bk} }c_\gamma z_{\gamma}\prod_{l=1}^{|\gamma|} w_{\gamma,l} 
\end{equation}
where the summations are over paths $\gamma \in S$ from input to output, with $|\gamma|$ denoting the length of the path, and $c_\gamma$ a scaling factor. In standard fully connected networks, we have $|\gamma| = L+2$ (when considering the initial and final projections $W^0,W^L$) and the total number of paths is $\prod_{l=0}^{L} n_l$. The term $z_{\gamma}\prod_{l=1}^{|\gamma|} w_{\gamma,l}$ denotes the product of weights along path $\gamma$, multiplied by a binary variable $z_{\gamma} \in \{0,1\}$, indicating whether path $\gamma$ is active (i.e all relevant activations along the specific path are on). The set $S_{\bk}$ indicates the set of all paths that include weights from $W^{\bk}$. 

We make the following definition:
\begin{definition}[Reduced network]\label{def}
Let $f(x;w)$ be a neural network (e.g., vanilla network, ResNet or DenseNet). We define the reduced network $f_{(\bk)}(x;w)$ to be the neural network obtained by removing all connections bypassing weights $W^{\bk}$ from the network $f(x;w)$. The corresponding hidden layers of $f_{(\bk)}(x;w)$ are denoted by $y^0_{(\bk)},...,y^L_{(\bk)}$ and its weights by $w_{(\bk)}$.
\end{definition}

Note that for vanilla networks, it holds that, for all $ \bk \in [L]$, we have: $f_{(\bk)}(x;w) = f_{\bk}(x;w) = f(x;w)$ and $y^l_{(\bk)} = y^l$. In the general case, the equality $f_{(\bk)}(x;w) = f_{\bk}(x;w)$ does not hold, since $f_{(\bk)}(x;w)$ contains different activation patterns, induced by the removal of residual connections.
The following theorem states that the moments of both are equal in the family of considered ReLU networks (see Fig.~\ref{fig:dualthm} for an illustration):
\begin{restatable}{theorem}{eq}\label{thm:eq}
Let $f(x;w)$ be a ResNet/DenseNet, as described in Sec.~\ref{sec:arch}. Then, for any non-negative even integer $m$, we have:
\begin{equation}\label{eq:equi}
\forall~\bk :~ \mathbb{E}_w\left[(f_{(\bk)}(x;w))^{m} \right] =  \mathbb{E}_w\left[(f_{\bk}(x;w))^{m} \right]
\end{equation}
\end{restatable}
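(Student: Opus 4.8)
\emph{Proof sketch (plan).}
The cases $\bk\in\{0,L\}$ are trivial: every input--output path traverses the initial and final projections, so $f_{(\bk)}=f_{\bk}=f$. Fix then $\bk=(l,h)$, a weight matrix inside the residual branch of block $l$ (the DenseNet case is treated analogously and commented on below). The first step is structural. By Definition~\ref{def}, forming $f_{(\bk)}$ deletes the connections bypassing $W^{\bk}$, and this does not touch the sub-network computing the residual-branch contribution of block $l$, call it $r$ (so that $y^{l}=y^{l-1}+\sqrt{\alpha_{l}}\,r$ in Eq.~\ref{constant_res} and $y^{l}_{(\bk)}=\sqrt{\alpha_{l}}\,r$), because that branch only reads $y^{l-1}$, which is left unchanged. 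Hence, in the path notation of Eq.~\ref{eq:complement}, $f_{\bk}$ and $f_{(\bk)}$ are \emph{both} sums over the same index set $S_{\bk}$ of paths, with \emph{identical} coefficients $c_{\gamma}$ and weight products $\prod_{s}w_{\gamma,s}$; they differ only in the activation indicators, and only at neurons downstream of layer $l$ (closer to the output), where $f_{\bk}$ uses the gates of the forward pass driven by $y^{l}=y^{l-1}+\sqrt{\alpha_{l}}\,r$ and $f_{(\bk)}$ those driven by $y^{l}_{(\bk)}=\sqrt{\alpha_{l}}\,r$. Writing the sub-network mapping the layer-$l$ vector to the scalar output, with all its gates frozen, as a linear map $M(\cdot)$, this reads $f_{\bk}=\sqrt{\alpha_{l}}\,M(y^{l})\,r$ and $f_{(\bk)}=\sqrt{\alpha_{l}}\,M(y^{l}_{(\bk)})\,r$; and since that frozen gate pattern is scale-invariant in its driving input (ReLU is positively homogeneous), the entire difference reduces to $M$ being evaluated at the \emph{direction} $\widehat{y^{l}}$ versus the direction $\widehat r$.

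Next I would expand $\mathbb{E}_{w}[(f_{\bk})^{m}]=\sum_{\gamma_{1},\dots,\gamma_{m}\in S_{\bk}}\big(\prod_{i}c_{\gamma_{i}}\big)\,\mathbb{E}_{w}\big[\prod_{i}z_{\gamma_{i}}\prod_{i,s}w_{\gamma_{i},s}\big]$, and the analogous sum for $f_{(\bk)}$ with the reduced-network indicators, then condition on all weights at or below block $l$. This fixes $y^{l-1}$, $r$, and the gate and weight factors along the portion of each path up to and including block $l$ (these coincide for the two networks), leaving random only the weights downstream of layer $l$. By the previous paragraph the theorem reduces to the following direction-independence claim: for every unit vector $\hat z$, every vector $s$, and every family $\delta_{1},\dots,\delta_{m}$ of downstream path-segments,
\begin{equation*}
\mathbb{E}\Big[\,\textstyle\prod_{i}\sigma[\delta_{i}](\hat z)\cdot\prod_{i}w[\delta_{i}]\,\Big]\quad\text{is independent of }\hat z ,
\end{equation*}
where $\sigma[\delta](\hat z)$ is the product of the downstream gates along $\delta$ when $M$ is driven by direction $\hat z$, $w[\delta]$ the product of the downstream weights along $\delta$, and the expectation is over the downstream weights. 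Given the conditioning, $\mathbb{E}[(f_{\bk})^{m}]$ and $\mathbb{E}[(f_{(\bk)})^{m}]$ are the \emph{same} fixed linear combination of such expectations, with $\hat z=\widehat{y^{l}}$ in the first and $\hat z=\widehat r$ in the second; the claim makes them equal, and un-conditioning concludes.

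The direction-independence claim is the heart of the matter and I would prove it by a gate-by-gate symmetrization. Since the downstream weights are i.i.d.\ centered Gaussian, a monomial in their entries has nonzero mean only if every entry occurs to an even power; and each row $w_{u,\cdot}$ of each downstream matrix satisfies $w_{u,\cdot}\stackrel{d}{=}-w_{u,\cdot}$ independently of the rows it does not feed. Process the downstream gate layers from the output inward. At a hidden neuron $u$ of the outermost gate layer not yet processed, $\sigma_{u}=\mathbbm{1}[\langle w_{u,\cdot},v\rangle>0]$ depends on the whole row $w_{u,\cdot}$, where $v$ is the input to $u$'s layer --- a function of $\hat z$ and of the weights lying between layer $l$ and $u$, hence independent of $w_{u,\cdot}$; meanwhile the part of $\prod_{i}w[\delta_{i}]$ carried in that row is a single monomial of degree $\kappa_{u}$, the number of traversals of $u$, which the row reflection rescales by $(-1)^{\kappa_{u}}$. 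If $\kappa_{u}$ is odd, some outgoing edge of $u$ occurs to an odd power and --- all gates nearer the output having already been turned into constants --- that outgoing weight is an independent centered factor, so the term vanishes. If $\kappa_{u}$ is even, the reflection fixes the monomial, fixes every (now constant) gate nearer the output and every factor at or below block $l$, and sends $\sigma_{u}\mapsto 1-\sigma_{u}$; averaging over the Rademacher sign of the row therefore replaces $\sigma_{u}$ by $\tfrac12$, removing the only $\hat z$-dependence at $u$. Iterating over all downstream gate layers leaves an expression free of $\hat z$, which is the claim. DenseNets are handled the same way, once the reduced network of $\bk=(l,h)$ is taken to delete the parallel maps $W^{l,h'}$ ($h'\neq h$) together with the skip-reads $W^{l',h'}$ with $h'<l<l'$, so that every path is again forced through $W^{\bk}$ and the two outputs differ only in the forward passes driving the downstream gates; each downstream ``gate layer'' then depends on the rows of several aggregated matrices, which must be reflected jointly.

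The step I expect to be the main obstacle is making this per-row reflection legitimate, since flipping the sign of $w_{u,\cdot}$ changes, through the ReLU, \emph{every} gate downstream of $u$. The resolution is precisely the order of processing: by the time one reflects $w_{u,\cdot}$, all gates nearer the output have already been replaced by the constant $\tfrac12$ and are therefore blind to the flip, so the flip acts cleanly --- it only toggles $\sigma_{u}$ and rescales one monomial --- and the even-multiplicity bookkeeping then forces either vanishing or the substitution $\sigma_{u}\to\tfrac12$. Pinning down this ordering and the combinatorics of which outgoing weight is the ``spare'' odd factor, and carrying the DenseNet variant in which ``downstream'' is the partial order on blocks rather than a chain, is where the real work lies.
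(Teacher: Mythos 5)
Your plan is correct and follows essentially the same route as the paper: decompose $f_{\bk}$ and $f_{(\bk)}$ over the common path set $S_{\bk}$ (identical coefficients and weight monomials, differing only in activation indicators), then use the sign-symmetry of the Gaussian rows to replace each gate by the constant $\tfrac12$ (with odd-multiplicity monomials vanishing), which is exactly the content of the paper's Props.~\ref{p1} and~\ref{p3}. Your output-inward processing of the gate layers is the same peeling order used in the proof of Prop.~\ref{p3}, and your explicit handling of the ordering subtlety (reflecting a row only after all outer gates have been constified) is a valid, if anything more careful, rendering of the paper's ``flipping the sign of all the weights in layer $l$ flips the ensuing activation variables'' step.
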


\begin{figure*}[t]
\centering
\begin{tabular}{cc}
\includegraphics[trim=0cm 0cm 0cm 0cm, clip, width=.49\linewidth]{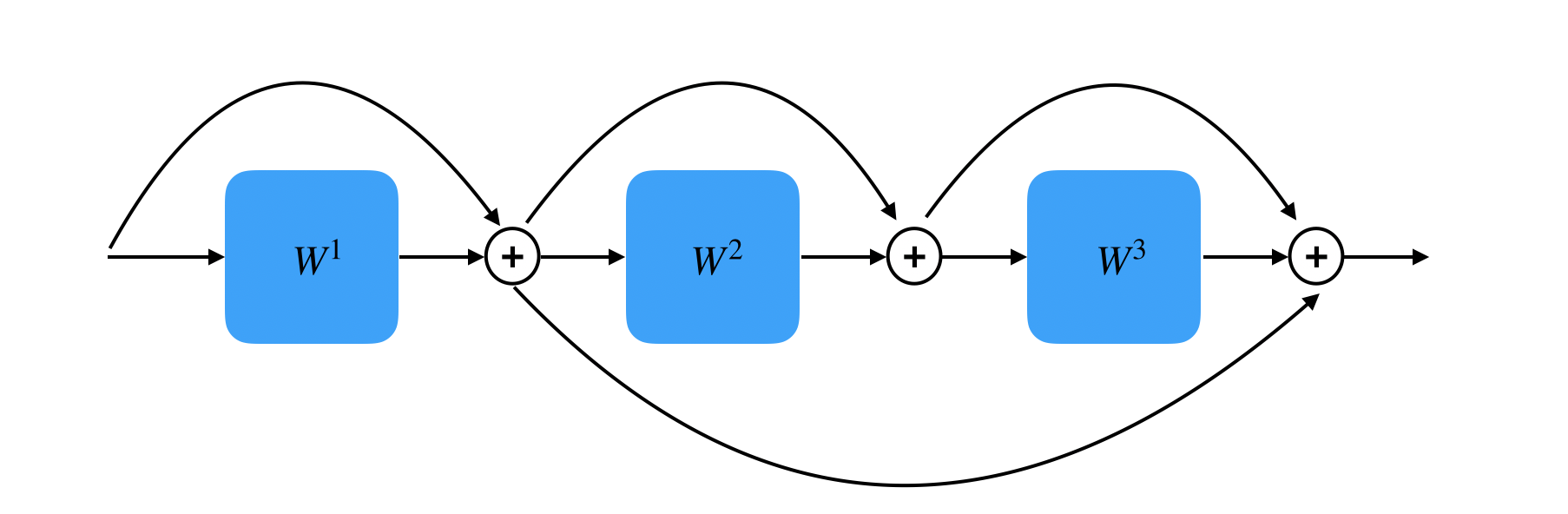}&
\includegraphics[trim=0cm 0cm 0cm 0cm, clip, width=.49\linewidth]{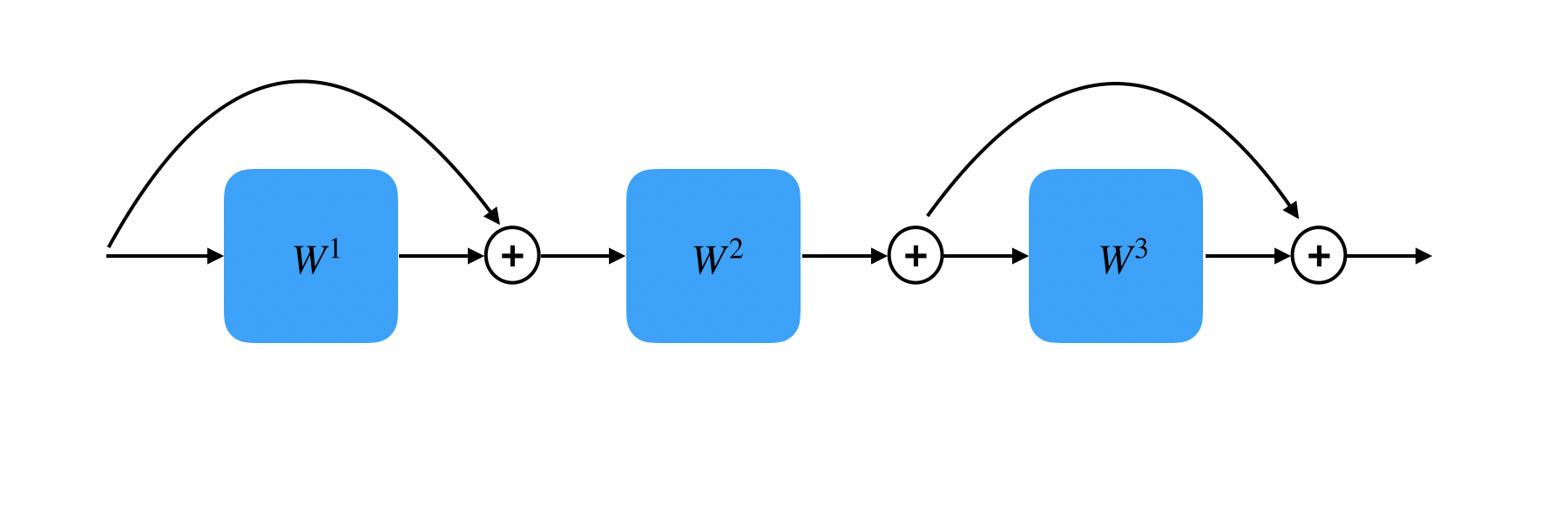}\\
(a) & (b)\\
\end{tabular}
  \caption{{\bf An illustration of Thm.~\ref{thm:eq}.} The activations of the network in {\bf (a)} are completely different from those of the network in {\bf (b)}, in which all skip connections bypassing layer $l=2$ are removed. However, the moments of the gradient norms at layer $l = 2$ are exactly the same in both {\bf (a)} and {\bf (b)}. }%
  \label{fig:dualthm}
\end{figure*}

The following theorem relates the moments of $\|J^{\bk}\|^2_2$ with those of $f_{(\bk)}(x;w)$:
\begin{restatable}{theorem}{dual}\label{thm:dual}
Let $f(x;w)$ be a ResNet/DenseNet as described in Sec.~\ref{sec:arch}. Then, we have:
\begin{enumerate}\label{eq:du}
  \item $\forall~\bk: ~\mathbb{E}_w\big[\|J^{\bk}\|^2_2 \big] = \mathbb{E}_w\big[(f_{(\bk)}(x;w))^2 \big]$.
  \item $\forall~\bk: ~\frac{\mathbb{E}_w\big[(f_{(\bk)}(x;w))^4 \big]}{3} \leq \mathbb{E}_w\big[\|J^{\bk}\|^4_2 \big] \leq \mathbb{E}_w\big[(f_{(\bk)}(x;w))^4 \big]$.
\end{enumerate}
\end{restatable}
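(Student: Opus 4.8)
\emph{Proof proposal.} The plan is to reduce both claims to the path decomposition $f=f_{\bk}+f^c_{\bk}$ of Eq.~\ref{eq:complement} and then invoke Theorem~\ref{thm:eq}. The key structural fact about the architectures of Sec.~\ref{sec:arch} is that every directed input-to-output path traverses each weight matrix $W^{\bk}$ \emph{at most once} (in a ResNet a path either skips residual block $l$ through the identity branch or enters it and uses $W^{l,1},\dots,W^{l,m}$ once each; in a DenseNet the block indices strictly decrease along a path). Hence, once the ReLU gate pattern is fixed, $f_{\bk}$ is a linear form in the entries of $W^{\bk}$ while $f^c_{\bk}$ contains no entry of $W^{\bk}$. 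Since, for a fixed input, the gate pattern is a.s.\ locally constant in $w$, this gives $J^{\bk}=\partial f/\partial W^{\bk}=\partial f_{\bk}/\partial W^{\bk}$, $\partial J^{\bk}_{ij}/\partial w^{\bk}_{kl}=0$ a.e., and, by Euler's identity for positively $1$-homogeneous maps, $f_{\bk}=\langle W^{\bk},J^{\bk}\rangle=\sum_{ij}w^{\bk}_{ij}J^{\bk}_{ij}$ a.e. Writing $J^{\bk}_{ij}=c_{\bk}\,\delta^{\bk}_i\,u^{\bk}_j$, with $u^{\bk}$ the input to $W^{\bk}$ and $\delta^{\bk}$ the signal back-propagated to the pre-activation $c_{\bk}W^{\bk}u^{\bk}$, also gives $\|J^{\bk}\|_2^2=c_{\bk}^2\|\delta^{\bk}\|_2^2\|u^{\bk}\|_2^2$.

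With these in hand I would compute the required moments of $f_{\bk}$ by Gaussian integration by parts in the entries of $W^{\bk}$, using $\partial_{w^{\bk}_{ij}}f_{\bk}=J^{\bk}_{ij}$ and $\partial_{w^{\bk}_{ij}}J^{\bk}=0$ (a.e.). The second moment simplifies to $\mathbb{E}[f_{\bk}^2]=\sum_{ij}\mathbb{E}[(J^{\bk}_{ij})^2]=\mathbb{E}[\|J^{\bk}\|_2^2]$, and Theorem~\ref{thm:eq} with $m=2$ replaces $\mathbb{E}[f_{\bk}^2]$ by $\mathbb{E}[(f_{(\bk)})^2]$, which is part~1. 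For the fourth moment, two successive integrations by parts (one for each identity) give $\mathbb{E}[f_{\bk}^4]=3\,\mathbb{E}[f_{\bk}^2\|J^{\bk}\|_2^2]$ and $\mathbb{E}[f_{\bk}^2\|J^{\bk}\|_2^2]=\mathbb{E}[\|J^{\bk}\|_2^4]$, so formally $\mathbb{E}[(f_{(\bk)})^4]=\mathbb{E}[f_{\bk}^4]=3\,\mathbb{E}[\|J^{\bk}\|_2^4]$; the constant $3$ is the Gaussian kurtosis. The reason only an inequality survives is already visible after conditioning on the read-out matrix $W^L$: conditionally on all the other weights, $f_{(\bk)}$ is exactly centered Gaussian, so its conditional fourth moment equals $3$ times the square of its conditional second moment, whereas $\|J^{\bk}\|_2^2$ equals $c_{\bk}^2\|u^{\bk}\|_2^2$ times a positive quadratic form in the Gaussian $W^L$, whose fourth moment lies in $[1,3]$ times $3$ times the square of its second moment. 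Combining with Theorem~\ref{thm:eq} for $m=4$ then gives the two-sided bound.

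The main obstacle — where I expect most of the technical work — is that $J^{\bk}$ (equivalently $\delta^{\bk}$) is only piecewise constant in $W^{\bk}$: it jumps across the gate-switching hyperplanes $\{v^{\bk}:\langle a_p,v^{\bk}\rangle=0\}$, so each integration by parts actually produces a boundary (jump) remainder $R$, and one must show that $R=0$ for the second moment and $R\in[-2\,\mathbb{E}[\|J^{\bk}\|_2^4],0]$ for the fourth. The identity is genuinely \emph{global}: for a fixed realization of the other weights it already fails for a general $1$-homogeneous map — a linear form is the only positively $1$-homogeneous function whose value and gradient have matched second moments under a standard Gaussian — and it is the $\sqrt2$ (He) scaling of the ReLUs, together with the independence and vanishing mean of the remaining Gaussian weights, that forces the jump remainders to cancel in expectation. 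Concretely I would (i) factor out $\|u^{\bk}\|_2$, which decouples because the isotropy of $v^{\bk}=c_{\bk}W^{\bk}u^{\bk}$ makes the entire downstream computation independent of $\|u^{\bk}\|_2$; (ii) establish, by induction on the layers of the downstream sub-network $g$ (or by a second path-counting argument), the bare identities $\mathbb{E}\langle\nabla g(\xi),\xi\rangle^2=\mathbb{E}\|\nabla g(\xi)\|_2^2$ and $\tfrac13\,\mathbb{E}\langle\nabla g(\xi),\xi\rangle^4\le\mathbb{E}\|\nabla g(\xi)\|_2^4\le\mathbb{E}\langle\nabla g(\xi),\xi\rangle^4$ for a He-scaled ReLU network $g$ and standard Gaussian $\xi$; and (iii) invoke Theorem~\ref{thm:eq} to exchange the full network's downstream gate pattern (which also feels the bypassed branches) for the reduced network's, where it decouples from the branch of interest. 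Steps (ii) and (iii) are the crux.
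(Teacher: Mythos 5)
Your overall framing (Euler homogeneity $f_{\bk}=\sum_{i,j}w^{\bk}_{i,j}J^{\bk}_{i,j}$ plus Gaussian integration by parts in the entries of $W^{\bk}$) is a genuinely different route from the paper, which instead expands $f_{\bk}$ and $\partial f_{\bk}/\partial W^{\bk}$ as sums over paths, factorizes the expectation layer by layer (Prop.~\ref{p3}), and compares the per-layer factors via the moment identities of Props.~\ref{p1} and~\ref{p2}. As a proof, however, your proposal has a genuine gap, and you flag it yourself: everything hinges on the jump remainders produced by the gate discontinuities, equivalently on the ``bare identities'' of your step (ii), and these are asserted rather than proven. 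The identity $\mathbb{E}\langle\nabla g(\xi),\xi\rangle^2=\mathbb{E}\|\nabla g(\xi)\|_2^2$ and the two-sided fourth-moment bound for the downstream subnetwork $g$ are not generic facts about positively $1$-homogeneous maps (as you note, they fail for $g(\xi)=\|\xi\|_2$); they are essentially the theorem itself restated one layer in, so deferring them to ``induction on the layers\ldots or a second path-counting argument'' leaves the hard part missing. In the paper this content is exactly Prop.~\ref{p1}: conditioned on the upstream activations, a gate $z$ depends on the signs but not the magnitudes of the weights feeding it, so $\mathbb{E}[\prod_i w_i^{m_i}z]=\prod_i c_{m_i}/2$. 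That symmetry, not the $\sqrt2$ (He) scaling you invoke, is what makes the remainders cancel; all the quantities involved are scale-invariant and the identities hold for any symmetric i.i.d.\ initialization.

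Your account of where the factor $3$ comes from is also misplaced. Taken at face value, your two integrations by parts yield the exact identity $\mathbb{E}[f_{\bk}^4]=3\,\mathbb{E}[\|J^{\bk}\|_2^4]$ (the extreme lower end of the interval), and you attribute the slack to the conditional law of $\|J^{\bk}\|_2^2$ given the readout $W^L$. In the paper's computation the discrepancy arises at layer $l_k$ itself, not at the readout: when the two pairs of paths pass through the \emph{same} entry $w^{\bk}_{i,j}$, dividing out $(w^{\bk}_{i,j})^4$ replaces the conditional factor $\mathbb{E}[(w^{\bk}_{i,j})^4 z\mid\cdot]=3/2$ by $\mathbb{E}[z\mid\cdot]=1/2$ (a per-term ratio of $1/3$), while for distinct entries the factors agree; since every path contribution is non-negative, a term-by-term comparison gives $\tfrac13\mathbb{E}[f_{\bk}^4]\le\mathbb{E}[\|J^{\bk}\|_2^4]\le\mathbb{E}[f_{\bk}^4]$, and Thm.~\ref{thm:eq} converts $f_{\bk}$ to $f_{(\bk)}$. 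Your conditioning-on-$W^L$ heuristic does not control the accumulation of jump remainders across all downstream layers, and the claim $R\in[-2\,\mathbb{E}\|J^{\bk}\|_2^4,0]$ is precisely the statement to be proven. To close the gap you would need to replace step (ii) by an explicit per-layer moment computation, at which point you have essentially reconstructed the paper's argument.
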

From Eq.~\ref{ee} and Thm.~\ref{thm:dual}, we can derive bounds on the second moment of $\mathcal{G}(x,x;w)$, by observing the moments of $f_{(\bk)}(x;w)$. In addition, Thm.~\ref{thm:dual} also allows us to derive bounds on the convergence rate of $\mathcal{G}(x,x;w)$ to $\mathbb{E}_{w}[\mathcal{G}(x,x;w)] = \mathcal{K}(x,x)$, given by the ratio:
\begin{equation}\label{convergence}
\eta(n,L) := \frac{\mathbb{E}_w[\mathcal{G}(x,x;w)^2]}{\mathbb{E}_{w}[\mathcal{G}(x,x;w)]^2} 
\end{equation}
\begin{figure}[t]
    \centering
    \begin{tabular}{cc}
\includegraphics[trim=0cm 0cm 0cm 0cm, clip, width=.35\linewidth]{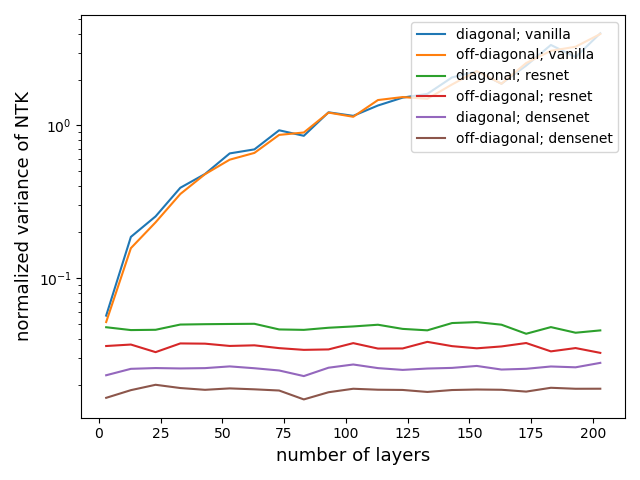}&
\includegraphics[trim=0cm 0cm 0cm 0cm, clip, width=.35\linewidth]{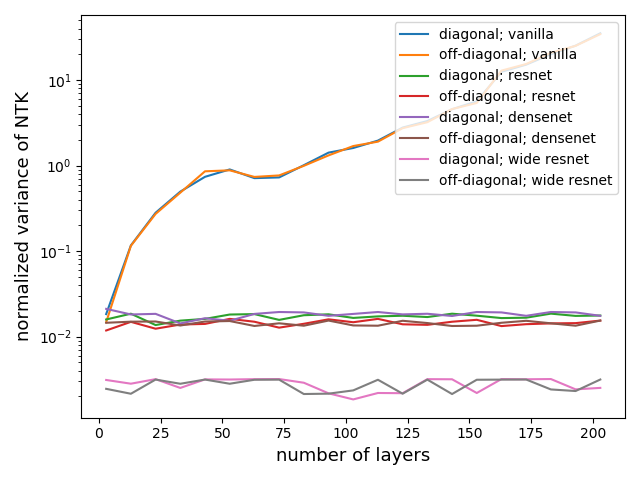}\\
(a) & (b)\\
\end{tabular}
    \caption{{\bf Normalized variance of NTK for various models.} The x-axis stands for the number of layers and the y-axis stands for the values of $\textnormal{V}(\mathcal{G}(x,x';w))$ in log-scale. The diagonal terms specify the value for $x=x'$ and the off-diagonal terms specify the value for $x \neq x'$. {\bf (a)} Results for MLP networks. {\bf (b)} Results for convolutional networks. 
    } 
    \label{fig:variance}
\end{figure}

In general, the tools developed in Thms.~\ref{thm:eq} and Thm.~\ref{thm:dual} can be used for analyzing feedforward networks of any topology. Specifically, in Thms.~\ref{thm:res_ntk_var} and~\ref{thm:dense_ntk_var}, we derive bounds on the asymptotic behavior of $\eta$ for ResNet and DenseNet architectures, with respect to both width and depth. 

\begin{restatable}{theorem}{resNTKvar}\label{thm:res_ntk_var}
Let $f(x;w)$ be a depth $L$, constant width ResNet with residual branches of depth $m$ (Eq.~\ref{constant_dense} with $n_0',n_l,n_{l,h} = n$ for all $l\in [L]$ and $h \in [m]$), with positive initialization constants $\{\alpha_l\}_{l=1}^L$. Then, there exists a constant $C>0$ such that:
\begin{multline}
\max\left[1,\frac{\sum_{\bu} \alpha_{l_u}^2}{\sum_{\bu,\bv} \alpha_{l_u}\alpha_{l_v}} \cdot \xi \right] \leq \eta(n,L) \leq \xi\textnormal{ where: }~ \xi = \exp\left[\frac{5m}{n}+\frac{C}{n}\sum_{l=1}^L \frac{\alpha_l}{1+\alpha_l} \right] \cdot \left(1 + \mathcal{O}(1/n)\right)
\end{multline}
\end{restatable}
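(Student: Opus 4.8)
The strategy is to push everything through the forward--backward duality: rewrite $\eta(n,L)$ purely in terms of the second and fourth moments of the reduced networks $f_{(\bk)}$, observe that for a ResNet each $f_{(\bk)}$ is again a ResNet with exactly one residual block ``forced'' (its identity skip removed), and then control those moments by a forward norm-propagation recursion block by block. Concretely, by Eq.~\ref{ee} and Thm.~\ref{thm:dual}(1), $\mathbb{E}_w[\mathcal{G}(x,x;w)]=\sum_{\bk}\mathbb{E}_w[\|J^{\bk}\|_2^2]=\sum_{\bk}\mathbb{E}_w[(f_{(\bk)}(x;w))^2]$, and $\mathbb{E}_w[\mathcal{G}^2]=\sum_{\bk,\bk'}\mathbb{E}_w[\|J^{\bk}\|_2^2\|J^{\bk'}\|_2^2]$. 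For the upper bound, Cauchy--Schwarz on each cross term followed by Thm.~\ref{thm:dual}(2) gives $\mathbb{E}_w[\mathcal{G}^2]\le\big(\sum_{\bk}\sqrt{\mathbb{E}_w[(f_{(\bk)})^4]}\big)^2$; for the lower bound one keeps the nonnegative diagonal and intra-block terms and applies Thm.~\ref{thm:dual}(2) from below, together with the trivial $\mathbb{E}_w[\mathcal{G}^2]\ge\mathbb{E}_w[\mathcal{G}]^2$ (Jensen). After this reduction the whole problem is to evaluate, for every $\bk$, the pair $\mathbb{E}_w[(f_{(\bk)})^2]$ and the normalized ratio $\rho_{\bk}:=\mathbb{E}_w[(f_{(\bk)})^4]/\mathbb{E}_w[(f_{(\bk)})^2]^2$.

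For the second moment, writing $\mu_l:=\mathbb{E}_w[\|y^l\|_2^2]$, the normalization $q=\sqrt2\,\phi(\cdot)$ together with independence and zero mean of each weight matrix makes all odd cross terms $\mathbb{E}[\langle y^{l-1},y^{l-1,h}\rangle]$ vanish, so a normal residual block satisfies $\mu_l=(1+\alpha_l)\mu_{l-1}$ while the forced block of $f_{(\bk)}$ (say $\bk$ lies in the branch of block $l_k$) satisfies $\mu_{l_k}=\alpha_{l_k}\mu_{l_k-1}$. Hence $\mathbb{E}_w[(f_{(\bk)})^2]=\Theta(1)\cdot\tfrac{\alpha_{l_k}}{1+\alpha_{l_k}}\prod_{l=1}^L(1+\alpha_l)$, the product being common to all $\bk$ and cancelling in $\eta$; summing over $\bk$ gives $\mathbb{E}_w[\mathcal{G}]=\mathcal{K}(x,x)$ in closed form. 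For the fourth moment one tracks $\rho_l:=\mathbb{E}[\|y^l\|_2^4]/\mu_l^2$ through the recursion: expanding $\|y^l\|_2^4=\big(\|y^{l-1}\|_2^2+2\sqrt{\alpha_l}\langle y^{l-1},y^{l-1,m}\rangle+\alpha_l\|y^{l-1,m}\|_2^2\big)^2$, taking expectations, and using conditional Gaussianity of each sub-layer preactivation (contributing a $1+2/n$ factor from the Gaussian kurtosis) plus the fluctuation of $\|\sqrt2\,\phi(\cdot)\|_2^2$, one shows for a normal block $\rho_l\le\rho_{l-1}\big(1+\tfrac{C}{n}\tfrac{\alpha_l}{1+\alpha_l}+\mathcal{O}(n^{-2})\big)$ --- the crucial point being that the multiplicative increment is suppressed by $\tfrac{\alpha_l}{1+\alpha_l}\le1$, which is precisely what defeats the vanilla $\exp(CL/n)$ growth --- whereas the single forced block, a genuine depth-$m$ ReLU chain, contributes at most $\prod_{h=1}^m(1+\tfrac{5}{n})\le e^{5m/n}$. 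Composing the increments along all $L$ blocks and the forced branch yields $\rho_{\bk}\le\exp\!\big[\tfrac{5m}{n}+\tfrac{C}{n}\sum_{l=1}^L\tfrac{\alpha_l}{1+\alpha_l}\big](1+\mathcal{O}(1/n))=\xi$ uniformly in $\bk$, and keeping leading terms gives the matching $\rho_{\bk}\ge\xi(1-\mathcal{O}(1/n))$.

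Assembling: for the upper bound, $\eta\le\big(\sum_{\bk}\sqrt{\mathbb{E}_w[(f_{(\bk)})^4]}\big)^2\big/\big(\sum_{\bk}\mathbb{E}_w[(f_{(\bk)})^2]\big)^2=\big(\sum_{\bk}w_{\bk}\sqrt{\rho_{\bk}}\big)^2$ where $w_{\bk}:=\mathbb{E}_w[(f_{(\bk)})^2]/\sum_{\bk'}\mathbb{E}_w[(f_{(\bk')})^2]$ is a probability weight, so $\eta\le\max_{\bk}\rho_{\bk}\le\xi$. For the lower bound, retain the intra-block cross terms of $\mathbb{E}_w[\mathcal{G}^2]$, whose contribution is controlled from below by $\sum_{\bk}\rho_{\bk}\mathbb{E}_w[(f_{(\bk)})^2]^2$ up to absolute constants; dividing by $\mathbb{E}_w[\mathcal{G}]^2=\big(\sum_{\bk}\mathbb{E}_w[(f_{(\bk)})^2]\big)^2$, substituting $\mathbb{E}_w[(f_{(\bk)})^2]\propto\alpha_{l_k}$ (the $(1+\alpha_{l_k})^{-1}$ slack absorbed into $C$), and using $\rho_{\bk}\ge\xi(1-\mathcal{O}(1/n))$ turns the ratio of sums into $\tfrac{\sum_{\bu}\alpha_{l_u}^2}{\sum_{\bu,\bv}\alpha_{l_u}\alpha_{l_v}}\cdot\xi$; combining with $\eta\ge1$ gives the claimed $\max[\,\cdot\,,\,\cdot\,]$.

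I expect the fourth-moment recursion to be the main obstacle: the forward vector norm $\|y^{l-1}\|_2^2$ and the residual-branch output $\|y^{l-1,m}\|_2^2$ are \emph{not} independent, so squaring $\|y^l\|_2^2$ produces mixed moments (e.g.\ $\mathbb{E}[\|y^{l-1}\|_2^2\langle y^{l-1},y^{l-1,m}\rangle]$ and $\mathbb{E}[\langle y^{l-1},y^{l-1,m}\rangle^2]$) that must each be bounded carefully, and one must verify that the per-block increment to $\log\rho_l$ is genuinely $O(\tfrac1n\tfrac{\alpha_l}{1+\alpha_l})$ rather than $O(1/n)$ --- this is exactly what separates ResNets from vanilla networks and where the $\tfrac{\alpha_l}{1+\alpha_l}$ weighting inside $\xi$ is earned. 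Pinning the constant $5$ for the forced branch requires the exact Gaussian fourth-moment bookkeeping of a single ReLU-plus-linear layer. Everything else --- the Cauchy--Schwarz step, the convex-combination argument for the upper bound, and the linear second-moment recursion --- is routine once Thms.~\ref{thm:eq}--\ref{thm:dual} are available.
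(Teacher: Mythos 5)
Your proposal follows essentially the same route as the paper's proof: the reduction of $\eta$ to moments of the reduced networks via Thm.~\ref{thm:dual}, Cauchy--Schwarz for the upper bound and the diagonal terms plus Jensen for the lower bound, the second-moment recursion $\mu_l=(1+\alpha_l)\mu_{l-1}$ with the forced block contributing $\alpha_{l_k}$, and the fourth-moment recursion whose per-block increment is suppressed by $\alpha_l/(1+\alpha_l)$ while the forced depth-$m$ branch contributes the $(1+5/n)^m$ prefactor. The only differences are cosmetic (your normalized-ratio bookkeeping $\rho_{\bk}$ and the convex-combination framing of the upper bound versus the paper's direct cancellation of the $\prod_l(1+\rho\alpha_l)$ products), and the technical obstacle you flag --- the mixed moments $\mathbb{E}[\langle y^{l-1},y^{l-1,m}\rangle^2]$ and $\mathbb{E}[\|y^{l-1,m}\|_2^2\|y^{l-1}\|_2^2]$ --- is exactly what the paper resolves term by term via Props.~\ref{p4} and~\ref{p5}.
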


From the result of Thm.~\ref{thm:res_ntk_var}, it is evident that the convergence rate is exponential in $\frac{m}{n}+\frac{1}{n}\sum_{l=1}^L \alpha_l$. This result supports the selection of a small $m$, as reflected in the common practice to have a small depth for the residual branches. In addition, when setting $\{\alpha_l\}_{l=1}^L$, such that, $\frac{1}{n}\sum_{l=1}^L \alpha_l$ vanishes as $n$ tends to infinity, ensures the convergence of $\eta$ to 1, regardless of depth. Note that by selecting $\{\alpha_l\}_{l=1}^L$, such that, $\sum_{l=1}^L\alpha_l \sim \mathcal{O}(1)$ is sufficient (although not necessary), and was also suggested in~\cite{fixup} as a way to train ResNets without batchnorm~\cite{10.5555/3045118.3045167}. Our results, however, reveal a much stronger implication of this initialization, as it also bounds the fluctuations of the squared Jacobian norm, implying a closer relationship with the ``kernel regime'' at the initialization of deep ResNets. From Thm.~\ref{thm:res_ntk_var}, we conclude that a proper initialization plays a crucial role in determining the asymptotic behavior of $\eta$ in deep ResNets. Surprisingly, this relationship between initialization and $\eta$ breaks down, when considering DenseNets, as illustrated in the following theorem.

\begin{restatable}{theorem}{denseNTKvar}\label{thm:dense_ntk_var}
Let $f(x;w)$ be a constant width DenseNet (Eq.~\ref{constant_dense} with $n_0',n_l = n$ for all $l\in [L]$), with initialization constant $\alpha>0$. Then, there exist constants $C_1,C_2>0$, such that: 
\begin{equation}\label{surprise}
\max\left[1,\frac{C_1}{L\log(L)^2}\cdot \xi\right] \leq \eta(n,L) \leq \xi~\textnormal{ where: }~ \xi = \exp\left[C_2/n\right]\cdot \left(1 + \mathcal{O}(1/n)\right)
\end{equation}
\end{restatable}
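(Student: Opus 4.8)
The plan is to start from $\eta(n,L)=1+\mathrm{Var}_w\big(\mathcal{G}(x,x;w)\big)\big/\mathbb{E}_w[\mathcal{G}(x,x;w)]^2$ together with the expansion $\mathcal{G}(x,x;w)=\sum_{\bk}\|J^{\bk}(x)\|_2^2$ over the weight matrices of the DenseNet ($W^0$, $W^L$, and $W^{l,h}$ for $l\in[L]$, $0\le h\le l-1$). By Thm.~\ref{thm:dual}, part~1, $\mathbb{E}_w\|J^{\bk}\|_2^2=\mathbb{E}_w[(f_{(\bk)}(x;w))^2]$, which turns the means into a forward-only computation on the reduced networks. For $\bk=(l,h)$ the reduced DenseNet $f_{(\bk)}$ is a DenseNet with a single ``pinch'': an ordinary DenseNet on layers $0,\dots,h$, then the one linear map $y^l=\sqrt{\alpha/(nl)}\,W^{l,h}q^h$, then a DenseNet rooted at layer $l$ on layers $l,\dots,L$, followed by the output projection (and $f_{(0)}=f_{(L)}=f$). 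I would propagate squared norms through any such network, using $\mathbb{E}[\|\phi(y)\|_2^2]=\tfrac12\mathbb{E}[\|y\|_2^2]$ and the Gaussianity of each $W^{l,h}$, to get recursions of the type $v_l=\tfrac{\alpha}{l}\sum_{h<l}v_h$ for $v_l:=\mathbb{E}_w\|y^l\|_2^2$ --- and here the $1/l$ normalization already makes itself felt, since the solution grows only polynomially in $l$. This yields closed forms for each $\mathbb{E}_w[(f_{(\bk)})^2]$ and hence for $\mathbb{E}_w[\mathcal{G}]=\sum_{\bk}\mathbb{E}_w[(f_{(\bk)})^2]$, and it records how these means are distributed over $\bk$, which the lower bound needs.

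For the upper bound, the target is the uniform estimate $\mathrm{Var}_w\|J^{\bk}\|_2^2\le (C/n)\,\big(\mathbb{E}_w\|J^{\bk}\|_2^2\big)^2$ with $C$ depending on $\alpha$ but not on $L$ or $\bk$. Granting this, Cauchy--Schwarz bounds every covariance, $\mathrm{Cov}_w\big(\|J^{\bk}\|_2^2,\|J^{\bk'}\|_2^2\big)\le (C/n)\,\mathbb{E}_w\|J^{\bk}\|_2^2\,\mathbb{E}_w\|J^{\bk'}\|_2^2$, and summing over all pairs gives $\mathrm{Var}_w(\mathcal{G})\big/\mathbb{E}_w[\mathcal{G}]^2\le C/n$, i.e.\ $\eta(n,L)\le e^{C/n}\big(1+\mathcal{O}(1/n)\big)=\xi$. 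To get the per-weight variance bound I would use the back-propagation factorization $\|J^{(l,h)}\|_2^2=\tfrac{\alpha}{nl}\,\|\delta^l\|_2^2\,\|q^h\|_2^2$ with $\delta^l:=\nabla_{y^l}f$, and run two layerwise moment recursions in opposite directions: a forward one showing $\mathrm{Var}_w\|q^h\|_2^2=\mathcal{O}(v_h^2/n)$ uniformly in $h$, and a backward one showing $\mathrm{Var}_w\|\delta^l\|_2^2=\mathcal{O}(d_l^2/n)$ uniformly in $l$, where $d_l:=\mathbb{E}_w\|\delta^l\|_2^2$ satisfies the dual recursion $d_h=\alpha\sum_{l>h}d_l/l$. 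The point is that, conditioned on the previous (resp.\ subsequent) layers, $\|y^l\|_2^2$ (resp.\ $\|\delta^l\|_2^2$) is a $\chi^2_n$-type variable with relative variance $2/n$, while the fluctuation of its conditional mean is a \emph{normalized average} of $l$ (resp.\ $L-l$) correlated layer norms; since the mixing weights scale like $\alpha/l$, the induced recursion on the normalized variances is a contraction rather than an accumulation, so the per-layer $\mathcal{O}(1/n)$ increments do not build up with depth --- in contrast to vanilla networks, where they compound to $e^{CL/n}$, and to ResNets, where (Thm.~\ref{thm:res_ntk_var}) they compound to $e^{(C/n)\sum_l \alpha_l/(1+\alpha_l)}$. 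The genuine subtlety is that each $W^{l,h}$ enters the backward factor through $y^l$ and hence correlates, via earlier layers, with $\|q^h\|_2^2$; I would handle this either by the standard layer-by-layer Gaussian-conditioning argument, or, where cleaner, by invoking Thms.~\ref{thm:eq} and~\ref{thm:dual} (and their mixed-moment analogues) to re-express the relevant quantities as forward moments of reduced networks.

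For the lower bound, $\eta(n,L)\ge1$ is Jensen's inequality. For the additional term I would keep only the diagonal of $\mathbb{E}_w[\mathcal{G}^2]=\sum_{\bk,\bk'}\mathbb{E}_w[\|J^{\bk}\|_2^2\|J^{\bk'}\|_2^2]$ and use Thm.~\ref{thm:dual}, part~2, together with the elementary inequality $\mathbb{E}_w[(f_{(\bk)})^4]\ge 3\,\mathbb{E}_w[(f_{(\bk)})^2]^2$ (conditional Gaussianity of $f_{(\bk)}=\tfrac1{\sqrt n}W^L y^L_{(\bk)}$ given $y^L_{(\bk)}$), obtaining $\mathbb{E}_w[\mathcal{G}^2]\ge\sum_{\bk}\mathbb{E}_w[(f_{(\bk)})^2]^2$ and hence $\eta(n,L)\ge\big(\sum_{\bk}\mathbb{E}_w[(f_{(\bk)})^2]^2\big)\big/\big(\sum_{\bk}\mathbb{E}_w[(f_{(\bk)})^2]\big)^2$. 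Plugging in the closed forms from the first step (restricting the numerator to a suitable sub-collection of weight matrices to get the cleanest constant), the harmonic-type sums produced by the DenseNet recursions give this ratio as $\Omega\big(1/(L\log(L)^2)\big)$, which with $\eta\ge1$ yields the stated $\max\big[1,\,C_1\xi/(L\log(L)^2)\big]$.

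The hard part will be the uniform-in-depth per-weight variance bound: showing that the layerwise moment recursions induced by the DenseNet connectivity genuinely contract at a rate that does not degrade as $L\to\infty$, while keeping every constant free of $L$. The delicate regime is the first $\mathcal{O}(\alpha)$ layers, where the mixing weight $\alpha/l$ is $\Theta(1)$ rather than small, so a bounded amount of accumulation does occur and must be absorbed into $C_2$; past that point the recursion is a true contraction and depth plays no role --- which is exactly the depth-invariance the theorem asserts, and what distinguishes DenseNets from ResNets, where no such averaging is present.
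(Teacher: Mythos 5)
Your skeleton coincides with the paper's: reduce the means to forward second moments of the reduced networks via Thm.~\ref{thm:dual}, solve the recursion $\mu_l=\tfrac{\alpha}{l}\sum_{h<l}\mu_h$ to get $\mathbb{E}[\mathcal{G}]^2\sim\alpha^2\log(L)^2\prod_l(1+\tfrac{\alpha-1}{l})^2$, bound $\mathbb{E}[\mathcal{G}^2]$ from above by Cauchy--Schwarz over pairs of weight matrices and from below by the diagonal terms together with part~2 of Thm.~\ref{thm:dual}; your lower-bound computation (harmonic sums, $\Omega(1/(L\log(L)^2))$) is exactly the paper's. Where you diverge is the mechanism for the crucial uniform-in-depth control of the fourth moments. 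The paper never leaves the forward pass: it applies Thm.~\ref{thm:dual} to replace $\mathbb{E}\|J^{\bk}\|_2^4$ by $\mathbb{E}[f_{(\bk)}^4]$ and then derives a \emph{closed} recursion for the mixed second moments $C_{l,l'}=\mathbb{E}[\|q^l\|_2^2\|q^{l'}\|_2^2]$ of the reduced network, using the identity $C_{l_1,l_2}=\tfrac{\alpha}{l_1}\sum_{l<l_1}C_{l,l_2}$ to eliminate the cross terms and obtain $C_{L,L}=C_{L-1,L-1}\bigl((1+\tfrac{\alpha-1}{L})^2+\tfrac{5\alpha^2}{nL^2}\bigr)$; depth-invariance is then just the convergence of $\sum_l l^{-2}$. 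Your primary route --- the factorization $\|J^{(l,h)}\|_2^2=\tfrac{\alpha}{nl}\|\delta^l\|_2^2\|q^h\|_2^2$ with a backward moment recursion for $\|\delta^l\|_2^2$ --- is riskier: the backward pass reuses the forward weights, so the ``conditionally $\chi^2_n$'' picture you invoke is not available for $\delta^l$ without the gradient-independence heuristic, and the forward--backward correlation you flag is precisely the obstruction that the duality theorems were built to avoid. Your stated fallback (re-expressing everything as forward moments of reduced networks) is the paper's actual method, so the plan closes. One quantitative caveat: the per-matrix bound $\mathrm{Var}\,\|J^{\bk}\|_2^2\le(C/n)\bigl(\mathbb{E}\|J^{\bk}\|_2^2\bigr)^2$ cannot be extracted from Thm.~\ref{thm:dual} alone, since the sandwich $\tfrac13\mathbb{E}[f_{(\bk)}^4]\le\mathbb{E}\|J^{\bk}\|_2^4\le\mathbb{E}[f_{(\bk)}^4]$ has factor-$3$ slack and $\mathbb{E}[f_{(\bk)}^4]\approx3\,\mathbb{E}[f_{(\bk)}^2]^2$; the paper sidesteps this by bounding the pairwise products $\mathbb{E}[\|J^{\bu}\|_2^2\|J^{\bv}\|_2^2]$ directly through $\sqrt{\mathbb{E}[f_{(\bu)}^4]\,\mathbb{E}[f_{(\bv)}^4]}$ and taking the ratio against $\mathbb{E}[\mathcal{G}]^2$ at the level of the hidden-layer norms, where the relative fourth moment is genuinely $1+\mathcal{O}(1/n)$. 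If you follow the duality route you should organize the upper bound the same way rather than through per-matrix variances.
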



Surprisingly, the depth parameter $L$, as well as the initialization scale $\alpha$, are absent in the upper bound of Eq.~\ref{surprise}, revealing a depth and scale-invariant property unique to DenseNets. In other words, the convergence rate of $\eta$ to 1 is exponential in $\frac{C_2}{n}$, and does not depend on depth, or the scaling coefficient of the weights. This property represents a fundamental unique aspect of DenseNets, which might explain practical advantages observed in models incorporating dense residual connections. It is important to stress that it is impossible to replicate the guarantees presented in Thms.~\ref{thm:res_ntk_var} and~\ref{thm:dense_ntk_var} by simply normalizing the network in a different manner. That is because, the expression $\eta(n,L)$ is invariant to the scale of the weights, i.e., its value does not change when multiplying $f(x;w)$ by a constant. Therefore, maintaining a bounded normalized variance of the NTK of a $L$-depth network, comes at the cost of a different parameter complexity for each architecture. This is formulated in the following remark.  

\begin{remark} For DenseNets and ResNets (with $\alpha_l=1/L$ and $m=2$), it is possible to choose a constant width $n = \mathcal{O}(1)$ (independent of $L$) while maintaining a bounded NTK variance. In this case, the overall number of parameters in DenseNets is $\mathcal{O}(L^2)$. On the other hand, in ResNets, the overall number of parameters is $\mathcal{O}(L)$, as each one of its $L$ layers contributes a constant number of parameters $2n^2 = \mathcal{O}(1)$. However, in vanilla models, it is required that the width $n$ grow linearly with depth in order to maintain a bounded variance. Therefore, each layer contributes $\Omega(L^2)$ parameters, and the overall number of parameters is $\Omega(L^3)$. The added efficiency is the product of an inherent architectural advantage brought forth by the DenseNet architecture.
\end{remark}


\section{Experiments} 

To validate our theoretical observations, we conducted a series of experiments using the MNIST and 43 small UCI datasets (see Tab.~1 in Sec.~2 of the supplementary material for the list). Throughout the experiments, we used both fully connected architectures and convolutional architectures. For details, see Sec.~1 in the supplementary material.

\subsection{Normalized Variance of NTK} We conducted an experiment for estimating the normalized variance of the NTK, i.e.,
\begin{equation}\label{eq:normalizedvar}
\begin{aligned}
\textnormal{V}(\mathcal{G}(x,x';w)) 
=& \frac{Var\big(\mathcal{G}(x,x';w)\big)}{\mathbb{E}_w\left[\mathcal{G}(x,x';w) \right]^2}  = \frac{\mathbb{E}_w[\mathcal{G}(x,x';w)^2]}{\mathbb{E}_w\left[\mathcal{G}(x,x';w) \right]^2}-1
\end{aligned}
\end{equation}
For each model (e.g., vanilla network, ResNet, DenseNet), we fixed the width to be $n=500$, varied the number of layers and for each depth, we estimated the value in Eq.~\ref{eq:normalizedvar} for $x=x'$ and for $x\neq x'$. In order to estimate these terms, we sampled 5000 different vectors $w$ for $f(x;w)$ from a standard normal distribution and estimated $\textnormal{V}(\mathcal{G}(x,x;w))$ and $\textnormal{V}(\mathcal{G}(x,x';w))$ empirically. The inputs $x = \hat{x}/\|\hat{x}\|_2$ and $x_2 = \hat{x}'/\|\hat{x}'\|_2$ are two vectors, such that, each coordinate of $\hat{x}$ is distributed according to $\mathcal{N}(0.5,1)$ and each coordinate of $\hat{x}'$ is distributed according to $\mathcal{N}(-0.5,1)$.

In Fig.~\ref{fig:variance} we plot the normalized variance of the diagonal and off-diagonal elements of the kernel as a function of the number of layers for the various architectures. The results are plotted in log-scale. 
As can be seen, 
the diagonal and off-diagonal elements of the kernel are highly correlated for all architectures. In addition, for residual and dense architectures, the normalized variance of the NTK is relatively constant when varying the number of layers, while for vanilla networks, the normalized variance of the NTK grows exponentially.

\subsection{Kernel Regression over Random Gradient Features}\label{sec:regression}

We conducted various experiments to compare the ability of the gradients $\nabla_w f(x;w)$ of each architecture to serve as random features for kernel regression. The process is as follows: for a given network $f(x;w)$ we sampled $w_1,\dots,w_T$ at random from a standard normal distribution and used $\nabla_{w_i} f(x;w_i)$ as our random features.
In addition, the labels are being cast into one-hot vectors corresponding to their discrete values in $[k]$. 
To solve the kernel regression task, we employed the closed form solution:
\begin{equation}
g(x;w) := (\mathcal{G}_T(x,x_1),\dots,\mathcal{G}_T(x,x_m)) \cdot  H_T^{-1} \cdot Y
\end{equation}
where $\mathcal{G}_T(x,x') = \frac{1}{T}\sum^{T}_{i=1} \mathcal{G}(x,x';w_i)$, $H_k = (\mathcal{G}_T(x_i,x_j))_{i,j \in [m]} \in \mathbb{R}^{m \times m}$ and $Y \in \mathbb{R}^{m \times k}$ is a matrix whose $i$'th row is $y_i$. 


\paragraph{Experiments on MNIST} 

In this set of experiments, training was done over $2000$ MNIST training samples, where each train/test sample is normalized to have norm 1. The reported results are the average accuracy rates over 20 samples of $w$ and the error bars are the corresponding standard deviations.  
In Fig.~\ref{fig:var_depth}(a-c) we report the expected accuracy rates of $g(x;w)$ on the test set, when varying the number of layers of $f(x;w)$, while fixing the width to be $n \in \{50,100,500\}$ and $T=1$. The performances of the infinite width limit kernels of vanilla networks, ResNets and DenseNets are plotted as well, under the names, `vanilla kernel', `resnet kernel' and `densenet kernel' respectively. In Fig.~\ref{fig:var_depth}(d-f) we report the same results, when he width is $n \in \{2,50,100\}$ and $T=30$. As can be seen, when fixing the width of the network, increasing the depth of a vanilla network is adverse to the performance of the kernel regression. However, this is not the case of ResNets and DenseNets. In addition, the results of performing kernel regression with the NTKs are comparable to the results of their corresponding infinite width limit kernels.

In Fig.~\ref{fig:var_width}, we report the effect of varying the width when fixing the depth and $T=1$. As can be seen, the performance of a standard network is significantly inferior to the performances of the kernel regressions corresponding to ResNets and DenseNets when the number of layers is larger then 4. Even though, the performance of each architecture improves when increasing the width, standard neural networks are required to be much wider, in order to achieve the same degree of success as ResNets and DenseNets. 


\paragraph{Experiments on UCI} We also compared the performance of kernel regression over 43 small UCI datasets (see list in Tab.~1 in the supplementary material). We note that the performance of the various methods vary from one dataset to another as a result of dataset complexity, number of classes, etc'. Therefore, in order to average the results over the various datasets, instead of reporting the absolute accuracy rates, we report the relative accuracy rates with respect to the accuracy rate of a three layered network (i.e., the accuracy rate divided by the accuracy rate obtained with three layers). 
For each fully connected architecture, we compared the relative accuracy rates for widths 10, 100, 500, when varying the number of layers. The relative accuracy rates are averaged over the 43 datasets. In addition, the accuracy rate on each dataset is averaged for 20 samples of $w$. 
{The results in Fig.~\ref{fig:uci} show that the performance of kernel regression for ResNet and DenseNet architectures do not degrade as a result of increasing the number of layers. In fact, the results improve when increasing the number of layers for DenseNets and DenseNets of widths 100 (about 4-5\% improvement). In contrast, for vanilla networks, increasing the number of layers harms the performance. It is evident that when increasing the width of the vanilla network, the kernel regression performance becomes more stable but still degrades when increasing the number of layers.}
Since Fig.~\ref{fig:uci} does not compare the performance of the various architectures, rather it compares its stability, for completeness, in Tab.~1 in the supplementary material we report the absolute accuracy rates of the various architectures with three layers and widths 10, 100 and 500. As can be seen, the different models achieve very similar results on all dataset.

\begin{figure}[t]
    \centering
    \begin{tabular}{ccc}
\includegraphics[trim=0cm 0cm 0cm 0cm, clip, width=.25\linewidth]{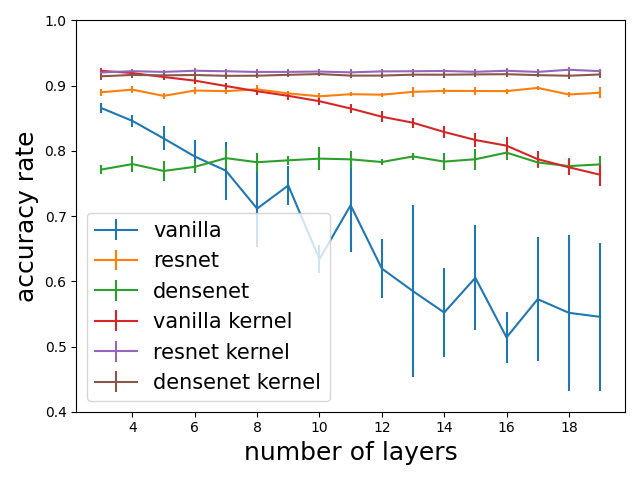}&
\includegraphics[trim=0cm 0cm 0cm 0cm, clip, width=.25\linewidth]{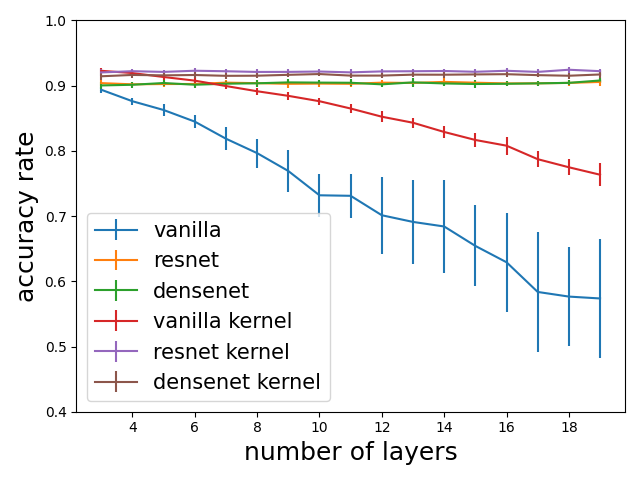}&
\includegraphics[trim=0cm 0cm 0cm 0cm, clip, width=.25\linewidth]{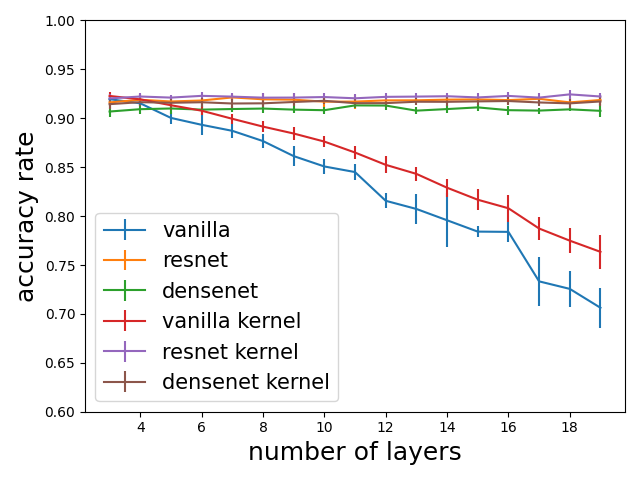}\\
(a) & (b) & (c)\\
\includegraphics[trim=0cm 0cm 0cm 0cm, clip, width=.25\linewidth]{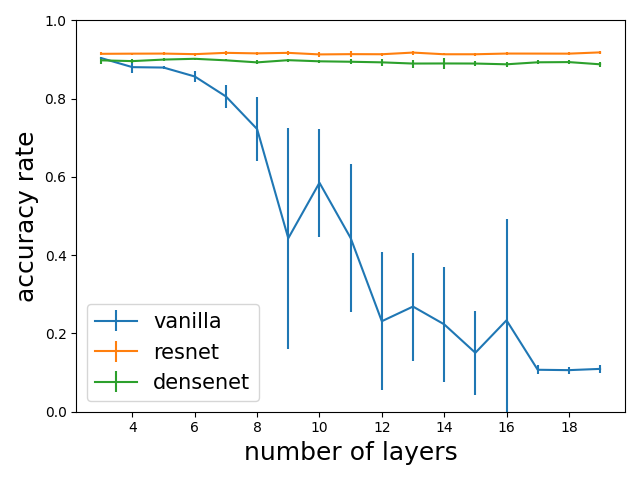}&
\includegraphics[trim=0cm 0cm 0cm 0cm, clip, width=.25\linewidth]{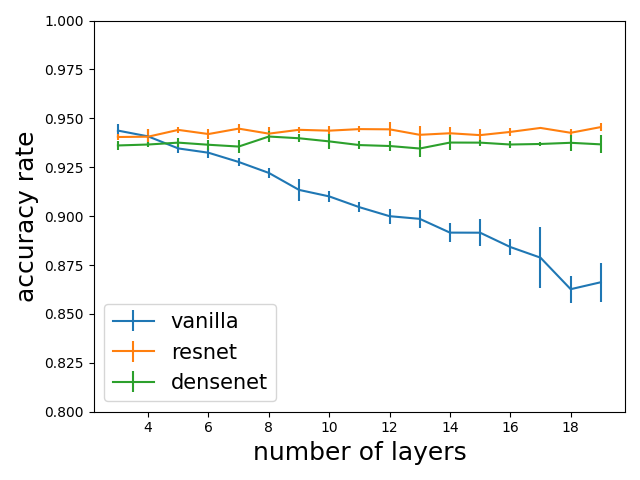}&
\includegraphics[trim=0cm 0cm 0cm 0cm, clip, width=.25\linewidth]{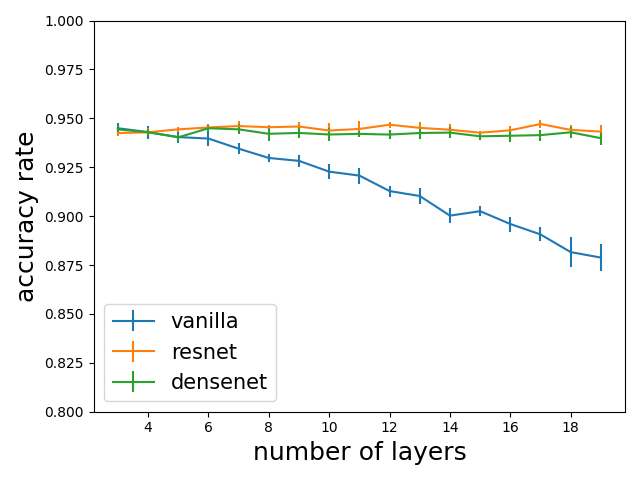}\\
(d) & (e) & (f)\\
\end{tabular}
    \caption{ {\bf Results on MNIST for kernel regression over random gradient features.}  Plotted are the averaged accuracy rates, when varying the number of layers. In the first row, $T=1$ and the width of $f(x;w)$ is either {\bf (a)} $50$ {\bf (b)} $100$ or {\bf (c)} $500$. `vanilla kernel', `Resnet kernel' and `densenet kernel' stand for the results of the infinite width limit kernels of vanilla networks, ResNets and DenseNets (resp.). In the second row, $T=30$ and the width of $f(x;w)$ is either {\bf (d)} $2$ {\bf (e)} $10$ or {\bf (f)} $100$.} 
    \label{fig:var_depth}
\end{figure}


\begin{figure}[t]
    \centering
    \begin{tabular}{@{}c@{~~}c@{~~}c@{}}
\includegraphics[trim=0cm 0cm 0cm 0cm, clip, width=.25\linewidth]{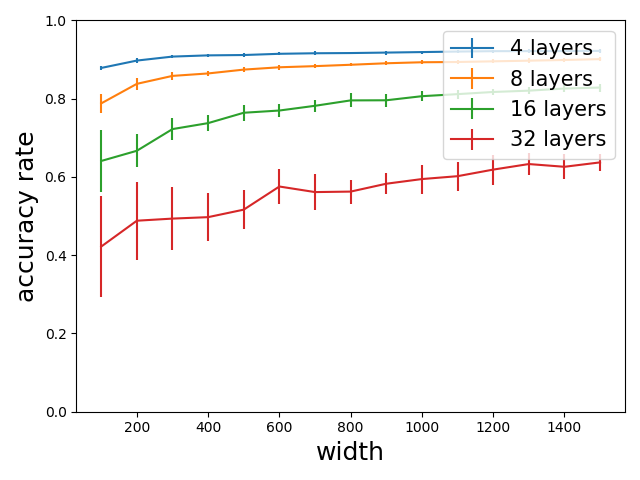}&
\includegraphics[trim=0cm 0cm 0cm 0cm, clip, width=.25\linewidth]{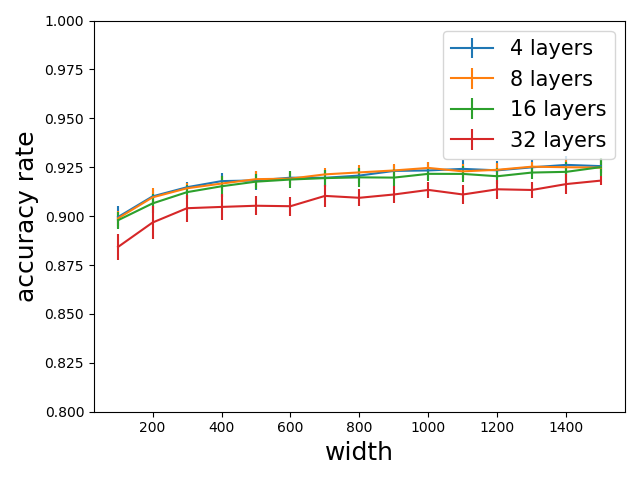}&
\includegraphics[trim=0cm 0cm 0cm 0cm, clip, width=.25\linewidth]{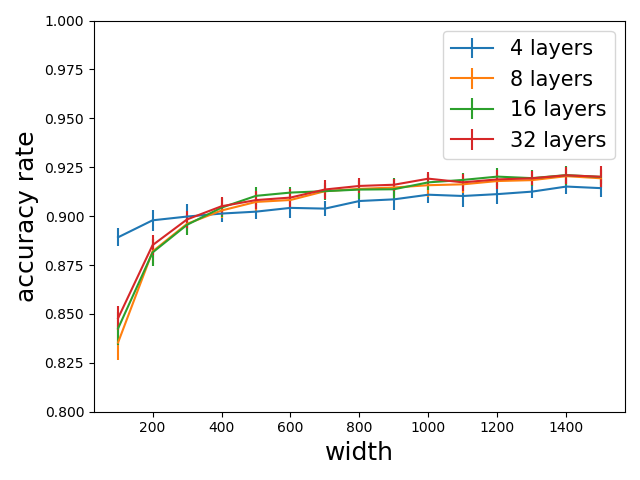}\\
(a) & (b) & (c)\\
\end{tabular}
    \caption{ {\bf Results on MNIST for kernel regression over random gradient features.} Plotted are the averaged accuracy rates, when varying the width. {\bf (a)} Results of vanilla networks, {\bf (b)} Results of ResNets, {\bf (c)} Results of DenseNets.} 
    \label{fig:var_width}
\smallskip
    \centering
    \begin{tabular}{@{}c@{~~}c@{~~}c@{}}
\includegraphics[trim=0cm 0cm 0cm 0cm, clip, width=.25\linewidth]{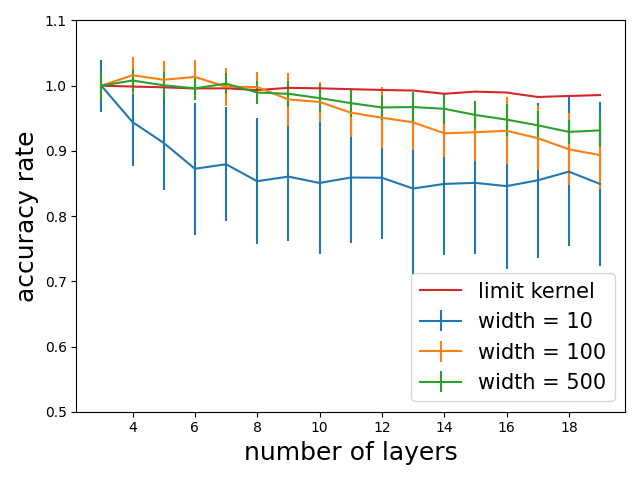}&
\includegraphics[trim=0cm 0cm 0cm 0cm, clip, width=.25\linewidth]{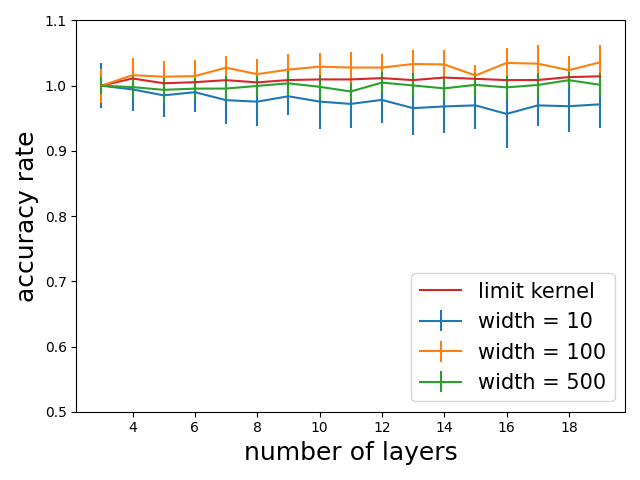}&
\includegraphics[trim=0cm 0cm 0cm 0cm, clip, width=.25\linewidth]{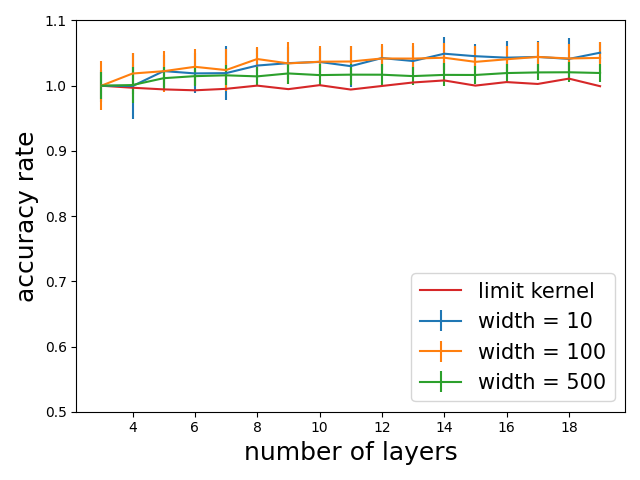}\\
(a) & (b) & (c)\\
\end{tabular}
    \caption{{ {\bf Results on UCI for kernel regression over random gradient features.} Plotted are the averaged relative accuracy rates as a function of the number of layers. {\bf (a)} Results of vanilla networks, {\bf (b)} Results of ResNets and {\bf (c)} Results of DenseNets.} }
    \label{fig:uci}
\end{figure}


\section{Related Work}

The study of infinitely wide neural networks has been in the forefront of theoretical deep learning research in the last few years. A number of papers~\cite{CNTK,NIPS2019_9063,exact} have followed up on the original NTK work~\cite{NTK}. 
An extention of the GP and NTK results is given in~\cite{tensor}, where it is shown that neural networks of any architecture (including weight-tied ResNets, DenseNets, or RNNs) converge to GPs in the infinite width limit, and prove the existence of the infinite width NTKs. In~\cite{NIPS2019_9063}, corrections to the NTK are derived to bound the change of the NTK during training, which applies for both the diagonal and off-diagonal entries of the NTK. However, depth is treated as a constant, and therefore their result only apply for shallow networks.
An interesting problem is to quantify the convergence rate of the NTK to its limit. Feynman diagrams were used to provide finite width corrections to the NTK~\cite{Dyer2020Asymptotics}.  However, the analysis relies on a conjecture, and does not hold for residual architectures. What is most related to our results are the finite width corrections to the NTK for vanilla networks, introduced in~\cite{finite_ntk}. These results depend on the depth of the network. However, their analysis does not apply to residual architectures. In contrast, in our Thms.~\ref{thm:eq} and~\ref{thm:dual}, we establish a duality that exists between forward and backward statistics, which allows considering only forward statistics, and can be readily applied for most fully connected architectures, with arbitrary topologies.
In~\cite{boris1} they tackle two failure modes that are caused in finite size networks by exponential explosion or decay of the norm of intermediate layers. It is shown that for random fully connected vanilla ReLU networks, the variance of the squared norm of the activations exponentially increases, even when initializing with the $\frac{2}{fan-in}$ initialization. For ResNets, this failure mode can be overcome by correctly rescaling the residual branches. However, it is not clear how such a rescaling affects the back propagation of gradients.


\section{Conclusions}
The Neural Tangent Kernel has provided new insights into the training dynamics of wide neural networks, as well as their generalization properties, by linking them to kernel methods. 
In this work, using a duality principle between forward and backward norm propagation, we have derived finite width and depth corrections for ResNet and DenseNet architectures, and have shown convergence properties of deep residual models that are absent in the vanilla fully connected architectures. Our results shed new light on the effect of residual connections on the training dynamics of practically sized networks, suggesting that that models incorporating residual connections operate much closer to the ``kernel regime'' approximation than vanilla architectures, even at large depths.

\clearpage


\bibliography{refs}
\bibliographystyle{plain}


\newpage
\section*{Appendix}
\section{Architectures}\label{app:arch}

\paragraph{Fully connected networks} Each fully connected architecture consist of $L$ layers, where the first layer is a standard fully connected layer from input dimension to $n$, followed by $L-2$ hidden layers of width $n$ and ends with a standard fully connected layer with a single output. Each hidden layer is a standard fully connected layer, a fully connected residual block or a fully connected dense layer, depending on the architecture at hand. Throughout the experiments, each residual block is of depth $2$. 

\paragraph{Convolutional networks} For the convolutional architectures, instead of fully connected layers, we used convolutional layers with a kernel size 3, stride 1 and padding 1. The number of channels of each layer is treated as its width. In all architectures, the first layer is a convolutional layer with three input channels and $n$ output channels. For the vanilla network, each hidden layer consists of a convolutional layer with $n$ input and output channels. For the residual network architecture, each residual block consists of two convolutional layers with $n$ input and output channels. For the DenseNet architecture, the $(i+1)$'th layer consists of a convolutional layer with $i \cdot n$ input channels and $n$ output channels. The input to this layer is the concatenation of the previous $i$ hidden layers along the channels dimension. The wide ResNet architecture was taken from the official pyTorch implementation of~\cite{BMVC2016_87}. Each residual block is of kernel size $n$ and the base width is $n$ as well. The last layer in all architectures is a fully connected layer that returns a single output.

Each convolutional layer is followed by a $\frac{1}{\sqrt{9 in\_c}}$ normalization and a ReLU activation, where, $in\_c$ is the number of input channels of the corresponding layer. For instance, in the DenseNet architecture, the $(i+1)$'th layer has $in\_c = i \cdot n$ input channels. 

Unless mentioned otherwise, for the ResNet architectures we used scaling coefficients $\alpha_1=\dots=\alpha_L = 0.1/L$ and for the DenseNet architectures we used $\alpha=0.5$.

\section{Additional Experiments}\label{app:exp}

 \begin{figure*}[t]
\centering
\begin{tabular}{cc}

\includegraphics[trim=0.5cm 0.05cm -0.5cm 0.0cm,width=.47 \linewidth]{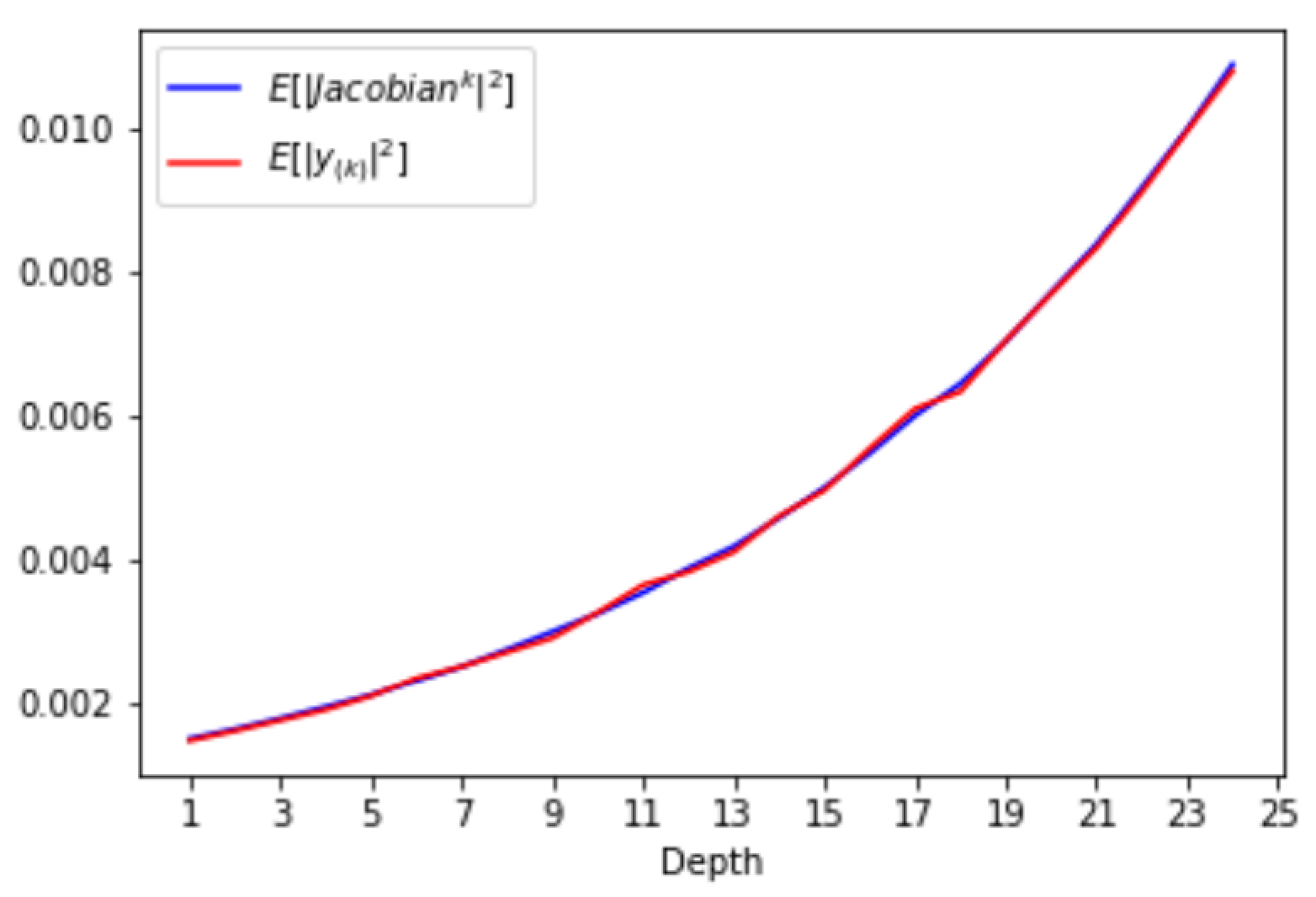}&
\includegraphics[trim=0.0cm 0.3cm -0.22cm 0.0cm,width=.47 \linewidth]{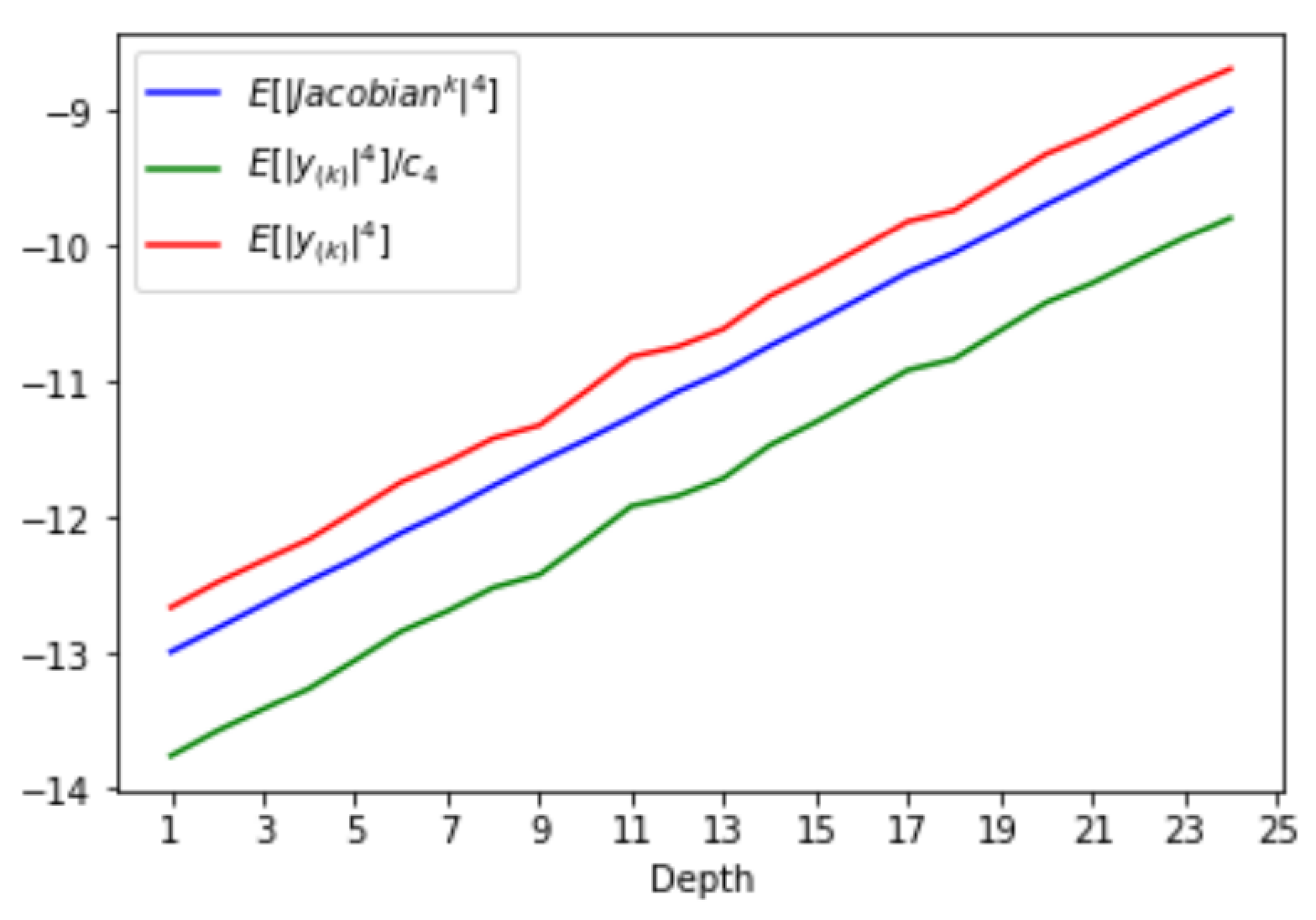}\\
(a) & (b)\\
\end{tabular}
  \caption{ The second (a) and fourth (b) moments, in log scale, of the per layer Jacobian norm $\|J^{\bk}\|_2$ and the squared norm of the output of the corresponding reduced architecture $\|f_{(\bk)}(x;w)\|_2$.   } 
  \label{fig:blah}
\end{figure*}

\noindent{\bf Validating Thm.~\ref{thm:dual}\quad} We conducted an experiment for validating Thm.~\ref{thm:dual}. For this purpose, we estimated the second and fourth moments of the per-layer Jacobian $\|J^{\bk}\|_2$ and the squared norm of the output of the corresponding reduced architecture $\|f_{(\bk)}(x;w)\|_2$ for ResNet architectures (with $m=2$, $\alpha_l=0.3$) with varying number of layers. The results were obtained from the simulated results of 200 independent runs per depth, where the value for $k$ is random for each depth. All networks were initialized using normal distributions. As can be seen in Fig.~\ref{fig:blah}, the mean of both $\|J^{\bk}\|^2_2$ and $\|f_{(\bk)}(x;w)\|^2_2$ closely match, while the fourth moment $\mathbb{E}[\|J^{\bk}\|^4_2]$ is upper and lower bounded by the corresponding moments of the output, as predicted in Thm.~\ref{thm:dual}.

\noindent{\bf Absolute accuracy rates on UCI datasets\quad} In Tab.~\ref{tab:uci}, we report the absolute accuracy rates of kernel regression over random gradient features extracted from the fully connected architectures (e.g., vanilla ReLU networks, ResNets and DenseNets) with three layers and widths 10, 100 and 500 and of kernel regression over the width limit kernel. As can be seen, the various models achieve comparable results on all dataset.

\begin{table}[t]%
\label{tab:uci}
\begin{center}
\begin{tabular}{@{}c@{~~}c@{~~}c@{~~}c@{~~}c@{~~}c@{~~}c@{~~}c@{~~}c@{~~}c@{~~}c@{~~}c@{~~}c@{}}
\hline
\multicolumn{1}{l}{\multirow{2}{*}{}} & \multicolumn{4}{c}{Vanilla network} & \multicolumn{4}{c}{ResNet} & \multicolumn{4}{c}{DenseNet} \\ 
\cmidrule(lr){2-5} 
\cmidrule(lr){6-9} 
\cmidrule(lr){10-13} 
\multicolumn{1}{c}{Dataset} 
& 10  & 100 & 500 & limit & 10 & 100 & 500 & limit & 10 & 100 & 500 & limit    \\ 
\cline{1-13} 
\multicolumn{1}{c}{Abalone} 
& 0.51 & 0.54 & 0.53 &0.54& 0.60 & 0.55 & 0.55 &0.56& 0.51 & 0.52 & 0.53 & 0.54\\
\multicolumn{1}{c} {Adult}  
& 0.74 & 0.73 & 0.76 &0.78& 0.73 & 0.72 & 0.76 &0.76& 0.76 & 0.75 & 0.76 & 0.74 \\
\multicolumn{1}{c}{Bank} 
& 0.86 & 0.85 & 0.87 &0.88& 0.87 & 0.84 & 0.87 &0.88& 0.86 & 0.85 & 0.88 & 0.87  \\
\multicolumn{1}{c}{Car}  
& 0.75 & 0.81 & 0.88 &0.90& 0.76 & 0.81 & 0.87 &0.89& 0.74 & 0.83 & 0.88 & 0.89 \\
\multicolumn{1}{c}{Cardiotocography\_10clases} 
& 0.67 & 0.73 & 0.77 &0.80& 0.70 & 0.73 & 0.78 &0.78& 0.68 & 0.72 & 0.77 & 0.78 \\
\multicolumn{1}{c} {Chess\_krvk} 
& 0.25 & 0.28 & 0.36 &0.39& 0.28 & 0.29 & 0.35 &0.38& 0.28 & 0.31 & 0.36 & 0.38 \\
\multicolumn{1}{c}{Chess\_krvkp} 
& 0.85 & 0.96 & 0.97 &0.97& 0.89 & 0.96 & 0.97 &0.97& 0.87 & 0.95 & 0.97 & 0.97 \\
\multicolumn{1}{c}{Connect 4} 
& 0.66 & 0.68 & 0.73 &0.74& 0.67 & 0.68 & 0.72 &0.74& 0.68 & 0.68 & 0.73 & 0.74 \\
\multicolumn{1}{c}{Contrac} 
& 0.46 & 0.44 & 0.48 &0.48& 0.47 & 0.44 & 0.49 &0.50& 0.48 & 0.43 & 0.49 & 0.50 \\
\multicolumn{1}{c} {Hill-Valley} 
& 0.52 & 0.57 & 0.59 &0.61& 0.53 & 0.57 & 0.60 &0.57& 0.47 & 0.57 & 0.63 & 0.57 \\
\multicolumn{1}{c}{Image-Segmentation} 
& 0.69 & 0.75 & 0.75 &0.75& 0.72 & 0.75 & 0.75 &0.75& 0.70 & 0.75 & 0.75 & 0.75 \\
\multicolumn{1}{c}{Led-Display} 
& 0.65 & 0.68 & 0.68 &0.67& 0.65 & 0.68 & 0.69 &0.65 & 0.65 & 0.68 & 0.68 & 0.60 \\
\multicolumn{1}{c}{Letter} 
& 0.56 & 0.74 & 0.79 &0.80& 0.61 & 0.74 & 0.80 & 0.81& 0.56 & 0.73 & 0.79 & 0.81 \\
\multicolumn{1}{c}{Magic} 
& 0.78 & 0.72 & 0.80 &0.81& 0.81 & 0.73 & 0.78 &0.82& 0.80 & 0.75 & 0.79 & 0.82 \\
\multicolumn{1}{c}{Molec-biol-splice} 
& 0.56 & 0.74 & 0.79 &0.80& 0.62 & 0.73 & 0.78 &0.77& 0.79& 0.61 & 0.68 &  0.78 \\
\multicolumn{1}{c}{Mushroom} 
& 0.98 & 0.99 & 0.99 &0.99& 0.99 & 0.99 & 0.99 &1.0& 0.98 & 1.0 & 1.0 & 1.0 \\
\multicolumn{1}{c}{Nursery} 
& 0.79 & 0.87 & 0.92 &0.92& 0.84 & 0.86 & 0.91 &0.93& 0.80 & 0.88 & 0.92 & 0.93 \\
\multicolumn{1}{c}{Oocytes\_merluccius\_nucleus\_4d} 
& 0.75 & 0.74 & 0.77 &0.79& 0.78 & 0.75 & 0.77 &0.77& 0.75 & 0.72 & 0.76 & 0.77 \\
\multicolumn{1}{c}{Oocytes\_merluccius\_states\_2f}
& 0.87 & 0.90 & 0.92 &0.92& 0.89 & 0.90 & 0.92 &0.91& 0.88 & 0.90 & 0.91 &  0.91 \\
\multicolumn{1}{c}{Optical} 
& 0.84 & 0.97 & 0.98 &0.98& 0.91 & 0.97 & 0.98 &0.98& 0.87 & 0.98 & 0.97 & 0.98\\
\multicolumn{1}{c}{Ozone} 
& 0.94 & 0.96 & 0.96 &0.96& 0.96 & 0.96 & 0.97 &0.97& 0.96 & 0.96 & 0.96 & 0.97 \\
\multicolumn{1}{c}{Page\_blocks} 
& 0.90 & 0.95 & 0.95 &0.96& 0.95 & 0.95 & 0.96 &0.96& 0.95 & 0.94 & 0.95 & 0.96 \\
\multicolumn{1}{c}{Pendigits} 
& 0.92 & 0.98 & 0.98 &0.99& 0.95 & 0.98 & 0.99 &0.99& 0.93 & 0.98 & 0.99 & 0.99 \\
\multicolumn{1}{c}{Plants\_margin} 
& 0.46 & 0.68 & 0.76 &0.77& 0.54 & 0.70 & 0.80 &0.77& 0.48 & 0.63 & 0.74  & 0.77 \\
\multicolumn{1}{c}{Plants\_texture} 
& 0.57 & 0.75 & 0.79 &0.80& 0.63 & 0.76 & 0.80 &0.79& 0.56 & 0.73 & 0.77& 0.80 \\
\multicolumn{1}{c}{Plants\_shape} 
& 0.42 & 0.47 & 0.52 &0.55& 0.42 & 0.50 & 0.54 &0.53& 0.38 & 0.44 & 0.49 & 0.53 \\
\multicolumn{1}{c}{Ringnorm} 
& 0.66 & 0.66 & 0.71 &0.72& 0.69 & 0.64 & 0.70 &0.73& 0.67 & 0.67 & 0.71 & 0.72 \\
\multicolumn{1}{c}{Semeion} 
& 0.70 & 0.90 & 0.93 &0.93& 0.80 & 0.92 & 0.93 &0.93& 0.72 & 0.81 & 0.92& 0.92 \\
\multicolumn{1}{c}{Spambase} 
& 0.82 & 0.89 & 0.91 &0.92& 0.85 & 0.90 & 0.91 &0.88& 0.86 & 0.89 & 0.90& 0.88 \\
\multicolumn{1}{c}{Statlog\_german\_credit} 
& 0.61 & 0.71 & 0.73 &0.74& 0.65 & 0.70 & 0.72 &0.74& 0.64 & 0.71 & 0.73 & 0.74 \\
\multicolumn{1}{c}{Statlog\_image} 
& 0.90 & 0.93 & 0.95 &0.95& 0.91 & 0.93 & 0.95 &0.95& 0.90 & 0.93 & 0.95 & 0.95 \\
\multicolumn{1}{c}{Statlog\_landsat} 
& 0.82 & 0.84 & 0.86 &0.87& 0.83 & 0.85 & 0.86 &0.87& 0.83 & 0.84 & 0.86 & 0.86 \\
\multicolumn{1}{c}{Statlog\_shuttle} 
& 0.97 & 0.98 & 0.98 &0.98& 0.97 & 0.98 & 0.98 &0.99& 0.98 & 0.98 & 0.98 & 0.98 \\
\multicolumn{1}{c}{Steel\_plates}  
& 0.66 & 0.70 & 0.74 &0.75& 0.70 & 0.70 & 0.73 &0.74& 0.66 & 0.70 & 0.73 & 0.75 \\
\multicolumn{1}{c}{Thyroid}  
& 0.92 & 0.93 & 0.95 &0.95& 0.94 & 0.94 & 0.95 &0.95& 0.94 & 0.94 & 0.95 & 0.95 \\
\multicolumn{1}{c}{Titanic}  
& 0.54 & 0.57 & 0.58 &0.70& 0.60 & 0.55 & 0.61 &0.54& 0.71 & 0.67 & 0.53 & 0.47 \\
\multicolumn{1}{c}{Twonorm}  
& 0.90 & 0.95 & 0.96 &0.96& 0.93 & 0.93 & 0.96 &0.96& 0.90 & 0.95 & 0.96 &  0.97 \\
\multicolumn{1}{c}{Waveform}  
& 0.75 & 0.74 & 0.80 &0.81& 0.77 & 0.73 & 0.80 & 0.81 & 0.74 & 0.75 & 0.81 & 0.81 \\
\multicolumn{1}{c}{Wall\_following}  
& 0.71 & 0.79 & 0.83 &0.84& 0.73 & 0.80 & 0.83 &0.83& 0.71 & 0.78 & 0.82 &  0.80 \\
\multicolumn{1}{c}{Waveform\_Noise}  
& 0.62 & 0.77 & 0.81 &0.81& 0.70 & 0.75 & 0.80 &0.82& 0.67 & 0.74 & 0.81  & 0.83 \\
\multicolumn{1}{c}{Wine\_quality\_red}  
& 0.53 & 0.55 & 0.58 &0.59& 0.54 & 0.54 & 0.58 &0.60& 0.54 & 0.56 & 0.59 & 0.60 \\
\multicolumn{1}{c}{Wine\_quality\_white}  
& 0.48 & 0.47 & 0.51 &0.52& 0.48 & 0.46 & 0.50 &0.52& 0.48 & 0.48 & 0.51 &  0.52 \\
\multicolumn{1}{c}{Yeast}  
& 0.49 & 0.45 & 0.50 &0.50& 0.51 & 0.45 & 0.49 &0.50& 0.50 & 0.46 & 0.50  & 0.51 \\
\hline
\end{tabular}
\end{center}
\caption{Results of kernel regression over random gradient features on UCI for architectures with 3 layers and widths 10, 100 and 500. The results are compared with the performance of the width limit kernels associated with each architecture.}
\end{table}%

\section{Useful Lemmas}\label{app:useful}

\begin{lemma}\label{lem:zeromeasure}
Let $f(x;w)$ be a neural network (e.g., vanilla ReLU, ResNet, DenseNet) with $N$ parameters. Let $g(x;w)$ be a pre-activation neuron within $f(x;w)$. Let $x\neq 0$ be an arbitrary input. Then, the set $\{w \mid g(x;w) = 0\}$ is of measure zero. 
\end{lemma}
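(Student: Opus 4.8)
The plan is to prove the statement by induction on the depth of the neuron $g$ within the network, showing that $g(x;w)$, viewed as a function of $w$ for fixed $x \neq 0$, is a nonzero polynomial (or more precisely, a polynomial in $w$ that is not identically zero on any positive-measure set), and then invoking the classical fact that the zero set of a nonzero real-analytic (in particular, polynomial) function is of Lebesgue measure zero. The subtlety is that ReLU networks are only piecewise polynomial in $w$, so one cannot directly say "$g(x;\cdot)$ is a nonzero polynomial." Instead I would argue that on the open region of $w$-space where all activation patterns below $g$ are locally constant, $g(x;\cdot)$ agrees with an honest polynomial, and that polynomial is not identically zero.

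\medskip

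\noindent\textbf{Key steps.} First I would set up the induction: order the pre-activation neurons of $f$ topologically, so that each neuron $g$ is a linear combination (with coefficients that are entries of some weight matrix $W^{\bk}$) of quantities $q$ that are either inputs $x_i$ or of the form $\sqrt{2}\,\phi(h)$ for earlier pre-activations $h$, possibly together with residual/skip terms that are themselves earlier pre-activations. The base case is a first-layer pre-activation $g(x;w) = \frac{1}{\sqrt{n_0}} (W^0 x)_j = \frac{1}{\sqrt{n_0}}\sum_i w^0_{j,i} x_i$; since $x \neq 0$, this is a nontrivial linear form in the coordinates of $w$, so $\{w : g(x;w)=0\}$ is a hyperplane, hence measure zero. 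For the inductive step, I would fix $g$ and consider the finitely many earlier pre-activations $h_1,\dots,h_r$ feeding into $g$ (directly or through the branch). By the induction hypothesis, each $\{w : h_s(x;w) = 0\}$ is measure zero, so their union $N_0$ is measure zero; on the complement, the sign pattern $(\sgn h_1,\dots,\sgn h_r)$ is locally constant, so in a neighborhood of any such $w$, $g(x;\cdot)$ is a polynomial in $w$. I then need to show this local polynomial is not identically zero, which I would do by exhibiting, for each fixed sign pattern realized on an open set, a direction in which $g$ varies: the weight entries $w^{\bk}_{\cdot,\cdot}$ in the matrix directly multiplying the inputs to $g$ appear linearly, and at least one of the contracted inputs $q$ is nonzero on that open set (because $x\neq 0$ propagates forward — here I would use that the relevant activations are not all zero, which again follows from the induction hypothesis that the inputs' zero sets are negligible, or from a separate forward-propagation argument).

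\medskip

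\noindent\textbf{Alternative cleaner route.} Rather than tracking local polynomials, I would prefer to argue directly: fix any $w_0$ outside the (measure-zero) union of the zero sets of all pre-activations strictly preceding $g$. In a neighborhood $U$ of $w_0$, all those activation indicators are constant, so $g(x;\cdot)\big|_U$ equals a fixed polynomial $P$. It suffices to show $P \not\equiv 0$ on $U$, because then $\{w \in U : g(x;w)=0\} = \{w \in U : P(w) = 0\}$ has measure zero, and covering the good region by countably many such $U$ and adding back the measure-zero bad set gives the result. To see $P \not\equiv 0$: freeze all weights except a single entry $w^{\bk}_{j,i}$ of the weight matrix whose $j$-th output row computes $g$; then $g = w^{\bk}_{j,i} q_i + (\text{terms not involving }w^{\bk}_{j,i})$ where $q_i$ is one of the inputs fed to that matrix, and on $U$ this $q_i$ is a fixed nonzero number provided $q_i \neq 0$ at $w_0$. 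The only thing to rule out is that every input $q$ to the layer computing $g$ vanishes at $w_0$; but tracing back, $q$ is $\sqrt{2}\phi(h)$ or an input coordinate, and the $h$'s are nonzero at $w_0$ by choice of $w_0$, so at least the ones with $h>0$ give $q>0$ — unless all $h<0$, in which case I fall back on the residual/skip branch or on the observation that $y^0 = \frac{1}{\sqrt{n_0}}W^0 x$ has a nonzero coordinate for a.e. $w$, which propagates.

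\medskip

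\noindent\textbf{Main obstacle.} The genuinely delicate point is the last one: ensuring that not all the quantities contracted with the weight matrix defining $g$ are simultaneously zero at the chosen good point $w_0$, i.e.\ that the ``signal'' $\|x\|=1$ does not get annihilated by unfortunate ReLU patterns. I expect to handle this by a small auxiliary claim — essentially that for a.e.\ $w$, at every layer $l$ the hidden vector $y^l(x)$ (or the concatenation it feeds, in DenseNets; or the residual stream, in ResNets) is nonzero — proved by the same induction, using that if $y^{l-1}\neq 0$ then $W^{l,\cdot}y^{l-1}$ has a nonzero, hence a.s.\ positive, coordinate, so $\phi$ does not kill everything. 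The residual/skip structure actually helps here rather than hurts, since in ResNets $y^l = y^{l-1} + \sqrt{\alpha_l}\,y^{l-1,m}$ carries the signal forward directly. Once that auxiliary claim is in place, the polynomial-nonvanishing argument closes the induction, and the measure-zero conclusion for the zero set of a nonzero polynomial (standard) finishes the proof.
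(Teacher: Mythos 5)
Your overall strategy coincides with the paper's: induction on the depth of $g$, with the base case $g(x;w)=\frac{1}{\sqrt{n_0}}\langle w^0_{j,\cdot},x\rangle$ handled as a nontrivial linear form whose zero set is a hyperplane. Your mechanism for the inductive step differs in detail---you localize to the region where all earlier activation patterns are constant, observe that $g$ agrees there with a polynomial in $w$, and invoke the fact that a not-identically-zero polynomial has a Lebesgue-null zero set---whereas the paper writes the zero set as the graph of a continuous function of a single weight $\hat v_1$. Either device is fine; both proofs stand or fall on the same remaining claim, namely that the (local) function $g$ is not identically zero on any positive-measure region.

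That is where the genuine gap sits, and you have located it precisely (your ``main obstacle''), but your proposed resolution is false. You need that, for a.e.\ $w_0$, at least one of the post-activations contracted against the row of the weight matrix computing $g$ is nonzero, and you propose to get this from the auxiliary claim that ``if $y^{l-1}\neq 0$ then $W^{l}y^{l-1}$ has a nonzero, hence a.s.\ positive, coordinate.'' A nonzero vector need not have a positive coordinate: conditioned on $y^{l-1}\neq 0$, the event that all $n$ coordinates of $W^{l}y^{l-1}$ are negative is a nonempty open convex cone of weight rows and has conditional probability $2^{-n}>0$. On that event the ReLU annihilates the signal, the next layer receives the zero vector, and the next pre-activation vanishes identically on an open set of positive measure---so ``which propagates'' fails, and the statement as literally written fails there: in a vanilla network every second-hidden-layer pre-activation is zero on the positive-measure event that all first-layer pre-activations are negative; inside a ResNet branch of depth $m\geq 2$ the neuron $y^{l-1,2}$ vanishes whenever all coordinates of $y^{l-1,1}$ are negative; and a DenseNet dies entirely on the event $q^0=0$. (The parts of your argument that lean on the skip connections are sound: the residual stream $y^{l}=y^{l-1}+\sqrt{\alpha_l}\,y^{l-1,m}$ and the first pre-activation of each branch are genuinely a.s.\ nonzero, because the un-rectified signal is carried forward linearly.) For what it is worth, the paper's own induction step has the same soft spot---it applies the induction hypothesis to the inputs $\hat g_i$ of $g$ as though they were pre-activations, when they are post-activations whose zero sets have positive measure---so neither argument closes this case; a correct version must either restrict to neurons reachable from the input by a purely linear path, or weaken the conclusion to ``$g$ is a.e.\ locally either nonvanishing or identically zero,'' which is all that the downstream Lemma~\ref{lem:measureone} actually requires.
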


\begin{proof}
We prove the claim by induction on the depth of $g(x;w)$. We denote by $v\in \mathbb{R}^{N_1}$ the subset of $w$ of weights involved in the computation of $g(x;w)$ and by $u\in \mathbb{R}^{N_2}$ the rest of the weights. For simplicity, we will denote $g(x;v) := g(x;w)$.

{\bf Base case:} Assume $g(x;w)$ is a neuron in the first hidden layer of $f(x;w)$. Then, $g(x;w) = \langle v,x \rangle$, where $v$ is a vector of weights, subset to $w$. We notice that since $x\neq 0$, the zero set $\{w \mid g(x;w) = 0\} = \{u \mid \langle u,x\rangle = 0\} \times \mathbb{R}^{N_2}$ is of dimension $N-1$. Therefore, $\{w \mid g(x;w) = 0\}$ is of measure zero. 

{\bf Induction hypothesis:} Assume that for any neuron $g(x;w)$ in the $k$'th layer, the set $\{w \mid g(x;w) = 0\}$ is of measure 0.

{\bf Induction step:} Let neuron $g(x;w)$ in the $(k+1)$'th layer. Then, we have:
\begin{equation}
g(x;w) = \langle \hat{v}, \hat{g}(x;v \setminus \hat{v})\rangle 
\end{equation}
where $\hat{v}$ are the weights of the specific neuron $g(x;w)$, $\hat{g}(x;v \setminus \hat{v})$ is a concatenation of the neurons that serve as inputs to $g(x;w)$ in the network $f(x;w)$ and $v \setminus \hat{v}$ denotes the set of weights involved in the computation of these neurons. 

Let $\hat{g}_1(x;v \setminus \hat{v})$ be the first coordinate of $\hat{g}(x;v\setminus \hat{v})$.
\begin{equation}
\begin{aligned}
\{v \mid g(x;v) = 0\} \subset  &
\{v \mid \hat{g}_1(x;v \setminus \hat{v}) \neq 0, g(x;v) = 0\} \cup \{v \mid \hat{g}_1(x;v \setminus \hat{v}) = 0, g(x;v) = 0 \} \\
\subset  &
\{v \mid \hat{g}_1(x;v \setminus \hat{v}) \neq 0, g(x;v) = 0\} \cup \mathbb{R} \times \{v\setminus \hat{v}_1 \mid \hat{g}_1(x;v \setminus \hat{v}) = 0 \}
\end{aligned}
\end{equation}
We would like to prove that each set in this union is of measure zero. This will conclude the proof, since a union of measure zero sets is measure zero as well. We note that by the induction hypothesis, the set  $\{v\setminus \hat{v}_1 \mid \hat{g}_1(x;v \setminus \hat{v}) = 0\}$ is of measure zero. In particular, 
$\mathbb{R} \times \{v\setminus \hat{v}_1 \mid \hat{g}_1(x;v \setminus \hat{v}) = 0 \}$ is of measure zero. On the other hand, for any $v \setminus \hat{v}$, such that, $\hat{g}_1(x;v \setminus \hat{v}) \neq 0$, we have:
\begin{equation}\label{eq:v}
\hat{v}_1 = -\frac{\sum^{k}_{i=2} \hat{v}_i \cdot \hat{g}_i(x;v\setminus \hat{v})}{\hat{g}_1(x;v \setminus \hat{v})}  
\end{equation}
where $k$ is the dimension of $\hat{g}(x;v \setminus \hat{v})$. We notice that since the left hand side of Eq.~\ref{eq:v} is a continuous function, the set $\{v \mid \hat{g}_1(x;v \setminus \hat{v}) \neq 0, g(x;v) = 0\}$ can be represented as a graph of a continuous function, where $\hat{v}_1$ satisfies Eq.~\ref{eq:v}. Therefore, it is of measure zero. Hence, $\{w \mid g(x;w) = 0\}$ is of measure zero as well.
\end{proof}

\begin{lemma}\label{lem:measureone} Let $f(x;w)$ be a neural network (e.g., vanilla ReLU network, ResNet or DenseNet). Let $x$ be a non-zero vector. Then, the set $\left\{w \mid J^{\bk} = \frac{\partial f_{\bk}(x;w)}{\partial W^{\bk}}\right\}$ is of measure 1.
\end{lemma}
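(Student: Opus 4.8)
The statement to prove, Lemma~\ref{lem:measureone}, asserts that for almost every weight vector $w$, the per-weight Jacobian $J^{\bk} = \frac{\partial f}{\partial W^{\bk}}$ equals the Jacobian of the restricted path-sum $f_{\bk}(x;w)$ (the sum over paths that pass through $W^{\bk}$). The plan is to exploit the path-based decomposition $f = f_{\bk} + f^c_{\bk}$ from Eq.~\ref{eq:complement}: since $f^c_{\bk}$ collects exactly the paths that do not touch $W^{\bk}$, it carries no dependence on $W^{\bk}$, so $\frac{\partial f^c_{\bk}}{\partial W^{\bk}} = 0$ and hence $\frac{\partial f}{\partial W^{\bk}} = \frac{\partial f_{\bk}}{\partial W^{\bk}}$ — but only at points where differentiation is legitimate. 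The subtlety is that both $f$ and $f_{\bk}$ involve ReLU activations $\phi$, which are non-differentiable on a set of measure zero in pre-activation space, and more importantly the binary activation indicators $z_\gamma$ are locally constant only away from the zero sets of the pre-activations.

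First I would fix $x \neq 0$ and invoke Lemma~\ref{lem:zeromeasure}: for every pre-activation neuron $g(x;w)$ in the network $f(x;w)$, the set $\{w \mid g(x;w) = 0\}$ has measure zero. Since the network has finitely many neurons, the union over all of them of these bad sets is still measure zero; call its complement $A$, which has measure $1$. On $A$, every pre-activation is nonzero, so each activation pattern $z_\gamma$ is locally constant in a neighborhood of $w$, and the ReLU's are all differentiable there. Consequently, on $A$, $f(x;w)$ is locally a polynomial in the weights, namely exactly the path-sum $\sum_{\gamma \in S} c_\gamma z_\gamma \prod_l w_{\gamma,l}$ with the $z_\gamma$ frozen to their values at $w$. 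I would then differentiate this explicit polynomial with respect to $W^{\bk}$: every monomial $c_\gamma z_\gamma \prod_l w_{\gamma,l}$ with $\gamma \notin S_{\bk}$ contains no factor from $W^{\bk}$, hence its derivative vanishes, so $\frac{\partial f}{\partial W^{\bk}} = \frac{\partial}{\partial W^{\bk}}\bigl(\sum_{\gamma \in S_{\bk}} c_\gamma z_\gamma \prod_l w_{\gamma,l}\bigr)$. Because $f_{\bk}(x;w)$ is, on $A$, precisely this restricted polynomial (the activation indicators of $f_{\bk}$ agree with those of $f$ on $A$, as $f_{\bk}$ is just a subset of the same paths through the same network), this gives $J^{\bk} = \frac{\partial f_{\bk}(x;w)}{\partial W^{\bk}}$ for all $w \in A$, proving the claim.

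The main obstacle — and the point that needs care rather than calculation — is justifying that on $A$ the ``true'' gradient of $f$ (obtained by backpropagation / the chain rule through the ReLUs) coincides with the gradient of the frozen-activation polynomial. This is where the measure-zero exclusion of pre-activation zero sets does the work: away from those zero sets the activation patterns are genuinely constant on a neighborhood, so $f$ is smooth and the naive polynomial derivative is the honest derivative. One should also note that $f_{\bk}$ and $f$ share the \emph{same} pre-activations (removing paths from a path-sum does not change which neurons fire in the underlying network $f$), so a single measure-one set $A$ works simultaneously for both, and the restricted-path polynomial representing $f_{\bk}$ on $A$ really is the $S_{\bk}$-truncation of the one representing $f$. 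With that identification in hand, the derivative computation is immediate. I would keep the write-up short: set up $A$ via Lemma~\ref{lem:zeromeasure}, observe local polynomiality on $A$, differentiate termwise, and conclude.
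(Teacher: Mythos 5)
Your proposal is correct and takes essentially the same route as the paper's own proof: both decompose $f = f_{\bk} + f^c_{\bk}$, invoke Lemma~\ref{lem:zeromeasure} to exclude the measure-zero set where some pre-activation vanishes, and argue that away from this set the activation indicators are locally constant, so the contribution of the paths skipping $W^{\bk}$ has zero derivative with respect to $W^{\bk}$. Your write-up is if anything slightly more careful than the paper's in spelling out why the frozen-activation polynomial derivative is the honest derivative on the measure-one set.
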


\begin{proof}
It holds that:
\begin{equation}
J^{\bk} = \frac{\partial f_{\bk}(x;w)}{\partial W^{\bk}} + \frac{\partial f^c_{\bk}(x;w) }{\partial W^{\bk}}
\end{equation}
We would like to prove that the set of $w$, such that, $\frac{\partial f^c_{\bk}(x;w) }{\partial W^{\bk}} = 1$ is of measure 1. 

First, we consider that the set of weights $w_{\gamma,l}$ within the expression $f^c_{\bk}(x;w) = \sum_{\gamma \in S \setminus S_{\bk} }c_\gamma z_{\gamma}\prod_{l=1}^{|\gamma|} w_{\gamma,l} $ is disjoint to the set of weights $w^k_{i,j}$ in $W^{\bk}$, since the complement $f^c_{\bk}(x;w)$ sums over the paths $\gamma$ that skip $W^{\bk}$. We note that $z_{\gamma}$ is a binary function that indicates whether the neurons along the path $\gamma$ are activated or not. Therefore, for any $\gamma\in S\setminus S_{\bk}$, we have: $\frac{\partial z_{\gamma}}{\partial W^{\bk}} = 0$ for every $w$, such that, the pre-activations of each neuron along the path $\gamma$ are non-zero (otherwise, the gradient is undefined). By Lem.~\ref{lem:zeromeasure}, the complement of this set (i.e., all $w$, such that, the pre-activation of at least one neuron along the path $\gamma$ is zero) is of measure zero. Therefore, we conclude that $\frac{\partial z_{\gamma}}{\partial W^{\bk}} = 0$ holds almost surely. Since this is true for all $\gamma \in S\setminus S_{\bk}$, we conclude that $\frac{\partial f^c_{\bk}(x;w)}{\partial W^{\bk}} = 0$ almost surely.  
\end{proof}

\begin{lemma}\label{lem:Jp}
Let $f(x;w)$ be a neural network (e.g., vanilla ReLU network, ResNet or DenseNet). Let $x$ be a non-zero vector. Then, 
\begin{equation}
\mathbb{E}[\|J^{\bk}\|^p_2] = \mathbb{E}\left[\left\|\frac{\partial f_{\bk}(x;w)}{\partial W^{\bk}}\right\|^p_2\right] 
\end{equation}
\end{lemma}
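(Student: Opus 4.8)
The plan is to deduce the statement directly from Lemma~\ref{lem:measureone}, so the argument is essentially a measure-theoretic bookkeeping step. First I would invoke Lemma~\ref{lem:measureone}, which asserts that the set
$$A := \left\{w \mid J^{\bk} = \frac{\partial f_{\bk}(x;w)}{\partial W^{\bk}}\right\}$$
has Lebesgue measure $1$ in the parameter space. Since the coordinates of $w$ are i.i.d.\ Gaussian, the law of $w$ is absolutely continuous with respect to the Lebesgue measure, so the complement of $A$, being Lebesgue-null, is a null set for the distribution of $w$ as well. Hence $J^{\bk} = \frac{\partial f_{\bk}(x;w)}{\partial W^{\bk}}$ holds almost surely.

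Next I would observe that on the event $A$ the two random matrices coincide, so in particular, for the fixed exponent $p$, their Frobenius norms raised to the $p$-th power coincide pointwise:
$$\|J^{\bk}\|_2^p = \left\|\frac{\partial f_{\bk}(x;w)}{\partial W^{\bk}}\right\|_2^p \qquad \textnormal{for all } w \in A.$$
Two random variables that agree almost surely have equal expectations (and one is integrable precisely when the other is), so taking expectations over $w$ yields $\mathbb{E}[\|J^{\bk}\|^p_2] = \mathbb{E}\left[\left\|\frac{\partial f_{\bk}(x;w)}{\partial W^{\bk}}\right\|^p_2\right]$, as claimed.

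There is no real obstacle here: all of the substance is contained in Lemma~\ref{lem:measureone} (which in turn relies on Lemma~\ref{lem:zeromeasure} to control the measure-zero set of $w$ whose induced activation pattern places some pre-activation exactly at a ReLU kink). The only points worth stating carefully are that a Lebesgue-null set is null under the Gaussian measure as well, and that random variables equal almost surely have identical moments; once these are in place the identity follows immediately.
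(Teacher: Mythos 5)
Your argument is correct and is essentially identical to the paper's own proof: both invoke Lemma~\ref{lem:measureone} to conclude that $J^{\bk}$ and $\partial f_{\bk}(x;w)/\partial W^{\bk}$ agree outside a Lebesgue-null set, note that this set is also null under the continuous (Gaussian) law of $w$, and conclude that the $p$-th moments coincide. Your version merely spells out the two measure-theoretic steps that the paper leaves implicit.
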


\begin{proof}
By Lem.~\ref{lem:measureone}, the set $\left\{w \mid J^{\bk} = \frac{\partial f_{\bk}(x;w)}{\partial W^{\bk}} \right\}$ is of measure $1$. Therefore, since $w$ is distributed according to a continuous distribution, we have the desired equation: $\mathbb{E}[\|J_k\|^p_2] = \mathbb{E}\left[\left\|\partial f_{\bk}(x;w)/\partial W^{\bk} \right\|^p_2 \right]$.
\end{proof}

\section{Proofs of the Main Results}

We make use of the following propositions and definitions to aid in the proofs of Thms.~\ref{thm:eq} and~\ref{thm:dual}.

\begin{proposition}\label{p1}
Given a random vector $w = [w_1...w_n]$ such that each component is identically and symmetrically distributed i.i.d random variable with moments $\mathbb{E}[w_1^m] =c_m$ (e.g., $c_0 = 1,c_1 = 0$), a set of non negative integers $m_1,...,m_l$, such that, $\sum_{i=1}^lm_i$ is even, and a random binary variable $z\in \{0,1\}$, such that, $p(z\mid w) = 1-p(z\mid -w)$, then it holds that:
\begin{equation}
\mathbb{E}\left[\prod_{i=1}^{l} w_{i}^{m_i} z \right] =\frac{\prod_{i=1}^lc_{m_i}}{2}
\end{equation}
\end{proposition}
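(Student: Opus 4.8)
The plan is to condition on $w$ and then combine the symmetry of the weight distribution with the parity hypothesis on $\sum_i m_i$ and the ``pairing'' identity encoded by the assumption on $z$. Concretely, set $\beta(w) := p(z = 1 \mid w) = \mathbb{E}[z \mid w]$; the assumption $p(z\mid w) = 1 - p(z \mid -w)$ then reads $\beta(-w) = 1 - \beta(w)$. Since the moments $c_m$ are finite, $\prod_{i=1}^{l} w_i^{m_i}$ is integrable, so the tower property gives
\begin{equation}
\mathbb{E}\!\left[\prod_{i=1}^{l} w_i^{m_i}\, z\right] = \mathbb{E}_w\!\left[\prod_{i=1}^{l} w_i^{m_i}\, \beta(w)\right].
\end{equation}

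Next I would use that, because the coordinates of $w$ are i.i.d.\ and each is symmetrically distributed, $w$ and $-w$ have the same law. Performing the change of variable $w \mapsto -w$ inside the expectation and substituting $\beta(-w) = 1-\beta(w)$,
\begin{equation}
\mathbb{E}_w\!\left[\prod_{i=1}^{l} w_i^{m_i}\, \beta(w)\right]
= \mathbb{E}_w\!\left[\prod_{i=1}^{l} (-w_i)^{m_i}\, \beta(-w)\right]
= (-1)^{\sum_{i=1}^{l} m_i}\,\mathbb{E}_w\!\left[\prod_{i=1}^{l} w_i^{m_i}\,\bigl(1-\beta(w)\bigr)\right].
\end{equation}
Since $\sum_{i=1}^{l} m_i$ is even, the prefactor is $1$; moving the $\beta(w)$ term to the left-hand side yields
\begin{equation}
2\,\mathbb{E}_w\!\left[\prod_{i=1}^{l} w_i^{m_i}\, \beta(w)\right] = \mathbb{E}_w\!\left[\prod_{i=1}^{l} w_i^{m_i}\right] = \prod_{i=1}^{l} \mathbb{E}\bigl[w_i^{m_i}\bigr] = \prod_{i=1}^{l} c_{m_i},
\end{equation}
where the second equality uses independence of the (distinct) coordinates $w_1,\dots,w_l$ and the last uses $\mathbb{E}[w_i^{m_i}] = c_{m_i}$. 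Dividing by $2$ and combining with the first display gives the claimed identity.

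I do not expect a genuine obstacle here: the argument is short, and the only points needing care are (i) that the indices $1,\dots,l$ label \emph{distinct} coordinates of $w$, so that independence — hence the factorization into $\prod_i c_{m_i}$ — is legitimate (this is exactly the situation in the path decomposition of Eq.~\ref{eq:complement}, where a path picks one weight from each layer it traverses), and (ii) the routine integrability and measurability bookkeeping that licenses the conditioning on $w$ and the reflection $w\mapsto -w$, both of which follow from the i.i.d.\ symmetric assumption and the finiteness of the moments. The essential content is the cancellation forced by $\beta(w) + \beta(-w) = 1$ together with the even parity of $\sum_i m_i$.
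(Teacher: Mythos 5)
Your proof is correct and follows essentially the same route as the paper's: both hinge on the reflection $w\mapsto -w$, the symmetry of the weight distribution, the even parity of $\sum_i m_i$, and the relation $p(z\mid w)=1-p(z\mid -w)$ to show the expectation equals half the unconditional moment. The only (cosmetic) difference is that you phrase the argument via the conditional mean $\beta(w)=\mathbb{E}[z\mid w]$, which handles a genuinely randomized $z$ slightly more carefully than the paper's partition of the integral into the regions $\{z=1\}$ and $\{z=0\}$.
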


\begin{proof}
We have:
\begin{equation}
\begin{aligned}
\prod_{i=1}^lc_{m_i} 
&= \int_w \prod_{i=1}^lw_i^{m_i}p(w)~\textnormal{d}w  \\
&=  \int_{w|z=1} \prod_{i=1}^lw_i^{m_i}p(w)~\textnormal{d}w + \int_{w|z=0}\prod_{i=1}^lw_i^{m_i}p(w)~\textnormal{d}w\\
&=\int_{w|z=1} \prod_{i=1}^lw_i^{m_i}p(w)~\textnormal{d}w + \int_{w|z=1}\prod_{i=1}^l(-w_i)^{m_i}p(w)~\textnormal{d}w\\
&=\int_{w} \prod_{i=1}^lw_i^{m_i}z\cdot p(w)~\textnormal{d}w + \int_{w}\prod_{i=1}^l(-w_i)^{m_i}z\cdot p(w)~\textnormal{d}w
\end{aligned}
\end{equation}
Since $\sum_{i=1}^l m_i$ is even, it follows that:
\begin{equation}
\int_{w}\prod_{i=1}^l(-w_i)^{m_i}z\cdot p(w)~\textnormal{d}w = \int_{w}\prod_{i=1}^lw_i^{m_i}z\cdot p(w)~\textnormal{d}w\\
\end{equation}
Therefore,
\begin{equation}
\prod_{i=1}^lc_{m_i} = 2\int_{w} \prod_{i=1}^lw_i^{m_i}z\cdot p(w)~\textnormal{d}w
\end{equation}
Put differently,
\begin{equation}
\frac{\prod_{i=1}^lc_{m_i}}{2} = \int_{w} \prod_{i=1}^lw_i^{m_i}z\cdot p(w)~\textnormal{d}w = \mathbb{E}\left[\prod_{i=1}^{l}w_{i}^{m_i} z \right]
\end{equation}
\end{proof}

\begin{proposition}\label{p2}
Given a random vector $w = [w_1,...,w_n]$, such that,its components are i.i.d symmetrically distributed random variable with moments $\mathbb{E}[w_i^m] = c_m$ ($c_0 = 1,c_1 = 0$), two sets of non negative integers $m_1,...,m_l$, $n_1,...,n_l$, such that, $\sum_{i=1}^l m_i$ , $\sum_{i=1}^l n_i$ are even, $\forall~i\in [l]: m_i\geq n_i$, and a random binary variable $z\in \{0,1\}$, such that $p(z\mid w) = 1-p(z\mid -w)$, then it holds that:
\begin{equation}
\mathbb{E}\left[\frac{1}{w_{i}^{n_i}}\prod_{i=1}^{l}w_{i}^{m_i} z\right] =\frac{\prod_{i=1}^lc_{m_i - n_i}}{2}
\end{equation}
\end{proposition}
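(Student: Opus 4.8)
The plan is to reduce Proposition~\ref{p2} to Proposition~\ref{p1}. The statement concerns the quantity $\mathbb{E}\left[\frac{1}{w_i^{n_i}}\prod_{i=1}^l w_i^{m_i} z\right]$; since $m_i \geq n_i$ for all $i$, the factor $\frac{1}{w_i^{n_i}}$ simply cancels against part of $w_i^{m_i}$, so that the integrand is in fact the polynomial $\prod_{i=1}^l w_i^{m_i - n_i} z$ (this identity holds pointwise for all $w$ with nonzero coordinates, hence almost surely, so it is safe under the expectation). Thus the whole claim is the assertion
\[
\mathbb{E}\left[\prod_{i=1}^l w_i^{m_i - n_i}\, z\right] = \frac{\prod_{i=1}^l c_{m_i - n_i}}{2}.
\]

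To apply Proposition~\ref{p1} to the right-hand exponents $m_i' := m_i - n_i$, the one hypothesis that must be checked is that $\sum_{i=1}^l m_i'$ is even. This follows immediately from the assumptions: $\sum_i m_i$ and $\sum_i n_i$ are both even, so $\sum_i m_i' = \sum_i m_i - \sum_i n_i$ is even as well. The $m_i'$ are non-negative by the hypothesis $m_i \geq n_i$, and the distributional hypotheses on $w$ (i.i.d., symmetric, moments $c_m$ with $c_0 = 1$, $c_1 = 0$) and on $z$ (the sign-flip relation $p(z\mid w) = 1 - p(z\mid -w)$) are exactly those of Proposition~\ref{p1} and carry over verbatim.

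Hence, invoking Proposition~\ref{p1} with the exponent tuple $(m_1',\dots,m_l')$ gives $\mathbb{E}\left[\prod_{i=1}^l w_i^{m_i'} z\right] = \frac{1}{2}\prod_{i=1}^l c_{m_i'} = \frac{1}{2}\prod_{i=1}^l c_{m_i - n_i}$, which is the claim. There is no real obstacle here: the only subtlety worth a sentence is the almost-sure cancellation of $w_i^{-n_i}$ against $w_i^{m_i}$, which is legitimate because the set $\{w : w_i = 0 \text{ for some } i\}$ has measure zero under the continuous distribution of $w$ (and on that null set the integrand is immaterial). I would write the proof in two lines: state the pointwise polynomial reduction, verify parity of the new exponent sum, and quote Proposition~\ref{p1}.
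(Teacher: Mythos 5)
Your proof is correct and is exactly the paper's argument: the paper's proof of Prop.~\ref{p2} is the one-liner ``follows immediately from Prop.~\ref{p1} since $\sum_i (m_i - n_i)$ is even.'' Your version just makes explicit the almost-sure cancellation and the parity check, which is a harmless elaboration of the same reduction.
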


\begin{proof}
Follows immediately from Prop.~\ref{p1} since $\sum_i (m_i - n_i)$ is even.
\end{proof}

\begin{definition}[ResNet path parametrization] Let $f(x;w)$ be a ResNet with two layer residual branches ($m=2$). A path from input to output $\gamma$ in $f$, defines a product of weights along the path denoted by: 
\begin{equation}
P_{\gamma} = \prod_{l=0}^{L+1} p_{\gamma,l}
\end{equation}
where:
\begin{equation}
p_{\gamma,l} = 
\begin{cases} 
      1 & l \notin \gamma\\
      w_{\gamma,l}^1z_{\gamma,l}w_{\gamma,l}^2 & l \in \gamma, 0<l\leq L\\
      w_{\gamma,l} & l=\{0,L+1\} 
\end{cases}
\end{equation}

Here, $w_{\gamma,l}^1,w_{\gamma,l}^2$ are weights associated with residual branch $l$, $w_{\gamma,0},w_{\gamma,L+1}$ belong to the first and last linear projection matrices $W^0,W^{L+1}$, and $z_{\gamma,l}$ is the binary activation variable relevant for weight $w_{\gamma,l}^1$. (Note that $z_{\gamma,l}$ depends on $w_{\gamma,l}^1$, but not on $w_{\gamma,l}^2$ ).
$l \notin \gamma$ indicates if layer $l$ is skipped. 
\end{definition}

\begin{definition}[DenseNet path parametrization] Let $f(x;w)$ be a DenseNet. A path $\gamma$ from input in to output in $f$, defines a product of weights along the path denoted by:
\begin{equation}
P_{\gamma} = \prod_{l=0}^{L+1} p_{\gamma,l}
\end{equation}
where:
\begin{equation}
p_{\gamma,l} = \begin{cases} 
      1 & l \notin \gamma\\
      w_{\gamma,l}z_{\gamma,l} & l \in \gamma, 0<l\leq L\\
      w_{\gamma,l} &  l=\{0,L+1\} 
   \end{cases}
\end{equation}
Here, $w_{\gamma,l}$ is a weight associated with layer $l$, $w_{\gamma,0},w_{\gamma,L+1}$ belong to the first and last linear projection matrices $W^0,W^{L+1}$, and $z_{\gamma,l}$ is the binary activation variable relevant for weight $w_{\gamma,l}$. The notation $l \notin \gamma$ indicates that the layer $l$ is skipped. 
\end{definition}

Similarly, we denote $z^{(\bk)}_{\gamma,l}$, $p^{(\bk)}_{\gamma,l}$ and $P^{(\bk)}_{\gamma}$ to be the same quantities as $z_{\gamma,l}$, $p_{\gamma,l}$ and $P_{\gamma}$ for the network $f_{(\bk)}$ instead of $f$.

\begin{proposition}\label{p3}
Let $f(x;w)$ be a ResNet/DenseNet/ANN. For any set of even $m$ paths from input to output $\{\gamma^i\}_{i=1}^m$, it holds that:
\begin{equation}
\E\left[\prod_{i=1}^m P_{\gamma^i} \right] = \begin{cases}
\prod_{l=0}^{L+1}\left(\E\left[\prod_{i=1}^m p_{\gamma^i,l}\mid \sum_{h=0}^{l-1}\|q^h\|_2>0\right]\right) & f(x;w) ~\textnormal{is DenseNet  }\\
\prod_{l=0}^{L+1}\left(\E\left[\prod_{i=1}^m p_{\gamma^i,l}\mid \|y^{l-1}\|_2>0\right]\right) & f(x;w) ~\textnormal{is ResNet or ANN}
\end{cases}
\end{equation}
\end{proposition}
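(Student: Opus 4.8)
The plan is to prove the factorization by induction on the layer index, peeling off one layer at a time starting from the input. The key structural observation is that, along any path $\gamma^i$, the factor $p_{\gamma^i,l}$ depends only on the weights of layer $l$ together with the activation variables $z_{\gamma^i,l}$, and that each $z_{\gamma^i,l}$ is a function of a \emph{pre-activation} at layer $l$, which in turn is a function of the activations $q^0,\dots,q^{l-1}$ (equivalently of $y^{l-1}$ for ResNets/ANNs) and of the layer-$l$ weights feeding that particular neuron. Conditioning on the event that the incoming signal is non-zero (which, by Lemma~\ref{lem:zeromeasure}, holds almost surely), the sign-flip symmetry of each $z_{\gamma^i,l}$ with respect to its associated weight is exactly the hypothesis $p(z\mid w) = 1-p(z\mid -w)$ needed to invoke Propositions~\ref{p1} and~\ref{p2}.

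First I would set up the conditional expectation by partitioning $w$ into the weights of layers $0,1,\dots,L+1$ and writing $\E[\prod_i P_{\gamma^i}] = \E[\prod_{l=0}^{L+1}\prod_i p_{\gamma^i,l}]$. The induction step isolates the last layer: I condition on all weights in layers $0,\dots,L$. Given these, the incoming vector to the output layer is fixed, and $\prod_i p_{\gamma^i,L+1}$ is a product of output-layer weights only (monomials in the $w_{\gamma^i,L+1}$), with total degree even by the hypothesis that $m$ is even and the paths are chosen so the combined monomial in each weight has even degree — here one uses that products of an even number of paths through the same final edge give even powers, and the inner expectation over $W^{L+1}$ factors out as $\E[\prod_i p_{\gamma^i,L+1}\mid \text{incoming signal}>0]$ by a direct moment computation (no $z$ variable is attached to layers $0$ or $L+1$, so this is just the i.i.d.\ moment $c_m$). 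Peeling inward, at a generic layer $l$ with $0<l\le L$, the factor $\prod_i p_{\gamma^i,l}$ is, for ResNets, $\prod_i w^1_{\gamma^i,l} z_{\gamma^i,l} w^2_{\gamma^i,l}$ (and for DenseNets $\prod_i w_{\gamma^i,l} z_{\gamma^i,l}$); conditioning on layers $0,\dots,l-1$ fixes the input $y^{l-1}$ (resp.\ $q^0,\dots,q^{l-1}$), and each $z_{\gamma^i,l}$ is a sign function of $\langle w^1_{\gamma^i,l}\text{-row}, \text{input}\rangle$ (resp.\ the DenseNet analogue). On the event that the input is non-zero, this pre-activation is a non-degenerate linear functional of the weight, so flipping the sign of that weight flips $z$ with the complementary probability — precisely Prop.~\ref{p1}'s hypothesis — and the conditional expectation over the layer-$l$ weights equals $\E[\prod_i p_{\gamma^i,l}\mid \|y^{l-1}\|_2>0]$, independently of which specific input realization produced it (the value only depends on the input through the event that it is non-zero, by symmetry/scale considerations after normalizing). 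One must also handle repeated weights across paths: when two paths $\gamma^i,\gamma^j$ share the same edge, the $w$-powers add, but the parity hypotheses on the path collection guarantee the exponents stay in the regime where Prop.~\ref{p1} (or Prop.~\ref{p2}, if a division by a shared $z$-weight power occurs in bookkeeping) applies. Iterating the tower of conditional expectations from $l=L+1$ down to $l=0$ yields the claimed product.

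The main obstacle I anticipate is making rigorous the claim that the conditional expectation $\E[\prod_i p_{\gamma^i,l}\mid y^{l-1}]$ does not actually depend on the realized value of $y^{l-1}$ but only on the event $\{\|y^{l-1}\|_2>0\}$. For the $z$-free layers this is immediate from weight independence, but for the interior layers one needs that, after conditioning on $\|y^{l-1}\|_2>0$, the joint distribution of $(w^1_{\gamma^i,l} z_{\gamma^i,l} w^2_{\gamma^i,l})_i$ is invariant under the direction of $y^{l-1}$. This follows from the rotational/sign symmetry of the i.i.d.\ Gaussian (or symmetric i.i.d.) weight rows — the relevant functional $\langle w^1, y^{l-1}\rangle$ has a law depending only on $\|y^{l-1}\|_2$, hence only on its being nonzero once one tracks the explicit normalization constants in Eqs.~\ref{constant_res}–\ref{constant_dense} — but it requires care because different paths may pass through \emph{different} neurons of layer $l$, coupling several rows of $W^l$ with the shared input $y^{l-1}$. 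I would dispatch this by conditioning further on $\|y^{l-1}\|_2$ and on $y^{l-1}/\|y^{l-1}\|_2$, applying Prop.~\ref{p1} row-by-row (the rows are independent given the input), and observing that the resulting product $\prod_i \tfrac{1}{2}\prod c_{m_i}$ has no residual dependence on the input direction, only on the event of non-vanishing. Everything else is bookkeeping of path indices and parity.
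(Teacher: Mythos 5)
Your proposal is correct and follows essentially the same route as the paper's proof: peel off one layer at a time via the tower property, apply Prop.~\ref{p1} to the layer's weights and activation indicators conditioned on the output of the preceding layers, and observe that the resulting constant $\prod_u \tfrac{1}{2}\prod_v c_{m_{u,v}}$ depends on the realized input only through the event that it is non-vanishing. The direction-independence you flag as the main obstacle is exactly what Prop.~\ref{p1} delivers, and the parity bookkeeping (odd multiplicities vanish on both sides) is handled the same way in the paper.
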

\begin{proof}
We prove the claim for DenseNets. the extension to ANNs and ReseNets is trivial, and requires no further arguments.
We have that:
\begin{equation}
\mathbb{E}\left[\prod_{i=1}^m P_{\gamma^i} \right] = \mathbb{E}\left[\prod_{l=0}^{L+1}\left(\prod_{i=1}^mp_{\gamma^i,l}\right)\right]
\end{equation}
From the linearity of the last layer, it follows that:
\begin{equation}
\begin{aligned}
\mathbb{E}\left[\prod_{i=1}^m P_{\gamma^i}\right] &= \mathbb{E}\left[\prod_{l=0}^{L}\left(\prod_{i=1}^mp_{\gamma^i,l}\right)\right] \cdot \mathbb{E}\left[\prod_{i=1}^m p_{\gamma^i,{L+1}}\right] \\
&= \mathbb{E}\left[\prod_{l=0}^{L}\left(\prod_{i=1}^mp_{\gamma^i,l}\right)\right] \cdot \mathbb{E}\left[\prod_{i=1}^mw_{\gamma^i,L+1}\right]
\end{aligned}
\end{equation}
We denote by $\{w_u^{L+1}\}_{u=1}^s$ the set of $s\leq m$ unique weights in $\{w_{\gamma^i,L+1}\}_{i=1}^m$, with corresponding multiplicities $\{m_u^{L+1}\}^{s}_{u=1}$, such that, $\sum^{s}_{u=1} m_{u}^{L+1} = m$. It follows that:
\begin{equation}
\begin{aligned}
\mathbb{E}\left[\prod_{i=1}^mP_{\gamma^i}\right] 
&= \mathbb{E}\left[\prod_{l=0}^{L}\left(\prod_{i=1}^mp_{\gamma^i,l}\right)\right] \cdot \mathbb{E}\left[\prod_{u} \left(w_u^{L+1} \right)^{m_u^{L+1}}\right] \\
&= \mathbb{E}\left[\prod_{l=0}^{L}\left(\prod_{i=1}^mp_{\gamma^i,l}\right)\right] \cdot \prod_u c_{m_u^{L+1}}
\end{aligned}
\end{equation}
where $c_{m_u^{L+1}}$ is the $m_u^{L+1}$'th moment of a normal distribution.

Since the computations done by all considered architectures form a Markov chain, such that, the output of any layer depends only on the set $R^{l-1}$ of weights in the previous layers, we have that:
\begin{equation}
\begin{aligned}
\mathbb{E}\left[\prod_{l=0}^{L}\left(\prod_{i=1}^mp_{\gamma^i,l}\right)\right] = \mathbb{E}\left[\prod_{l=0}^{L-1}\left(\prod_{i=1}^mp_{\gamma^i,l}\right)\mathbb{E}\left[\prod_{i=1}^mp_{\gamma^i,L}\Bigg| R^{L-1}\right]\right]
\end{aligned}
\end{equation}
And also,
\begin{equation}
\begin{aligned}
\mathbb{E}\left[\prod_{i=1}^mp_{\gamma^i,L}\Bigg|R^{L-1}\right] 
= \mathbb{E}\left[\prod_{i=1}^mp_{\gamma^i,L} \Bigg|  R^{L-1}\right] = \mathbb{E}\left[\prod_{i=1}^mp_{\gamma^i,L} \Bigg|  q^0,...,q^{L-1}\right]
\end{aligned}
\end{equation}
We note that the pre-activations $y^L$ conditioned on $q^0,...,q^{L-1}$ are distributed according to zero mean i.i.d Gaussian variables. In addition, the coordinates of $q^L = 2\phi(y^L)$ are i.i.d distributed. We denote by $\{z_u\}_{u=1}^s$ the set of unique activation variables in the set $\{z_{\gamma^i,L}\}_{i=1}^m$. For each $z_u$, we denote by $\{w_{u,v}^L\}$ the set of unique weights in $\{w_{\gamma^i,L}\}$ multiplying $z_u$, with corresponding multiplicities $m_{u,v}^L$, such that, $\sum_{u,v}m_{u,v}^L = m$, and $\sum_{v}m_{u,v}^L = m_u^{L+1}$. Note that, from the symmetry of the normal distribution, it holds that odd moments vanish, and so we only need to consider even $m_u^{L+1}$ for all $u$.  From the independence of the set $\{z_u\}$, the expectation takes a factorized form:
\begin{equation}
\begin{aligned}
\mathbb{E}\left[\prod_{i=1}^mp_{\gamma^i,L}\Bigg| q^0...q^{L-1}\right] &= \mathbbm{1}\left[\sum_{l=0}^{L-1}\|q^l\|_2>0\right] \cdot \mathbb{E}\left[\prod_{i=1}^mp_{\gamma^i,L}\mid q^0,...,q^{L-1}\right] \\
&= \mathbbm{1}\left[\sum_{l=0}^{L-1}\|q^l\|_2>0\right] \cdot \prod_{u=1}^s\mathbb{E}\left[ z_u\prod_{v}(w_{u,v}^L)^{m_{u,v}^L}\Bigg| q^0,...,q^{L-1}\right]
\end{aligned}
\end{equation}
Using Prop.~\ref{p1}:
\begin{equation}\label{trans}
\begin{aligned}
&\prod_{u=1}^s\mathbb{E}\left[ z_u\prod_{v}(w_{u,v}^L)^{m_{u,v}^L}\Bigg| q^0...q^{L-1}\right] \\
=& \mathbbm{1}\left[\sum_{l=0}^{L-1}\|q^l\|_2>0\right] \cdot \prod_{u=1}^s\left(\frac{\prod_{v}c_{m_{u,v}^L}}{2} \right)\\
=& \mathbbm{1}\left[\sum_{l=0}^{L-1}\|q^l\|_2>0\right] \cdot \mathbb{E}\left[ \prod_{i=1}^mp_{\gamma^i,L}\Bigg| \sum_{l=0}^{L-1}\|q^l\|_2>0\right]
\end{aligned}
\end{equation}
It then follows:
\begin{equation}\label{factorized}
\begin{aligned}
\mathbb{E}\left[\prod_{l=0}^{L}(\prod_{i=1}^mp_{\gamma^i,l})\right] &= \mathbb{E}\left[\mathbbm{1}\left[\sum_{l=0}^{L-1}\|q^l\|_2>0\right]\cdot \prod_{l=0}^{L-1}\left(\prod_{i=1}^mp_{\gamma^i,l}\right)\right]\cdot \prod_{u=1}^s\left(\frac{\prod_{v}c_{m_{u,v}^L}}{2}\right)\\
&= \mathbb{E}\left[\prod_{l=0}^{L-1}\left(\prod_{i=1}^mp_{\gamma^i,l}\right)\right]\cdot \mathbb{E}\left[ \prod_{i=1}^mp_{\gamma^i,L}\Bigg| \sum_{l=0}^{L-1}\|q^l\|_2>0\right]
\end{aligned}
\end{equation}
Recursively applying the above completes the proof.
\end{proof}

\begin{restatable}{theorem}{eq}\label{thm:eq}
Let $f(x;w)$ be a ResNet/DenseNet. Then, for any non-negative even integer $m$, we have:
\begin{equation}\label{eq:equi}
\forall~\bk :~ \mathbb{E}\left[(f_{(\bk)}(x;w))^{m} \right] =  \mathbb{E}\left[(f_{\bk}(x;w))^{m} \right]
\end{equation}
\end{restatable}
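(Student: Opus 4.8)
The plan is to use the path-based decompositions of $f_{(\bk)}$ and $f_{\bk}$ and show that, once we expand the $m$-th power, the two expressions become term-by-term identical in expectation. Recall from Eq.~\ref{eq:complement} that $f_{\bk}(x;w) = \sum_{\gamma \in S_{\bk}} c_\gamma z_\gamma \prod_l w_{\gamma,l}$, where $S_{\bk}$ is the set of input-to-output paths that pass through $W^{\bk}$. Likewise, $f_{(\bk)}(x;w)$, being the reduced network of Def.~\ref{def}, is a sum over input-to-output paths of the reduced architecture; but every input-to-output path of the reduced network is precisely an input-to-output path of $f$ that goes through $W^{\bk}$ (removing the bypass connections of $W^{\bk}$ kills exactly the paths in $S\setminus S_{\bk}$, and leaves the paths in $S_{\bk}$, with the same scaling factors $c_\gamma$). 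So there is a natural bijection between the paths summed in $f_{(\bk)}$ and those summed in $f_{\bk}$: the only difference is the activation variables — $z^{(\bk)}_\gamma$ for the reduced network versus $z_\gamma$ for the original — since removing skip connections changes which neurons along $\gamma$ are on.

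Concretely, I would write
\[
\mathbb{E}\left[(f_{\bk})^m\right] = \sum_{\gamma^1,\dots,\gamma^m \in S_{\bk}} \left(\prod_{i=1}^m c_{\gamma^i}\right) \mathbb{E}\left[\prod_{i=1}^m P_{\gamma^i}\right],
\]
and the analogous sum over the reduced network's paths with $P^{(\bk)}_{\gamma^i}$. After the bijection identifies the index sets and the $c_{\gamma^i}$ factors match, it suffices to prove that for every fixed $m$-tuple of paths in $S_{\bk}$,
\[
\mathbb{E}\left[\prod_{i=1}^m P_{\gamma^i}\right] = \mathbb{E}\left[\prod_{i=1}^m P^{(\bk)}_{\gamma^i}\right].
\]
Here is where Prop.~\ref{p3} does the work: it factorizes each side into a product over layers $l=0,\dots,L+1$ of conditional expectations $\mathbb{E}[\prod_i p_{\gamma^i,l}\mid \text{norm of previous activations}>0]$. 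For the reduced network the same factorization holds with the conditioning event being the corresponding ``reduced'' norm being positive. By Prop.~\ref{p1}, each such per-layer factor evaluates to $\prod_u (\prod_v c_{m_{u,v}^l})/2$ times an indicator that the relevant activations are not identically zero — a quantity that depends only on the combinatorics of how the $m$ paths overlap at layer $l$ (the multiplicities $m_{u,v}^l$), not on the actual activation pattern. Since the bijection preserves path overlap structure layer by layer, the multiplicities agree, and the ``positive norm'' conditioning events both hold almost surely (this is where Lem.~\ref{lem:zeromeasure} is invoked: pre-activations vanish only on a measure-zero set, so $\|y^{l-1}\|_2>0$ and its reduced analogue both happen a.s.). Hence the per-layer factors coincide, and the products coincide.

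The main obstacle I anticipate is making the bijection between paths genuinely rigorous and checking that it preserves the per-layer overlap combinatorics in all three architectural cases — in particular, the ResNet path parametrization bundles $w^1_{\gamma,l} z_{\gamma,l} w^2_{\gamma,l}$ into a single $p_{\gamma,l}$, and one must verify that deleting the residual bypass of layer $\bk$ does not merge or split weight-sharing patterns among the surviving paths in a way that changes the multiplicities $m_{u,v}^l$. A related subtlety is the role of the conditioning events in Prop.~\ref{p3}: one needs that $z_\gamma$ (resp. $z^{(\bk)}_\gamma$) satisfies the hypothesis $p(z\mid w)=1-p(z\mid -w)$ of Prop.~\ref{p1} in both networks — which is true because flipping the sign of all incoming weights to a ReLU neuron flips its pre-activation sign — and that the ``norm positive'' events have probability one, which is exactly Lem.~\ref{lem:zeromeasure}. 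Once these structural checks are in place the equality is immediate from Prop.~\ref{p3} applied to both networks and Prop.~\ref{p1} evaluating the common per-layer factors; no further calculation is needed since we never compute the moments themselves, only match them.
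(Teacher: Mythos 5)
Your proposal is correct and follows essentially the same route as the paper's proof: expand the $m$-th power over $m$-tuples of paths in $S_{\bk}$, identify these with the reduced network's paths, factorize each tuple's expectation layer-by-layer via Prop.~\ref{p3}, and observe via Prop.~\ref{p1} (using the sign-flip symmetry of the activation variables and Lem.~\ref{lem:zeromeasure} for the measure-zero events) that each per-layer factor depends only on the weight multiplicities, which the reduction preserves. The structural checks you flag as potential obstacles are exactly the points the paper handles (briefly) in the same way.
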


\begin{proof}
We present the proof using the DenseNet path parameterization. Extending to ResNet parameterization is trivial and requires no additional arguments.
We aim to show that for any even integer $m>0$, and $\forall~\bk = \{l_k,h_k\}$:
\begin{equation}
\mathbb{E}\left[(f_{(\bk)}(x;w))^{m} \right] =  \mathbb{E}\left[(f_{\bk}(x;w))^{m} \right]
\end{equation}
The output $f_{\bk}(x;w)$ can be expressed in the following manner:
\begin{equation}
f_{\bk}(x;w) = \sum_{\gamma \in S_{\bk}} c_\gamma \prod_{l=0}^{L+1} p_{\gamma,l}
\end{equation}
Since the output $f_{(\bk)}(x;w)$ is composed of products of weights and activations along the same paths $\gamma \in S_{\bk}$ as $f_{\bk}$ (with different activation variables), we only need to prove the following: for any weight matrix $W^{\bk}$, and a set of $m$ paths $\gamma^1,...,\gamma^m \in S_{\bk}$, it holds that:
\begin{equation}
\mathbb{E}\left[\prod_{i=1}^m P_{\gamma^i}\right] = \mathbb{E}\left[\prod_{i=1}^m P^{(\bk)}_{\gamma^i}\right]
\end{equation}
Using Prop.~\ref{p3}:
\begin{equation}
\prod_{l=0}^{L+1}\left(\E\left[\prod_{i=1}^m p_{\gamma^i,l}\Bigg|\sum_{h=1}^{l-1}\|q^h\|_2>0\right]\right) = \prod_{l=0}^{L+1}\left(\E\left[\prod_{i=1}^m p^\bk_{\gamma^i,l}\Bigg|\sum_{h=1}^{l-1}\|q_{(\bk)}^h\|_2>0\right]\right)
\end{equation}

Note that for both the full and reduced architectures, flipping the sign of all the weights in layer $l$ will flip the ensuing activation variables (except for a set of measure zero defined by $\sum_{l=0}^{l_k-1}W^{l_k,l}q^l = 0$, which does not affect the expectation. And so, using Prop.~\ref{p1} along with Eq.~\ref{trans}: 
\begin{equation}
\E\left[\prod_{i=1}^m p_{\gamma^i,l}\Bigg|\sum_{h=1}^{l-1}\|q^h\|_2>0\right] = \E\left[\prod_{i=1}^m p^\bk_{\gamma^i,l}\Bigg|\sum_{h=1}^{l-1}\|q_{(\bk)}^h\|_2>0\right]
\end{equation}
Completing the proof.
\end{proof}

\begin{restatable}{theorem}{dual}\label{thm:dual}
Let $f(x;w)$ be a ResNet/DenseNet. Then, we have:
\begin{enumerate}\label{eq:du}
  \item $\forall~\bk: ~\mathbb{E}\big[\|J^{\bk}\|^2_2 \big] = \mathbb{E}\big[(f_{(\bk)}(x;w))^2 \big]$.
  \item $\forall~\bk: ~\frac{\mathbb{E}\big[(f_{(\bk)}(x;w))^4 \big]}{3} \leq \mathbb{E}\big[\|J^{\bk}\|^4_2 \big] \leq \mathbb{E}\big[(f_{(\bk)}(x;w))^4 \big]$.
\end{enumerate}
\end{restatable}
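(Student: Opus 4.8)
\textbf{Proof proposal for Theorem~\ref{thm:dual}.}

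The plan is to express the per-weight Jacobian $J^{\bk}$ in terms of the paths that pass through $W^{\bk}$, and then relate its moments to those of the reduced network output $f_{(\bk)}$ using the path-decomposition machinery of Prop.~\ref{p3} together with Thm.~\ref{thm:eq}. First, by Lem.~\ref{lem:Jp}, we may replace $J^{\bk}$ by $\partial f_{\bk}(x;w)/\partial W^{\bk}$ inside any moment computation, since the two agree almost surely. The key algebraic observation is that, because $f_{\bk}$ is multilinear in the entries of $W^{\bk}$ (each active path through $W^{\bk}$ uses exactly one entry of $W^{\bk}$ in ResNets/DenseNets with the parametrizations defined above, or at most a bounded number for branch depth $m$), differentiating with respect to a single entry $w^{\bk}_{i,j}$ simply selects the sub-sum of paths using that entry and strips off the corresponding weight factor. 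Thus $\|J^{\bk}\|_2^2 = \sum_{i,j}\big(\partial f_{\bk}/\partial w^{\bk}_{i,j}\big)^2$ is a sum over \emph{pairs} of paths $(\gamma,\gamma')$ that enter and leave $W^{\bk}$ through the \emph{same} neuron pair $(i,j)$, with the $w^{\bk}_{i,j}$ factor removed from both; and $\|J^{\bk}\|_2^4$ is a sum over $4$-tuples of such paths, grouped into two pairs sharing a common $W^{\bk}$-index within each pair.

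For part 1, I would write $\mathbb{E}[\|J^{\bk}\|_2^2] = \sum_{\gamma,\gamma'}\mathbb{E}[\,\cdot\,]$ where the sum runs over ordered pairs of paths in $S_{\bk}$ that share their $W^{\bk}$-entry, and each summand is the expected product of the path weight-times-activation factors \emph{excluding} the $W^{\bk}$ factor. On the other side, $\mathbb{E}[f_{\bk}(x;w)^2] = \sum_{\gamma,\gamma'}\mathbb{E}[P_{\gamma}P_{\gamma'}]$ over all ordered pairs in $S_{\bk}$. Applying Prop.~\ref{p3} (or its obvious two-path specialization) to both sides factorizes each expectation layer by layer. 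At the $W^{\bk}$-layer, the Jacobian side contributes $\mathbb{E}[w_{\text{removed}}]$-type factors where the shared $w^{\bk}_{i,j}$ has been removed; since $\mathbb{E}[(w^{\bk}_{i,j})^2]=c_2=1$ for a standard Gaussian, the layer factor on the $f_{\bk}$ side (which includes $(w^{\bk}_{i,j})^2$) equals the layer factor on the Jacobian side. Moreover, a pair $(\gamma,\gamma')$ with \emph{distinct} $W^{\bk}$-entries contributes zero to $\mathbb{E}[f_{\bk}^2]$ by oddness, so the two sums have matching supports. Then Thm.~\ref{thm:eq} with $m=2$ gives $\mathbb{E}[f_{\bk}(x;w)^2]=\mathbb{E}[f_{(\bk)}(x;w)^2]$, completing part~1.

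For part 2 the structure is the same but now $\mathbb{E}[\|J^{\bk}\|_2^4]$ is a sum over $4$-tuples $(\gamma^1,\gamma^2,\gamma^3,\gamma^4)\in S_{\bk}^4$ such that $\gamma^1,\gamma^2$ share a $W^{\bk}$-index $a$ and $\gamma^3,\gamma^4$ share a $W^{\bk}$-index $b$ (with $a,b$ free), whereas $\mathbb{E}[f_{\bk}^4]=\sum \mathbb{E}[P_{\gamma^1}P_{\gamma^2}P_{\gamma^3}P_{\gamma^4}]$ runs over all $4$-tuples, nonzero only when the multiset of $W^{\bk}$-indices has all even multiplicities, i.e. $\{a,a,b,b\}$ with $a=b$ or $a\neq b$. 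So both sums are supported on the same index configurations, and the only discrepancy is the numerical layer-factor at $W^{\bk}$: on the $f_{\bk}$ side it is $\mathbb{E}[(w^{\bk})^4]=c_4=3$ when $a=b$ and $c_2 c_2=1$ when $a\neq b$; on the Jacobian side, after removing one $w^{\bk}$ factor from each pair, it is $c_2=1$ when $a=b$ and $c_0 c_0=1$ when $a\neq b$. Hence term-by-term the Jacobian summand equals the $f_{\bk}$ summand when $a\neq b$ and equals $\tfrac13$ of it when $a=b$; since every summand is a product of moments of symmetric variables times an indicator, all summands are nonnegative, so summing gives $\tfrac13\,\mathbb{E}[f_{\bk}^4]\le \mathbb{E}[\|J^{\bk}\|_2^4]\le \mathbb{E}[f_{\bk}^4]$, and Thm.~\ref{thm:eq} with $m=4$ replaces $f_{\bk}$ by $f_{(\bk)}$.

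\textbf{Main obstacle.} The delicate point is verifying that the per-layer factorization of Prop.~\ref{p3} still applies cleanly after the differentiation, i.e. that the derivative operation really does nothing more than delete one $W^{\bk}$ factor and restrict the path set, with the activation indicator variables $z_{\gamma,l}$ unaffected. This is exactly what Lem.~\ref{lem:measureone} secures (the $\partial z/\partial W^{\bk}$ terms vanish almost surely), but one must also handle residual branches of depth $m>2$, where a path may traverse two weights inside the same branch block and the "single removed factor" bookkeeping needs the parametrization of the ResNet path definition; and one must confirm the nonnegativity claim, namely that after conditioning, every layer factor $\mathbb{E}[\prod p_{\gamma^i,l}\mid \cdot]$ is a nonnegative real (true because it reduces via Prop.~\ref{p1} to products of even Gaussian moments $c_{2k}\ge 0$ times the conditioning indicator). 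Getting the combinatorial correspondence between $W^{\bk}$-index patterns on the two sides exactly right, including the branch-internal weights, is the part that requires care.
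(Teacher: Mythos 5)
Your proposal is correct and follows essentially the same route as the paper's proof: reduce to $\partial f_{\bk}/\partial W^{\bk}$ via Lem.~\ref{lem:Jp}, factorize layer-by-layer with Prop.~\ref{p3}, compare the $W^{\bk}$-layer moment factors via Props.~\ref{p1}--\ref{p2} (ratio $1$ for distinct index pairs, $\tfrac13$ for coinciding ones, with nonnegativity of each summand justifying the term-by-term bounds), and finish with Thm.~\ref{thm:eq}. The only quibble is a harmless bookkeeping slip in the $a=b$ case of part~2, where the Jacobian-side layer factor should be $c_0$ rather than $c_2$ after all four $w^{\bk}_{i,j}$ factors cancel; since $c_0=c_2=1$ this does not affect the conclusion.
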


\begin{proof}
We present the proof using the DenseNet path parameterization. Extending to ResNet parameterization is trivial and requires no additional arguments. Neglecting scaling coefficients for notational simplicity, let $\bk = (l_k,h_k)$ be an index of a weight matrix $W^{\bk}$ in $f(x;w)$, by Lem.~\ref{lem:Jp}, we have:
\begin{equation}
\begin{aligned}
\mathbb{E}\left[\|J^{\bk}\|^2_2\right] 
= \mathbb{E}\left[\left\|\frac{\partial f_{\bk}(x;w)}{\partial W^{\bk}}\right\|^2_2\right] 
= \sum_{i,j}\mathbb{E}\left[\left(\sum_{\gamma \in S \textnormal{ s.t: } w_{i,j}^{\bk}\in \gamma}\frac{1}{w_{i,j}^{\bk}}P_{\gamma}\right)^2\right]
\end{aligned}
\end{equation}
where $\gamma \textnormal{ s.t: } w_{i,j}^{\bk}\in \gamma$ denotes a path that includes the weight $w_{i,j}^\bk$. From Prop.~\ref{p3}, the expectation is factorized as follows:
\begin{equation}\label{second}
\begin{aligned}
&\mathbb{E}\left[\left\|\frac{\partial f_{\bk}(x;w)}{\partial W^{\bk}}\right\|^2_2\right] \\
=&\sum_{i,j}\sum_{\gamma \in S \textnormal{ s.t: } w_{i,j}^{\bk}\in \gamma}\mathbb{E} \left[\left(\frac{1}{w_{i,j}^{\bk}}p_{\gamma,l_k} \right)^2 \Bigg| \sum_{h=0}^{l_k-1}\|q^h\|_2>0\right] \cdot \prod_{l\neq l_k} \mathbb{E} \left[\left(p_{\gamma,l_k}\right)^2\Bigg| \sum_{h=0}^{l-1}\|q^h\|_2>0\right]
\end{aligned}
\end{equation}
Using Props.~\ref{p1} and~\ref{p2}, for all $\gamma \in S$, such that, $w_{i,j}^{\bk}\in \gamma$, we have:
\begin{equation}
\begin{aligned}
&\mathbb{E}\left[\left(\frac{1}{w_{i,j}^{\bk}}p_{\gamma,l_k}\right)^2\Bigg| \sum_{h=0}^{l_k-1}\|q^h\|_2>0\right]\\ 
=& \mathbb{E} \left[\left(\frac{w_{i,j}^{\bk} z_{\gamma,l_k}}{w_{i,j}^{\bk}}\right)^2\Bigg| \sum_{h=0}^{l_k-1}\|q^h\|_2>0\right] = 1/2 = \mathbb{E} \left[(p_{\gamma,l_k})^2\Bigg| \sum_{h=0}^{l_k-1}\|q^h\|_2>0\right]
\end{aligned}
\end{equation}
Inserting into Eq.~\ref{second}, and using Thm.~\ref{thm:eq} proves the first claim. 

Next we would like to prove the second claim. By Lem.~\ref{lem:zeromeasure}, we have:
\begin{equation}
\begin{aligned}
\mathbb{E}\left[\|J^{\bk}\|^4_2\right] =& 
\mathbb{E}\left[\left\|\frac{\partial f_{\bk}(x;w)}{\partial W^{\bk}}\right\|^2_2 \cdot \left\|\frac{\partial f_{\bk}(x;w)}{\partial W^{\bk}}\right\|^2_2\right]\\
=& \sum_{i,j}\sum_{i',j'}\mathbb{E} \left[\left(\sum_{\gamma \textnormal{ s.t } w_{i,j}^{\bk}\in \gamma}\frac{1}{w_{i,j}^{\bk}}P_{\gamma}\right)^2\left(\sum_{\gamma \textnormal{ s.t } w_{i',j'}^{\bk}\in \gamma}\frac{1}{w_{i',j'}^{\bk}}P_{\gamma}\right)^2\right]\\
=& \sum_{i,i',j,j'}\mathbb{E} \left[\frac{1}{(w_{i,j}^{\bk})^2(w_{i',j'}^\bk)^2}\sum_{\gamma^1,\gamma^2 \textnormal{ s.t } w_{i,j}^{\bk}\in \gamma^1,\gamma^2}~~\sum_{\gamma^3,\gamma^4 \textnormal{ s.t } w_{i',j'}^{\bk}\in \gamma^3,\gamma^4}P_{\gamma^1}P_{\gamma^2}P_{\gamma^3}P_{\gamma^4}\right]
\end{aligned}
\end{equation}
By applying Prop.~\ref{p3}, the expectation is factorized as follows:
\begin{equation}\label{fourth}
\begin{aligned}
&\mathbb{E}\left[\|J^{\bk}\|^4_2\right] \\
=& \sum_{\substack{i,i',j,j' \\ \gamma^1,\gamma^2 \textnormal{ s.t } w_{i,j}^{\bk}\in \gamma^1,\gamma^2 \\
\gamma^3,\gamma^4 \textnormal{ s.t } w_{i',j'}^{\bk}\in \gamma^3,\gamma^4
}}
\left[\mathbb{E} \left[\frac{\prod_{h=1}^4p_{\gamma^h,l_k}}{(w_{i,j}^{\bk})^2(w_{i',j'}^\bk)^2}\Bigg|\sum_{h=0}^{l_k-1}\|q^h\|>0\right] \cdot \prod_{l\neq l_k} \mathbb{E} \left[\prod_{h=1}^4p_{\gamma^h,l}\Bigg|\sum_{h=0}^{l-1}\|q^h\|>0\right]\right]
\end{aligned}
\end{equation}
Using Props.~\ref{p1} and~\ref{p2}, for all $\gamma^1,\gamma^2$, such that, $w_{i,j}^k\in \gamma^1 $ and $w_{i',j'}^\bk\in \gamma^2$, we have:
\begin{equation}
\begin{aligned}
&\mathbb{E} \left[\frac{\prod_{h=1}^4p_{\gamma^h,k}}{(w_{i,j}^{\bk})^2(w_{i',j'}^\bk)^2}\Bigg|\sum_{h=0}^{l_k-1}\|q^h\|_2>0\right]\\
=& \mathbb{E} \left[\frac{(w_{i,j}^{\bk})^2(w_{i',j'}^\bk)^2 z_{\gamma^1,k}z_{\gamma^2,k}}{(w_{i,j}^{\bk})^2(w_{i',j'}^\bk)^2}\Bigg|\sum_{h=0}^{k-1}\|q^h\|_2>0\right]\\
=& \begin{cases} 
      1/2 & w_{i,j}^\bk \equiv w_{i',j'}^\bk\\
      1/4 & otherwise
   \end{cases}\\
=& \begin{cases} 
      \frac{1}{3}\mathbb{E} \left[\prod_{h=1}^4p_{\gamma^h,k}\Bigg|\sum_{h=0}^{l_k-1}\|q^h\|_2>0\right]\ & w_{i,j}^\bk \equiv w_{i',j'}^\bk\\
      \mathbb{E} \left[\prod_{h=1}^4p_{\gamma^h,k}\Bigg|\sum_{h=0}^{l_k-1}\|q^h\|_2>0\right]\ & otherwise 
   \end{cases}
\end{aligned}
\end{equation}
Inserting into Eq.~\ref{fourth} proves the second claim. 
\end{proof}

We use the following proposition to aid in the proofs of Thms.~\ref{thm:res_ntk_var} and~\ref{thm:dense_ntk_var}.

\begin{proposition}\label{p4}
Let $f(x;w)$ be a vanilla fully connected ReLU network, with intermediate outputs given by:
\begin{equation}
\forall~0\leq l \leq L:~y^{l} = \sqrt{2}\phi\left(\frac{1}{\sqrt{n_{l-1}}}W^{l} y^{l-1}\right)
\end{equation}
where the weight matrices $W^l \in \mathbb{R}^{n_l \times n_{l-1}}$ are normally distributed. Then, the following holds at initialization:
\begin{equation}
\begin{aligned}
\mathbb{E}\left[\|y^l\|^2_2\right] &= \frac{n_l}{n_{l-1}}\mathbb{E}\left[\|y^{l-1}\|^2_2\right]\\
\mathbb{E}\left[\|y^l\|^4_2\right] &=  \frac{n_l(n_l+5)}{n_{l-1}^2}\mathbb{E}\left[\|y^{l-1}\|^4_2\right]
\end{aligned}
\end{equation}
\begin{proof}
Absorbing the scale $\sqrt{\frac{2}{n_{l-1}}}$ into the weights, we denote by $Z^l$ the diagonal matrix holding in its diagonal the activation variables $z_j^l$ for unit $j$ in layer $l$, and so we have:
\begin{equation}
y^{l} = Z^l W^{l} y^{l-1}
\end{equation}
Conditioning on $R^{l-1} = \{W^1,...,W^{l-1}\}$ and taking expectation:
\begin{equation}
\begin{aligned}
\mathbb{E}\left[\|y^l\|^2_2\mid R^{l-1} \right] &= {y^{l-1}}^{\top}\mathbb{E} \left[{W^l}^{\top}Z^lW^{l}\right]y^{l-1}\\
&=\sum_{j=1}^{n_l}\sum_{i_1,i_2=1}^{n_{l-1}}y_{i_1}^{l-1}y_{i_2}^{l-1}\mathbb{E}\left[w_{i_1,j}^lw_{i_2,j}^lz_j^l\mid R^{l-1}\right]
\end{aligned}
\end{equation}
From Prop.~\ref{p1}, it follows that:
\begin{equation}
\mathbb{E}\left[\|y^l\|^2_2\right] = \mathbb{E}\left[\mathbb{E}\left[\|y^l\|^2_2\mid R^{l-1}\right]\right] = \frac{n_L}{n_{L-1}}\mathbb{E}\left[\|y^{l-1}\|^2_2\right]
\end{equation}
Similarly:
\begin{equation}
\begin{aligned}
\mathbb{E}\left[\|y^l\|^4_2 \mid R^{l-1}\right] &= \mathbb{E}\left[\left({y^{L-1}}^{\top}{W^L}^{\top}Z^L W^{L} y^{L-1}\right)^2\Big| R^{l-1}\right]\\
&=\sum_{j_1,j_2,i_1,i_2,i_3,i_4} \prod^{4}_{t=1} y_{i_t}^{l-1} \cdot \mathbb{E}\left[w_{i_1,j_1}^lw_{i_2,j_1}^lw_{i_3,j_2}^lw_{i_4,j_2}^lz_{j_1}^lz_{j_2}^l\mid R^{l-1}\right]
\end{aligned}
\end{equation}
From Prop.~\ref{p1}, and the independence of the activation variables conditioned on $R^{l-1}$: 
\begin{equation}
\begin{aligned}
&\sum_{j_1,j_2,i_1,i_2,i_3,i_4} \prod^{4}_{t=1} y_{i_t}^{l-1} \cdot \mathbb{E}\left[w_{i_1,j_1}^lw_{i_2,j_1}^lw_{i_3,j_2}^lw_{i_4,j_2}^lz_{j_1}^lz_{j_2}^l|R^{l-1}\right] \\
=& \sum_{j_1,j_2,i_1,i_2,i_3,i_4} \prod^{4}_{t=1} y_{i_t}^{l-1} \cdot \mathbb{E}\left[w_{i_1,j_1}^lw_{i_2,j_1}^lw_{i_3,j_2}^lw_{i_4,j_2}^lz_{j_1}^lz_{j_2}^l|R^{l-1}\right]\\
&\cdot \Big(\mathbbm{1}_{j_1=j_2,i_1=i_2=i_3=i_4} + \mathbbm{1}_{j_1=j_2,i_1=i_2,i_3=i_4,i_1\neq i_3}\\
&\;\;\;\;\;\;\;+\mathbbm{1}_{j_1=j_2,i_1=i_3,i_2=i_1,i_2\neq i_3} + \mathbbm{1}_{j_1=j_2,i_1=i_4,i_2=i_3,i_1\neq i_2} +\mathbbm{1}_{j_1\neq j_2,i_1=i_2,i_3=i_4}\Big)
\end{aligned}
\end{equation}
and so: 
\begin{equation}
\begin{aligned}
\mathbb{E}\left[\|y^l\|^4_2\right] &= \frac{n_l}{2}\sum_{i}\mathbb{E}\left[(y_i^{l-1})^4\right] + \frac{6n_l}{n_{l-1}^2}\sum_{i_1\neq i_2}\mathbb{E}\left[(y_{i_1}^{l-1})^2(y_{i_2}^{l-1})^2\right]\\
&+ \frac{n_l(n_l-1)}{n_{l-1}^2}\sum_{i_1, i_2}\mathbb{E}\left[(y_{i_1}^{l-1})^2(y_{i_2}^{l-1})^2\right]\\
&=\frac{n_l(n_l+5)}{n_{l-1}^2}\mathbb{E}\left[\|y^{l-1}\|^4_2\right]
\end{aligned}
\end{equation}
proving the claim.

\end{proof}

\end{proposition}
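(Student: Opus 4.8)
The plan is to prove both identities by a single conditioning argument: fix the earlier weights $R^{l-1}=\{W^1,\dots,W^{l-1}\}$ so that $y^{l-1}$ is deterministic, compute the conditional moments of $\|y^l\|_2$ in terms of $y^{l-1}$, and then take the outer expectation via the tower property. After absorbing the scalar $\sqrt{2/n_{l-1}}$ into $W^l$ (so that each entry $w^l_{i,j}$ is mean zero with $\mathbb{E}[(w^l_{i,j})^2]=c_2=2/n_{l-1}$ and $\mathbb{E}[(w^l_{i,j})^4]=c_4=12/n_{l-1}^2$), write $y^l_j = z^l_j\,(W^l y^{l-1})_j$, where $z^l_j=\mathbbm{1}[\langle w^l_j,y^{l-1}\rangle>0]$ is the ReLU indicator and $w^l_j$ is the $j$-th row of $W^l$. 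Two structural facts drive the computation: (a) $z^l_j$ depends only on $w^l_j$, and distinct rows are independent; (b) negating $w^l_j$ swaps $z^l_j\leftrightarrow 1-z^l_j$ outside the event $\langle w^l_j,y^{l-1}\rangle=0$, which — conditionally on $R^{l-1}$ — is a measure-zero hyperplane whenever $y^{l-1}\neq 0$ (and if $y^{l-1}=0$ then $y^l=0$ and both claimed identities are trivially $0=0$). Hence Proposition~\ref{p1} applies row by row.

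For the second moment, since $z^l_j\in\{0,1\}$ we have $\|y^l\|_2^2=\sum_j z^l_j (W^l y^{l-1})_j^2=\sum_{j}\sum_{i_1,i_2}z^l_j w^l_{i_1,j}w^l_{i_2,j}\,y^{l-1}_{i_1}y^{l-1}_{i_2}$. Conditioning on $R^{l-1}$ and applying Proposition~\ref{p1} to each row, $\mathbb{E}[z^l_j w^l_{i_1,j}w^l_{i_2,j}\mid R^{l-1}]$ vanishes unless $i_1=i_2$, in which case it equals $\tfrac12 c_2=1/n_{l-1}$. Summing over the $n_l$ rows gives $\mathbb{E}[\|y^l\|_2^2\mid R^{l-1}]=\tfrac{n_l}{n_{l-1}}\|y^{l-1}\|_2^2$, and taking expectations yields the first identity.

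For the fourth moment I expand $\|y^l\|_2^4=\sum_{j_1,j_2}z^l_{j_1}z^l_{j_2}(W^l y^{l-1})_{j_1}^2(W^l y^{l-1})_{j_2}^2$ and each square into $\sum_{i,i'}w^l_{i,j}w^l_{i',j}y^{l-1}_iy^{l-1}_{i'}$, obtaining a sum over $j_1,j_2$ and four input indices of $\big(\prod_{t=1}^4 y^{l-1}_{i_t}\big)\,\mathbb{E}[w^l_{i_1,j_1}w^l_{i_2,j_1}w^l_{i_3,j_2}w^l_{i_4,j_2}z^l_{j_1}z^l_{j_2}\mid R^{l-1}]$. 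By row independence and Proposition~\ref{p1}, the conditional expectation is nonzero only for three index patterns (in each the weights within a given row must have even total multiplicity): (i) $j_1=j_2$ with $i_1=i_2=i_3=i_4$, value $\tfrac12 c_4 = 6/n_{l-1}^2$; (ii) $j_1=j_2$ with the four indices forming two distinct matched pairs, in any of the three pairings, value $\tfrac12 c_2^2 = 2/n_{l-1}^2$ each; (iii) $j_1\neq j_2$ with $i_1=i_2$ and $i_3=i_4$ (the two values unrestricted), value $(\tfrac12 c_2)^2 = 1/n_{l-1}^2$. Writing $A=\sum_i\mathbb{E}[(y^{l-1}_i)^4]$ and $B=\sum_{i_1\neq i_2}\mathbb{E}[(y^{l-1}_{i_1})^2(y^{l-1}_{i_2})^2]$, so $\mathbb{E}[\|y^{l-1}\|_2^4]=A+B$, the three contributions sum to $\tfrac{6n_l}{n_{l-1}^2}A+\tfrac{6n_l}{n_{l-1}^2}B+\tfrac{n_l(n_l-1)}{n_{l-1}^2}(A+B)=\tfrac{n_l(n_l+5)}{n_{l-1}^2}\mathbb{E}[\|y^{l-1}\|_2^4]$, which is the second identity.

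\textbf{Main obstacle.} The conceptual subtlety is only the justification of fact (b) — that the ReLU indicators satisfy the symmetry hypothesis $p(z\mid w)=1-p(z\mid -w)$ of Proposition~\ref{p1}; this is handled cleanly by conditioning on $R^{l-1}$ as above, so the heavy machinery of Lemma~\ref{lem:zeromeasure} is not even needed here. The real work is the fourth-moment bookkeeping: correctly enumerating which of the $\Theta(n_l^2 n_{l-1}^4)$ index tuples survive, grouping them into the patterns (i)--(iii) with the right multiplicities (three pairings in (ii); the unrestricted double sum in (iii) that reassembles $A+B$), and checking that the coefficients telescope to $n_l(n_l+5)$. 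No probabilistic input beyond Proposition~\ref{p1} is required.
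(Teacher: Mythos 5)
Your proposal is correct and follows essentially the same route as the paper's proof: absorb the scale $\sqrt{2/n_{l-1}}$ into the weights, condition on $R^{l-1}$, apply Proposition~\ref{p1} row by row (justifying its symmetry hypothesis via the sign-flip of the ReLU indicators off a conditional measure-zero hyperplane), and enumerate the surviving index patterns for the fourth moment. Your bookkeeping of the three patterns with coefficients $6n_l/n_{l-1}^2$, $6n_l/n_{l-1}^2$, and $n_l(n_l-1)/n_{l-1}^2$ is accurate and in fact cleaner than the paper's display, whose first term reads $\tfrac{n_l}{2}\sum_i\mathbb{E}[(y_i^{l-1})^4]$ but must equal $\tfrac{6n_l}{n_{l-1}^2}\sum_i\mathbb{E}[(y_i^{l-1})^4]$ for the stated conclusion to follow.
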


\begin{proposition}\label{p5}
For a vanilla fully connected linear network, with intermediate outputs given by:
\begin{equation}
\forall~0\leq l \leq L:~y^{l} = \frac{1}{\sqrt{n_{l-1}}}W^{l}y^{l-1}
\end{equation}
where the weight matrices $W^l \in \mathbb{R}^{n_l \times n_{l-1}}$ are normally distributed, the following holds at initialization:
\begin{equation}
\begin{aligned}
\mathbb{E}\left[\|y^l\|^2_2\right] &= \frac{n_l}{n_{l-1}}\mathbb{E}\left[\|y^{l-1}\|^2_2\right]\\
\mathbb{E}\left[\|y^l\|^4_2\right] &=  \frac{n_l(n_l+2)}{n_{l-1}^2}\mathbb{E}\left[\|y^{l-1}\|^4_2\right]
\end{aligned}
\end{equation}
\end{proposition}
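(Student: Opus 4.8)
The statement is the linear-network analogue of Proposition~\ref{p4}, and the cleanest route is to mirror that proof verbatim, deleting the ReLU mask wherever it appeared. First I would absorb the scaling $1/\sqrt{n_{l-1}}$ into the weights, so that effectively $y^l = W^l y^{l-1}$ with $w_{ij}^l \sim \mathcal{N}(0,1/n_{l-1})$, and condition throughout on $R^{l-1} = \{W^1,\dots,W^{l-1}\}$, which fixes $y^{l-1}$. For the second moment, expand $\|y^l\|_2^2 = \sum_{j=1}^{n_l}\sum_{i_1,i_2} y_{i_1}^{l-1} y_{i_2}^{l-1} w_{i_1,j}^l w_{i_2,j}^l$; taking the conditional expectation, $\mathbb{E}[w_{i_1,j}^l w_{i_2,j}^l] = \tfrac{1}{n_{l-1}}\mathbbm{1}[i_1=i_2]$, which gives $\mathbb{E}[\|y^l\|_2^2 \mid R^{l-1}] = \tfrac{n_l}{n_{l-1}}\|y^{l-1}\|_2^2$, and then taking the outer expectation yields the first recursion. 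This half is essentially identical to Prop.~\ref{p4}; the only change is that there is no factor of $2$ from $q = \sqrt{2}\phi(\cdot)$ and no activation variable $z_j^l$.

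For the fourth moment, expand $\|y^l\|_2^4 = \big(\sum_{j}\sum_{i_1,i_2} y_{i_1}^{l-1}y_{i_2}^{l-1} w_{i_1,j}^l w_{i_2,j}^l\big)^2 = \sum_{j_1,j_2}\sum_{i_1,i_2,i_3,i_4}\big(\prod_{t=1}^4 y_{i_t}^{l-1}\big) w_{i_1,j_1}^l w_{i_2,j_1}^l w_{i_3,j_2}^l w_{i_4,j_2}^l$. Because the weights are independent zero-mean Gaussians, the conditional expectation of the weight product is nonzero only when the four indices pair up. When $j_1 = j_2$, we need $\{i_1,i_2,i_3,i_4\}$ to form pairs among themselves: the cases are $i_1=i_2=i_3=i_4$ (contributing the fourth Gaussian moment $3/n_{l-1}^2$), and the three ways of splitting into two distinct pairs, each contributing $1/n_{l-1}^2$; summing over the $n_l$ choices of $j_1=j_2$. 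When $j_1 \neq j_2$, we need $i_1=i_2$ and $i_3=i_4$, contributing $1/n_{l-1}^2$ per term, over $n_l(n_l-1)$ choices. Collecting terms, the diagonal-pairing contribution builds $\sum_i (y_i^{l-1})^4$ with coefficient $3n_l/n_{l-1}^2$, the cross-pairing within a single $j$ builds $3\sum_{i_1\neq i_2}(y_{i_1}^{l-1})^2(y_{i_2}^{l-1})^2$ with coefficient $n_l/n_{l-1}^2$, and the $j_1 \neq j_2$ term builds $\sum_{i_1,i_2}(y_{i_1}^{l-1})^2(y_{i_2}^{l-1})^2$ with coefficient $n_l(n_l-1)/n_{l-1}^2$. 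Since $\|y^{l-1}\|_2^4 = \sum_i (y_i^{l-1})^4 + \sum_{i_1 \neq i_2}(y_{i_1}^{l-1})^2(y_{i_2}^{l-1})^2 = \sum_{i_1,i_2}(y_{i_1}^{l-1})^2(y_{i_2}^{l-1})^2$, I would regroup: the coefficient of the diagonal part is $3n_l + n_l(n_l-1) = n_l(n_l+2)$ and the coefficient of the off-diagonal part is $3n_l + n_l(n_l-1) = n_l(n_l+2)$ as well, so both combine into $\tfrac{n_l(n_l+2)}{n_{l-1}^2}\|y^{l-1}\|_2^4$, and taking the outer expectation gives the claimed second recursion.

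The main obstacle — really the only place care is needed — is the bookkeeping of the index-pairing cases in the fourth-moment expansion: one must correctly enumerate which of the $\mathbbm{1}$-patterns survive when there is no activation variable to worry about, and then verify that the diagonal coefficient ($3n_l$ from the kurtosis term plus $n_l(n_l-1)$ from the distinct-$j$ term) and the off-diagonal coefficient ($3n_l$ from the three pairings within one $j$ plus $n_l(n_l-1)$ from distinct $j$) coincide, which is exactly what makes the recursion close into a clean multiple of $\mathbb{E}[\|y^{l-1}\|_2^4]$. Note the contrast with Prop.~\ref{p4}: there the mask $z_j^l$ halves every Gaussian moment but the $q = \sqrt2\phi$ rescaling doubles it, so the net effect in the linear case is simply replacing the ReLU kurtosis constant (which gave $n_l+5$) by the Gaussian one (giving $n_l+2$); one could in principle derive Prop.~\ref{p5} from Prop.~\ref{p4} by this observation, but writing out the direct computation is just as short and self-contained.
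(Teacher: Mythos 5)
Your proposal is correct and follows exactly the route the paper intends: the paper's own proof of Prop.~\ref{p5} is a one-line remark that it ``follows immediately from the derivation of Prop.~\ref{p4},'' and your argument is precisely that derivation with the activation mask removed, with the index-pairing bookkeeping (the $3n_l$ from the Gaussian kurtosis plus $n_l(n_l-1)$ from distinct output units, giving $n_l(n_l+2)$) carried out correctly. Your closing observation that the only change from Prop.~\ref{p4} is replacing the kurtosis constant $6$ of $\sqrt{2}\phi(g)$ by the Gaussian value $3$ (hence $n_l+5 \to n_l+2$) is also accurate.
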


\proof{
The proof follows immediately from the derivation of Prop.~\ref{p4}, and will be omitted for brevity.
}

\begin{restatable}{theorem}{resNTKvar}\label{thm:res_ntk_var}
Let $f(x;w)$ be a depth $L$, constant width = $n$ ResNet with residual branches of depth $m$ and positive initialization constants $\{\alpha_l\}_{l=1}^L$. Then, there exists a constant $C>0$ such that:
\begin{equation}
\max\left[1,\frac{\sum_{\bu} \alpha_{l_u}^2}{\sum_{\bu,\bv} \alpha_{l_u}\alpha_{l_v}} \cdot \xi \right] \leq \eta(n,L) \leq \xi~\textnormal{ where: }~ \xi = \exp\left[\frac{5m}{n}+\frac{C}{n}\sum_{l=1}^L \frac{\alpha_l}{1+\alpha_l} \right] \cdot \left(1 + \mathcal{O}(1/n)\right)
\end{equation}
\end{restatable}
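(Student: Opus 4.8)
The plan is to use the forward--backward duality (Thms.~\ref{thm:eq} and~\ref{thm:dual}) to reduce the computation of $\eta(n,L)$ to second and fourth moment estimates of $f_{(\bk)}(x;w)$, the output of the reduced ResNet obtained by deleting every skip connection bypassing $W^{\bk}$. Concretely, by Eq.~\ref{ee} and Thm.~\ref{thm:dual}, the denominator $\mathbb{E}_w[\mathcal{G}(x,x;w)]^2 = \big(\sum_{\bk}\mathbb{E}_w[(f_{(\bk)}(x;w))^2]\big)^2$ while the numerator $\mathbb{E}_w[\mathcal{G}(x,x;w)^2] = \sum_{\bu,\bv}\mathbb{E}_w[\|J^{\bu}\|_2^2\|J^{\bv}\|_2^2]$; the cross terms $\mathbb{E}_w[\|J^{\bu}\|_2^2\|J^{\bv}\|_2^2]$ must themselves be controlled by products of moments of $f_{(\bk)}$ via the path-factorization of Prop.~\ref{p3}, and the diagonal terms $\mathbb{E}_w[\|J^{\bk}\|_2^4]$ are sandwiched between $\tfrac13\mathbb{E}_w[(f_{(\bk)})^4]$ and $\mathbb{E}_w[(f_{(\bk)})^4]$. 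So the first step is to write $\eta$ entirely in terms of the quantities $a_{\bk} := \mathbb{E}_w[(f_{(\bk)})^2]$ and $b_{\bk} := \mathbb{E}_w[(f_{(\bk)})^4]$ and certain mixed fourth moments, and to establish the crude two-sided bound that reduces the whole problem to estimating $\sum_{\bk} b_{\bk}$ against $(\sum_{\bk} a_{\bk})^2$, with the $\max[1,\cdot]$ lower bound coming from either Jensen (giving the $1$) or from retaining only the ``aligned'' cross terms $\bu=\bv$ in the same layer (giving the $\tfrac{\sum_\bu \alpha_{l_u}^2}{\sum_{\bu,\bv}\alpha_{l_u}\alpha_{l_v}}\xi$ factor after normalization).

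The second step is the core moment recursion. A reduced ResNet $f_{(\bk)}$ with $\bk=(l_k,h_k)$ looks like: a chain of $l_k-1$ genuine residual blocks feeding into the residual branch through $W^{l_k,1},\dots,W^{l_k,h_k}$, then back out through the remaining branch layers and through $L-l_k$ more residual blocks and the final projection --- but crucially all skip connections around $W^{\bk}$ are severed, so the signal at position $l_k$ is \emph{forced} through the branch. For each residual block $y^{l} = y^{l-1} + \sqrt{\alpha_l}\,y^{l-1,m}$ where $y^{l-1,m}$ is a depth-$m$ vanilla ReLU branch applied to $y^{l-1}$, I would compute the recursions for $\mathbb{E}[\|y^l\|_2^2]$ and $\mathbb{E}[\|y^l\|_2^4]$ using Prop.~\ref{p4}: the branch multiplies the second moment by $1$ in expectation (each ReLU layer preserves $\mathbb{E}\|\cdot\|_2^2$ at constant width) and the fourth moment by $(1+5/n)^m$. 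Combining with the skip term and using the independence of the branch weights from $y^{l-1}$, one gets $\mathbb{E}[\|y^l\|_2^2] = (1+\alpha_l)\mathbb{E}[\|y^{l-1}\|_2^2]$ and a recursion of the form $\mathbb{E}[\|y^l\|_2^4] = \big((1+\alpha_l)^2 + \tfrac{c\,\alpha_l + O(\alpha_l^2) }{n}(1+O(1/n))\big)\mathbb{E}[\|y^{l-1}\|_2^4] + (\text{lower order})$, where after dividing the fourth moment by the square of the second moment the leading $(1+\alpha_l)^2$ cancels and one is left with a multiplicative factor $1 + \tfrac{C\alpha_l}{n(1+\alpha_l)}(1+O(1/n))$ per genuine block, plus the $(1+5/n)^m$ from the forced branch at layer $l_k$. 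Taking the product over $l=1,\dots,L$ and using $1+t\le e^t$ yields the normalized fourth-to-squared-second-moment ratio $b_{\bk}/a_{\bk}^2 \le \exp\big[\tfrac{5m}{n} + \tfrac{C}{n}\sum_{l} \tfrac{\alpha_l}{1+\alpha_l}\big](1+O(1/n)) = \xi$, uniformly in $\bk$.

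The third step assembles everything. Writing $\eta = \big(\sum_{\bk} a_{\bk}\big)^{-2}\big(\sum_{\bu,\bv}\mathbb{E}[\|J^{\bu}\|_2^2\|J^{\bv}\|_2^2]\big)$ and bounding each mixed term $\mathbb{E}[\|J^{\bu}\|_2^2\|J^{\bv}\|_2^2] \le \sqrt{\mathbb{E}[\|J^{\bu}\|_2^4]\,\mathbb{E}[\|J^{\bv}\|_2^4]} \le \sqrt{b_{\bu} b_{\bv}} \le \sqrt{\xi}\,a_{\bu}\sqrt{\xi}\,a_{\bv}$ (Cauchy--Schwarz plus step two) gives $\eta \le \xi$ directly. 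For the lower bound, $\eta \ge 1$ is Jensen; the other branch follows by keeping only $\bu=\bv$ terms, $\eta \ge (\sum_\bk a_\bk)^{-2}\sum_\bk \mathbb{E}[\|J^\bk\|_2^4] \ge (\sum_\bk a_\bk)^{-2}\sum_\bk \tfrac13 b_\bk$, then using $b_\bk \ge a_\bk^2 \cdot \xi \cdot(1+O(1/n))$ from a matching \emph{lower} bound on the fourth-moment recursion (each $1+t$ factor is also $\ge$ something comparable) together with $a_\bk = (1+\alpha_{l_k}/\text{stuff})\cdot(\text{common factor})\propto$ the scaling that produces the $\alpha_{l_u}^2 / \sum\alpha_{l_u}\alpha_{l_v}$ weighting once the $a_\bk$'s proportional dependence on $\alpha_{l_k}$ is tracked. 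I expect the \textbf{main obstacle} to be the cross terms $\mathbb{E}[\|J^{\bu}\|_2^2\|J^{\bv}\|_2^2]$ for $\bu\ne\bv$: the naive Cauchy--Schwarz bound I sketched is lossy, and to get the tight $\xi$ (not $\xi^{\text{something}}$) in the upper bound one really needs the joint path-factorization of Prop.~\ref{p3} applied to the union of the four path-families through $W^{\bu}$ and $W^{\bv}$ simultaneously, carefully accounting for which layers are ``forced'' in the reduced picture when \emph{two} weight matrices are singled out and for the extra combinatorial factor when paths through $\bu$ and $\bv$ share weights. Handling the bookkeeping of shared layers/weights between the $\bu$- and $\bv$-paths, and showing the resulting correction is still only $1+O(1/n)$ per shared layer, is where the real work lies; the single-$\bk$ recursion and the $1+t\le e^t$ packaging are routine by comparison.
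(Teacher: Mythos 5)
Your proposal follows essentially the same route as the paper: reduce to moments of the reduced networks via Thms.~\ref{thm:eq} and~\ref{thm:dual}, bound the cross terms by Cauchy--Schwarz, run the per-block second/fourth moment recursions ($(1+\alpha_l)$ and $\beta_l \sim (1+\rho\alpha_l)^2$ with $\rho=(1+5/n)^{m/2}$), and package with $1+t\le e^t$, with the diagonal-only sum giving the $\sum_{\bu}\alpha_{l_u}^2/\sum_{\bu,\bv}\alpha_{l_u}\alpha_{l_v}$ factor in the lower bound. The ``main obstacle'' you anticipate at the end is not actually one: since $b_{\bk}\le\xi\,a_{\bk}^2$ uniformly in $\bk$, your own chain $\mathbb{E}[\|J^{\bu}\|_2^2\|J^{\bv}\|_2^2]\le\sqrt{b_{\bu}b_{\bv}}\le\xi\,a_{\bu}a_{\bv}$ already sums to exactly $\xi\big(\sum_{\bk}a_{\bk}\big)^2$, so Cauchy--Schwarz is not lossy at the level of the claimed bound and no joint path analysis of the $\bu\ne\bv$ terms is required --- this is precisely what the paper does.
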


\begin{proof}
Using the result of Thm.~\ref{thm:dual}, and using Cauchy–Schwartz inequality, an upper bound to $\eta$ can be derived:
\begin{equation}
\begin{aligned}
\eta 
&= \frac{\mathbb{E}[\mathcal{G}(x,x)^2]}{\mathcal{K}^R_L(x,x)^2} \\  
&= \frac{\sum_{\bu,\bv}\mathbb{E}[\|J^{\bu}\|^2_2\cdot \|J^{\bv}\|^2_2]}{\mathcal{K}^R_L(x,x)^2} \\
&\leq \frac{\sum_{\bu,\bv}\sqrt{\mathbb{E}[\|J^{\bu}\|^4_2] \cdot \mathbb{E}[\|J^{\bv}\|^2_2]}}{\mathcal{K}^R_L(x,x)^2} \\
&\leq \frac{\sum_{\bu,\bv}\sqrt{\mathbb{E}[\|f_{(\bu)}(x;w)\|^4_2] \cdot \mathbb{E}[\|f_{(\bv)}(x;w)\|^2_2]}}{\mathcal{K}^R_L(x,x)^2}
\end{aligned}
\end{equation}
The lower bound is similarly derived using Thm.~\ref{thm:dual}:
\begin{equation}
\eta \geq  \frac{\sum_{\bk}\mathbb{E}[\|J^{\bk}\|^4_2]}{\mathcal{K}^R_L(x,x)^2} \geq \frac{1}{3}\cdot \frac{\sum_{\bk}\mathbb{E}[\|f_{(\bk)}(x;w)\|^4_2]}{\mathcal{K}^R_L(x,x)^2}
\end{equation}

The asymptotic behaviour of $\eta$ is therefore governed by the propagation of the fourth moment $\mathbb{E}[\|y_{(\bk)}^l\|^4_2]$ through the model.

In the following proof, for the sake of notation simplicity, we omit the notation $\bk = (l_k,h_k)$ in $y^l_{(\bk)}$, and assume that $y^l$ stands for the reduced network  $y^l_{(\bk)}$.
The recursive formula for the intermediate outputs of the reduced network are given by:
\begin{equation}
y^l = 
\begin{cases} 
      y^{l-1} + \sqrt{\alpha_l}y^{l-1,m} & 0<l\leq L, l\neq l_k  \\
      \sqrt{\alpha_l}y^{l-1,m} & l=l_k
   \end{cases}
\end{equation}
where:
\begin{equation}
y^{l-1,h} = \begin{cases} 
      \sqrt{\frac{1}{n}}W^{l,h} q^{l-1,h-1} & 1<h\leq m  \\
      \sqrt{\frac{1}{n}}W^{l,h} y^{l-1} & h=1 
   \end{cases}
\end{equation}
with $q^{l-1,h} = \sqrt{2}\phi(y^{l-1,h})$.

Using the results of Props.~\ref{p4} and~\ref{p5}, for layer $L$, we have:
\begin{equation}\label{rm}
\begin{aligned}
\mathbb{E}\left[\|y^L\|^2_2 \right] 
=&  \mathbb{E}\left[\|y^{L-1}\|^2_2\right] +  \frac{\alpha_L}{n}\mathbb{E}\left[{y^{L-1,m-1}}^{\top}{W^{L,m}}^{\top}W^{L,m}y^{L-1,m-1}\right]\\ 
=&   \mathbb{E}\left[\|y^{L-1}\|^2_2\right] + \alpha_L\mathbb{E}\left[\|y^{L-1,m-1}\|^2_2\right] \\
=& \mathbb{E}\left[\|y^{L-1}\|^2_2\right]\cdot (1 + \alpha_L)\\
=& \mathbb{E}\left[\|y^{l_k}\|^2_2\right]\prod_{l=l_k+1}^L (1 + \alpha_l) \\
=& \mathbb{E}\left[\|y^{l_k-1}\|^2_2\right]\alpha_{l_k}\prod_{l=l_k+1}^L(1 + \alpha_l)\\
=&\alpha_{l_k}\mathbb{E}[\|y^0\|^4_2]\prod^{L}_{\substack{l=1\\ l\neq l_k}}(1 + \alpha_l ) 
\end{aligned}
\end{equation}
For the fourth moment, using the results of Props.~\ref{p4} and~\ref{p5} (taking into account that odd powers will vanish in expectation), it holds: 
\begin{equation}\label{res}
\begin{aligned}
\mathbb{E}\left[\|y^L\|^4_2 \right] 
=&  \mathbb{E}\left[\|y^{L-1}\|^4_2\right] +  \alpha_L^2\mathbb{E}\left[\|y^{L-1,m}\|^4_2\right]
\\
&+ 4\alpha_L\mathbb{E}\left[\left({y^{L-1,m}}^{\top}y^{L-1}\right)^2\right]
+2\alpha_L\mathbb{E}\left[\|y^{L-1,m}\|^2_2 \cdot \|y^{L-1}\|^2_2\right]
\end{aligned}
\end{equation}
Next, we analyze each term separately:
\begin{equation}
\mathbb{E}\left[\|y^{L-1,m}\|^4_2\right] = \mathbb{E}\left[\mathbb{E}[\|y^{L-1,m}\|_2^4
\mid R^{L-1}]\right]
\end{equation}
Using the results of Props.~\ref{p4} and~\ref{p5}:
\begin{equation}
\begin{aligned}
\mathbb{E}\left[\|y^{L-1,m}\|^4_2\mid R^{L-1}\right] 
&= \left(1+2/n\right)\cdot \left(1+5/n\right)^{m-1}\cdot \|y^{L-1}\|^4_2 \\
&\sim \left(1+5/n\right)^{m} \cdot \|y^{L-1}\|^4_2 \\
\end{aligned}
\end{equation}
In addition,
\begin{equation}
\begin{aligned}
\mathbb{E}\left[\left({y^{L-1,m-1}}^{\top}y^{L-1}\right)^2\right] &= \frac{1}{n}\sum_{j_1,j_2,i_1,i_2}\mathbb{E}\left[y^{L-1,m-1}_{i_1}y^{L-1,m-1}_{i_2}y^{L-1}_{j_1}y^{L-1}_{j_2}w^{L,m}_{i_1,j_1}w^{L,m}_{i_2,j_2} \right]\\
&= \frac{1}{n}\mathbb{E}\left[\|y^{L-1,m-1}\|^2_2\cdot \|y^{L-1}\|^2_2 \right] \\
&= \frac{1}{n}\mathbb{E}\left[\|y^{L-1}\|^4_2 \right]\\
\end{aligned}
\end{equation}
and also,
\begin{equation}
\mathbb{E}\left[\|y^{L-1,m}\|^2_2 \cdot \|y^{L-1}\|^2_2\right] = \mathbb{E}\left[\|y^{L-1}\|^4_2 \right]
\end{equation}
Plugging it all into Eq.~\ref{res}, by recursion, we have:
\begin{equation}
\mathbb{E}\left[\|y^L\|^4_2 \right] \sim \mathbb{E}\left[\|y^{l_k}\|^4_2 \right] \cdot \prod_{l=l_k+1}^L\beta_l
\end{equation}
where,
\begin{equation}
\beta_l := 1+2\alpha_l\left(1+2/n\right) + \alpha_l^{2}\left(1+5/n\right)^{m}
\end{equation}
In the reduced architecture, the transformation from layer $l_k-1$ to layer $l_k$ is given by an $m$ layer fully connected network, with a linear layer on top, we can use the results from the vanilla case, and assigning $\|y^0\|^4_2 = 1$:
\begin{equation}
\begin{aligned}
\mathbb{E}\left[\|y^L\|^4_2 \right] 
&= \alpha_{l_k}^{2}\left(1+2/n\right)\cdot \left(1+5/n\right)^{m-1}\prod_{l\neq l_k}^L\beta_l \\
&\sim \alpha_{l_k}^{2} \left(1+5/n\right)^{m}\prod_{l\neq l_k}^L\beta_l
\end{aligned}
\end{equation}
Denoting $\rho = \left(1+5/n\right)^\frac{m}{2}$, and using the following:
\begin{equation}
\beta_l \sim \left(1 + \alpha_l\rho\right)^2
\end{equation}
It follows that:
\begin{equation}
\begin{aligned}
\mathbb{E}[\mathcal{G}(x,x)^2] 
&\lessapprox \sum_{\bu,\bv}\sqrt{\mathbb{E}[\|y_{(\bu)}^L\|^4_2]\mathbb{E}[\|y_{(\bv)}^L\|^2]}\\
&\sim  \left(1+5/n\right)^{m}\sum_{\bu,\bv}\alpha_{l_u}\alpha_{l_v}\sqrt{\left(\prod_{l\neq l_u}^L\beta_l\right)\left(\prod_{l\neq l_v}^L\beta_l \right)}\\
&= \left(1+5/n\right)^{m}\sum_{\bu,\bv}\alpha_{l_u}\alpha_{l_v}\left[\prod_{l\neq l_u}(1 + \rho\alpha_l)\right] \cdot \left[\prod_{l\neq l_v}(1 + \rho\alpha_l)\right]
\end{aligned}
\end{equation}
where $\bu = (l_u,h_u)$ and $\bv = (l_v,h_v)$. 

Similarly, we have:
\begin{equation}
\begin{aligned}
\mathbb{E}[\mathcal{G}(x,x)^2] 
&\gtrapprox \sum_{\bk}\mathbb{E}[\|J^{\bk}\|^4_2] \sim  \left(1+5/n\right)^{m}\cdot \sum_{\bu}\alpha_{l_u}^2\prod_{l\neq l_u}^L\beta_l = \left(1+5/n\right)^{m}\cdot \sum_{\bu}\alpha_{l_u}^2\prod_{l\neq l_u}^L(1 + \rho\alpha_l)^2
\end{aligned}
\end{equation}
Using Eq.~\ref{rm}, we have that:
\begin{equation}
\E[\mathcal{G}(x,x)]^2 = \sum_{\bu,\bv} \alpha_{l_u} \alpha_{l_v}\left(\prod_{l\neq l_u}(1 + \alpha_l)\right) \cdot \left(\prod_{l\neq l_v}(1 + \alpha_l)\right) 
\end{equation}
This yields that:
\begin{equation}
\begin{aligned}
\frac{\mathbb{E}[\mathcal{G}(x,x)^2]}{\E[\mathcal{G}(x,x)]^2} 
&\lessapprox \left(1+5/n\right)^{m} \cdot \frac{\sum_{\bu,\bv}\alpha_{l_u}\alpha_{l_v}\left(\prod_{l\neq l_u}(1 + \rho\alpha_l)\right)\left(\prod_{l\neq l_v}(1 + \rho\alpha_l)\right)}{\sum_{\bu,\bv} \alpha_{l_u} \alpha_{l_v}\left(\prod_{l\neq l_u}(1 + \alpha_l)\right)\left(\prod_{l\neq l_v}(1 + \alpha_l)\right)} \\
&\sim \left(1+5/n\right)^{m}\cdot \frac{\sum_{\bu,\bv}\alpha_{l_u}\alpha_{l_v}\left(\prod_{l=1}^L(1 + \rho\alpha_l)\right)\left(\prod_{l=1}^L(1 + \rho\alpha_l)\right)}{\sum_{\bu,\bv} \alpha_{l_u} \alpha_{l_v}\left(\prod_{l=1}^L(1 + \alpha_l)\right)\left(\prod_{l=1}^L(1 + \alpha_l)\right)}\\
&= \left(1+5/n\right)^{m}\cdot \frac{\left(\prod_{l=1}^L(1 + \rho\alpha_l)\right)^2}{ \left(\prod_{l=1}^L(1 + \alpha_l)\right)^2} \\
&= \left(1+5/n\right)^{m}\cdot \left(\prod_{l=1}^L\left(1 + \frac{\alpha_l(\rho - 1)}{1 + \alpha_l}\right)\right)^2\\
&\sim \exp \left[\frac{5m}{n} + \frac{C}{n}\sum^{L}_{l=1} \frac{\alpha_l}{1+\alpha_l} \right]\left(1 + \mathcal{O}(1/n)\right)
\end{aligned}
\end{equation}
For the lower bound, we have:
\begin{equation}
\begin{aligned}
\frac{\mathbb{E}[\mathcal{G}(x,x)^2]}{\E[\mathcal{G}(x,x)]^2} 
&\gtrapprox \left(1+5/n\right)^{m}\cdot \frac{\sum_{\bu}\alpha_{l_u}^2\left(\prod_{l\neq l_u}^L(1 + \rho\alpha_l)\right)^2}{\sum_{\bu,\bv} \alpha_{l_u} \alpha_{l_v} \left(\prod_{l\neq l_u}(1 + \alpha_l)\right)\left(\prod_{l\neq l_v}(1 + \alpha_l)\right) } \\
&\sim \frac{\sum_{\bu} \alpha_{l_u}^2}{\sum_{\bu,\bv} \alpha_{l_u}\alpha_{l_v}}\exp \left[\frac{5m}{n} + \frac{C}{n}\sum_{l=1}^L \frac{\alpha_l}{1+\alpha_l} \right]\left(1 + \mathcal{O}(1/n)\right)
\end{aligned}
\end{equation}
Since $\mathbb{E}[\mathcal{G}(x,x)^2]>\E[\mathcal{G}(x,x)]^2$, the lower bound is given by:
\begin{equation}
\begin{aligned}
\frac{\mathbb{E}[\mathcal{G}(x,x)^2]}{\E[\mathcal{G}(x,x)]^2} 
\gtrapprox& \max \left[1,\frac{\sum_{\bu} \alpha_{l_u}^2}{\sum_{\bu,\bv} \alpha_{l_u}\alpha_{l_v}}\exp \left[\frac{5m}{n} + \frac{C}{n}\sum_{l=1}^L \frac{\alpha_l}{1+\alpha_l} \right] \left(1 + \mathcal{O}(1/n)\right)\right]
\end{aligned}
\end{equation}
\end{proof}


\begin{restatable}{theorem}{denseNTKvar}\label{thm:dense_ntk_var}
Let $f(x;w)$ be a constant width = $n$ DenseNet with initialization constant $\alpha>0$. Then, there exist constants $C_1,C_2>0$, such that: 
\begin{equation}\label{surprise}
\max\left[1,\frac{C_1}{L\log(L)^2}\cdot \xi\right] \leq \eta(n,L) \leq \xi~\textnormal{ where: }~ \xi = \exp\left[C_2/n\right]\cdot \left(1 + \mathcal{O}(1/n)\right)
\end{equation}
\end{restatable}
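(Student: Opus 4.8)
The plan is to follow the template of the proof of Theorem~\ref{thm:res_ntk_var}: use the duality of Theorem~\ref{thm:dual} together with Cauchy--Schwarz to reduce the study of $\eta(n,L)$ to the growth of the second and fourth moments of the output norm $\|y^L_{(\bk)}\|_2$ of the reduced DenseNets, and then exploit the \emph{averaging} normalization $\sqrt{\alpha/(n_{l-1}\cdot l)}$ in Eq.~\ref{constant_dense} to show those moments grow at most polynomially in $L$, with all powers of $L$ cancelling in the scale-invariant ratio $\eta$. Concretely, Theorem~\ref{thm:dual} gives $\mathbb{E}[\mathcal{G}(x,x)^2]\lesssim\sum_{\bu,\bv}\sqrt{\mathbb{E}[\|y^L_{(\bu)}\|^4_2]\,\mathbb{E}[\|y^L_{(\bv)}\|^2_2]}$ and $\mathbb{E}[\mathcal{G}(x,x)]^2\propto(\sum_{\bk}\mathbb{E}[\|y^L_{(\bk)}\|^2_2])^2$ (the output projection contributing a common $1/n$), while keeping only the diagonal terms gives $\eta\gtrsim \sum_{\bk}\mathbb{E}[\|y^L_{(\bk)}\|^4_2]/(\sum_{\bk}\mathbb{E}[\|y^L_{(\bk)}\|^2_2])^2$, together with the trivial $\eta\geq1$.

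The key step is to compute the moments of $\|y^L_{(\bk)}\|_2$ for $\bk=(l_k,h_k)$. For a DenseNet the reduced network keeps layers $0,\dots,h_k$ intact, pushes $q^{h_k}$ through the single matrix $W^{l_k,h_k}$ to form $y^{l_k}_{(\bk)}$, and from then on each layer $l>l_k$ only receives the terms $W^{l,h}q^h_{(\bk)}$ with $l_k\le h<l$ (the layers $h_k{+}1,\dots,l_k{-}1$ become inert and affect no moment). Writing $S_l:=\sum_{h=l_k}^{l-1}\|q^h_{(\bk)}\|^2_2$, the crucial observation is that, conditioned on earlier activations, $y^l_{(\bk)}$ is an isotropic Gaussian with per-coordinate variance $\tfrac{\alpha}{nl}S_l$, so $\|q^l_{(\bk)}\|^2_2=\tfrac{\alpha}{l}S_lR_l$ with $R_l:=\tfrac{2}{n}\|\phi(\zeta)\|^2_2$ for a standard Gaussian $\zeta\in\mathbb{R}^n$ --- i.i.d., independent of $S_l$, with $\mathbb{E}[R_l]=1$ and $\mathbb{E}[R_l^2]=1+5/n$. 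Hence $S_{l+1}=S_l\,(1+\tfrac{\alpha}{l}R_l)$, so $S_L$ is a \emph{product of independent factors} (together with $\|q^0\|^2_2$ and the two reduction factors $R_{h_k},R_{l_k}$). This immediately gives $\mathbb{E}[\|y^L_{(\bk)}\|^2_2]$ as a ratio of Gamma functions in $L,l_k,h_k$ (asymptotically $\propto L^{\alpha-1}h_k^{\alpha-1}l_k^{-1-\alpha}$) and, since $\mathbb{E}[(1+\tfrac{\alpha}{l}R_l)^2]/\mathbb{E}[1+\tfrac{\alpha}{l}R_l]^2=1+\tfrac{5\alpha^2}{n(l+\alpha)^2}$,
\begin{equation}
\frac{\mathbb{E}[\|y^L_{(\bk)}\|^4_2]}{\mathbb{E}[\|y^L_{(\bk)}\|^2_2]^2}=\frac{n+2}{n}\Big(1+\tfrac5n\Big)^3\!\!\prod_{g\in\{1,\dots,h_k-1\}\cup\{l_k+1,\dots,L-1\}}\!\!\Big(1+\frac{5\alpha^2}{n(g+\alpha)^2}\Big)\ \le\ \exp\!\Big[\tfrac{C_2}{n}\Big]\big(1+\mathcal{O}(1/n)\big),
\end{equation}
because $\sum_{g\ge1}(g+\alpha)^{-2}<\infty$ with a value independent of $L$ --- this is precisely the depth- and scale-invariance of Eq.~\ref{surprise}. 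Feeding this into the Cauchy--Schwarz bound and noting that the common factor $L^{\alpha-1}$ in every $\mathbb{E}[\|y^L_{(\bk)}\|^2_2]$ cancels between numerator and denominator yields $\eta(n,L)\le\xi$ (up to an absolute constant absorbed in the reduction).

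For the lower bound, keeping the diagonal terms gives $\eta\gtrsim \sum_{\bk}p_{\bk}^2/(\sum_{\bk}p_{\bk})^2$ with $p_{\bk}:=\mathbb{E}[\|y^L_{(\bk)}\|^2_2]$, i.e.\ the reciprocal of the participation ratio of the $p_{\bk}$. Plugging in $p_{\bk}\propto L^{\alpha-1}h_k^{\alpha-1}l_k^{-1-\alpha}$ and summing over $1\le l_k\le L$, $0\le h_k<l_k$ (plus the boundary matrices $W^0,W^L$ and the case $h_k=0$ handled separately) with the standard estimates for $\sum_{h<l}h^{\beta}$ in the three regimes $\alpha\lessgtr\tfrac12$, both $\sum_{\bk}p_{\bk}^2$ and $(\sum_{\bk}p_{\bk})^2$ scale as $L^{2\alpha-2}$ times powers of $\log L$, and the ratio is $\Omega\!\big(1/(L\log^2 L)\big)$; combining with $\eta\ge1$ gives the claimed $\max[1,\tfrac{C_1}{L\log(L)^2}\cdot\xi]$.

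The main obstacle I anticipate is twofold. First, pinning down the reduced DenseNet precisely and justifying the multiplicative recursion $S_{l+1}=S_l(1+\tfrac{\alpha}{l}R_l)$ with \emph{independent} multipliers --- this requires the bypass/path bookkeeping of Definition~\ref{def} together with Lemmas~\ref{lem:zeromeasure}--\ref{lem:Jp}, and in particular the observation that the inert intermediate layers contribute nothing. Second --- the genuinely delicate part --- is the lower-bound participation estimate: one must treat the several regimes of $\alpha$ (the inner sum $\sum_{h<l}h^{2\alpha-2}$ converges, is logarithmic, or grows, according to $\alpha\lessgtr\tfrac12$) and the boundary weights separately, and carry constants through the Gamma-function asymptotics to land on a clean $1/(L\log^2 L)$. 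By contrast, once the independent-product structure of $S_L$ is established the upper bound, and in particular the depth and scale independence, is essentially forced.
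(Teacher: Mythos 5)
Your proposal is correct and follows the same skeleton as the paper's proof: Theorem~\ref{thm:dual} plus Cauchy--Schwarz for the upper bound, the diagonal terms plus $\eta\geq 1$ for the lower bound, and a recursion for the second and fourth moments of $\|q^l_{(\bk)}\|_2^2$ whose ratio produces the telescoping product $\prod_l\bigl(1+\tfrac{5\alpha^2}{n(l+\alpha-1)^2}\bigr)\leq\exp[C_2/n]$ --- the convergence of $\sum_l l^{-2}$ being exactly the source of depth- and scale-invariance in both arguments. Where you genuinely diverge is in \emph{how} the fourth-moment recursion is obtained: the paper tracks the full cross-covariance array $C_{l_1,l_2}=\mathbb{E}[\|q^{l_1}\|_2^2\|q^{l_2}\|_2^2]$ and telescopes the double sum $\sum_{l_1,l_2}C_{l_1,l_2}$, whereas you observe that, conditionally on the past, $y^l_{(\bk)}$ is isotropic Gaussian with variance $\tfrac{\alpha}{nl}S_l$, so $S_{l+1}=S_l(1+\tfrac{\alpha}{l}R_l)$ with \emph{independent} multipliers $R_l$ satisfying $\mathbb{E}[R_l]=1$, $\mathbb{E}[R_l^2]=1+5/n$. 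This factorizes all moments of $S_L$ immediately and reproduces the paper's recursion $C_{L,L}=C_{L-1,L-1}\bigl((1+\tfrac{\alpha-1}{L})^2+\tfrac{5\alpha^2}{nL^2}\bigr)$ without any cross-term bookkeeping; it is a real simplification of the same computation. You are also more careful than the paper on two points: you treat $\bk=(l_k,h_k)$ as a genuine double index (the paper effectively sums only over $l_k$, and its asymptotic $\sum_{l_k}(l_k+1)^{-2}\sim 1/L$ is off --- the sum is $O(1)$), and your participation-ratio estimate with $p_{\bk}\propto L^{\alpha-1}h_k^{\alpha-1}l_k^{-1-\alpha}$ actually yields the stronger lower bound $\Omega(1/\log^2 L)\cdot\xi$ in every regime of $\alpha$, which of course still implies the stated $C_1\xi/(L\log^2L)$. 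Two trivial slips, both inherited from the paper: the Cauchy--Schwarz bound should read $\sqrt{\mathbb{E}[\|J^{\bu}\|_2^4]\,\mathbb{E}[\|J^{\bv}\|_2^4]}$ (fourth moment in both factors), and your shift $(g+\alpha)$ versus the paper's $(l+\alpha-1)$ is an immaterial reindexing.
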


\begin{proof}
In the following proof, for the sake of notation simplicity, we omit the notation $\bk = (l_k,h_k)$ in $y^l_{(\bk)}$, and assume that $y^l$ stands for the reduced network  $y^l_{(\bk)}$.
The recursive formula for the intermediate outputs of the reduced network are given by:
\begin{equation}
y^l = 
\begin{cases} 
\sqrt{\frac{\alpha}{nl}}\sum_{h=k}^{l-1}W^{l,h}q^h
& l_k<l\leq L \\
\sqrt{\frac{\alpha}{nl}}\sum_{h=0}^{l-1}W^{l,h}q^h & 1\leq l <l_k\\
\sqrt{\frac{\alpha}{nl_k}}W^{l_k,h_k-1}q^{l_k-1} & l = l_k
\end{cases}
\end{equation}
with $q^h = \sqrt{2}\phi(y^h)$. We define, $\mu_l := \mathbb{E}\left[\|q^l\|_2^2 \right]$. It follows that: 
\begin{equation}
\begin{aligned}
\mu_L &= \mathbb{E}\left[\|q^L\|_2^2 \right] = \frac{2\alpha}{Ln}\mathbb{E}\left[\left(\sum_{l=l_k}^{L-1}{q^{l}}^{\top}W^{L,l}\right)Z^L\left(\sum_{l=l_k}^{L-1} {q^{l}}^{\top} W^{L,l}\right) \right] = \frac{\alpha}{L}\sum_{l=l_k}^{L-1}\mu_l
\end{aligned}
\end{equation}
where $Z^l$ is a diagonal matrix holding in its diagonal the activation variables $z^l_j$ for unit $j$ in layer $l$.

Next, by telescoping the mean:
\begin{equation}
\begin{aligned}
\mu_L &= \frac{\alpha}{L}\sum_{l=l_k}^{L-1}\mu_{l} = \frac{\alpha\mu_{L-1}}{L} + \frac{L-1}{L}\mu_{L-1} =  \mu_{L-1}\left(1 + \frac{\alpha-1}{L}\right) \\
&= \mu_{l_k+1}\prod_{l=l_k+2}^L \left(1 + \frac{\alpha-1}{l} \right) = \frac{\alpha}{l_k+1}\mu_{l_k}\prod_{l=l_k+2}^L \left(1 + \frac{\alpha-1}{l}\right) \\
&=  \frac{\alpha}{l_k+1}\mu_{0}\prod_{\substack{l= 1 \\ l\neq l_k +1}}^L \left(1 + \frac{\alpha-1}{l} \right) \sim \frac{\alpha}{l_k+1}\prod_{l=1}^L \left(1 + \frac{\alpha-1}{l} \right)
\end{aligned}
\end{equation}
and so:
\begin{equation}
\begin{aligned}
\E[\mathcal{G}(x,x)]^2 &= \left(\sum_{l_k=1}^L \mu_L\right)^2 = \left(\sum_{l_k=1}^L\frac{\alpha}{l_k+1}\right)^2\prod_{l=1}^L \left(1 + \frac{\alpha-1}{l}\right)^2 
\sim \alpha^2\log(L)^2 \prod_{l=1}^L \left(1 + \frac{\alpha-1}{l}\right)^2
\end{aligned}
\end{equation}
For the fourth moment:
\begin{equation}
\begin{aligned}
&\mathbb{E}\left[\|q^L\|_2^4 \right] \\
=& \frac{4\alpha^2}{n^2L^2}\mathbb{E}\left[\left(\sum_{l=l_k}^{L-1}\left({q^{l}}^{\top} W^{L,l}\right)Z^L\sum_{l=l_k}^{L-1}\left({q^{l}}^{\top}W^{L,l}\right)\right)^2 \right] \\
=& \frac{4\alpha^2}{n^2L^2}\mathbb{E}\left[\left(\sum_{l_1=l_k}^{L-1}\left({q^{l_1}}^{\top}W^{L,l_1}\right)Z^L\sum_{l_2=l_k}^{L-1}\left(q^{l_2\top}W^{L,l_2}\right)\sum_{l_3=l_k}^{L-1} \left({q^{l_3}}^{\top}W^{L,l_3}\right)Z^L\sum_{l_4=l_k}^{L-1}\left({q^{l_4}}^{\top}W^{L,l_4}\right)\right)\right]
\end{aligned}
\end{equation}
We denote:
\begin{equation}
C_{l,l'} = \mathbb{E}\left[\|q^{l}\|_2^2 \cdot \|q^{l'}\|_2^2 \right]
\end{equation}
Using the results from the vanilla architecture, we have:
\begin{equation}\label{c}
C_{L,L} = \frac{\alpha^2(n+5)}{nL^2}\sum_{l_1,l_2=l_k}^{L-1}C_{l_1,l_2}
\end{equation}
From Eq.~\ref{c}, it also holds that:
\begin{equation}\label{c2}
   \sum_{l_1,l_2=l_k}^{L-2}C_{l_1,l_2} = \frac{n(L-1)^2}{\alpha^2(n+5)} \cdot C_{L-1,L-1}
\end{equation}
It then follows:
\begin{equation}
\begin{aligned}
\mathbb{E}\left[\|q^L\|_2^4\right] =& C_{L,L} \\
=& \frac{\alpha^2(1+5/n)}{L^2}\sum_{l_1,l_2=l_k}^{L-1}C_{l_1l_2}\\
=&\frac{\alpha^2(1+5/n)}{L^2}\left(C_{L-1,L-1} + \sum_{l_1,l_2=l_k}^{L-2}C_{l_1l_2} + 2\sum_{l=l_k}^{L-2}C_{L-1,l} \right)\\
=&\frac{\alpha^2(1+5/n)}{L^2}\left(C_{L-1,L-1} +  \frac{(L-1)^2n}{\alpha^2(n+5)}C_{L-1,L-1} + 2\sum_{l=l_k}^{L-2}C_{L-1,l} \right)
\end{aligned}
\end{equation}
The following also holds for all $l_1 > l_2 \geq l_k$:
\begin{equation}
C_{l_1,l_2} = \frac{\alpha}{nl_1}\mathbb{E}\left[(\sum_{l=l_k}^{l_1-1}q^{l\top}W^{l_1,l}Z^{l_1})^2\|q^{l_2}\|_2^2\right]
 = \frac{\alpha}{l_1}\sum_{l=l_k}^{l_1-1}C_{l,l_2}
\end{equation}
and so:
\begin{equation}
\begin{aligned}
C_{L,L} &= \frac{\alpha^2(n+5)}{nL^2}\left(C_{L-1,L-1} +  \frac{(L-1)^2n}{\alpha^2(n+5)}C_{L-1,L-1} + \frac{2\alpha}{L-1}\sum_{l_1=l_k}^{L-2}\sum_{l_2=l_k}^{L-2}C_{l_1,l_2} \right)\\
&=\frac{\alpha^2(n+5)}{nL^2}\left(C_{L-1,L-1} +  \frac{(L-1)^2n}{\alpha^2(n+5)}C_{L-1,L-1}
+\frac{2n(L-1)}{\alpha(n+5)}C_{L-1,L-1}  \right)\\
&=\frac{\alpha^2(n+5)}{nL^2}C_{L-1,L-1}\left(1
+\frac{(L-1)^2n}{\alpha^2(n+5)} +\frac{2n(L-1)}{\alpha(n+5)}\right)\\
&= C_{L-1,L-1}\left(\left({1 + \frac{\alpha - 1}{L}}\right)^2 + \frac{5\alpha^2}{nL^2}\right)
\end{aligned}
\end{equation}
Recursively, we have:
\begin{equation}
\begin{aligned}
C_{L,L} = C_{l_k+1,l_k+1}\prod_{l=l_k+2}^L \left(\left({1 + \frac{\alpha - 1}{l}}\right)^2 + \frac{5\alpha^2}{nl^2}\right)
\end{aligned}
\end{equation}
For the reduced architecture, the transition from $q^{l_k}$ to $q^{l_k+1}$ is a vanilla ReLU block, and so using the result from the vanilla architecture:
\begin{equation}
\begin{aligned}
C_{L,L} &= C_{l_k,l_k}\frac{\alpha^2(n+5)}{n(l_k+1)^2}\prod_{l=l_k+2}^L \left(\left({1 + \frac{\alpha - 1}{l}}\right)^2 + \frac{5\alpha^2}{nl^2}\right)\\
&= \frac{\alpha^2(n+5)}{n(l_k+1)^2}\prod_{l\neq l_k+1}\left(\left({1 + \frac{\alpha - 1}{l}}\right)^2 + \frac{5\alpha^2}{nl^2}\right)\\
&\sim \frac{\alpha^2(n+5)}{n(l_k+1)^2}\prod_{l=1}^L\left(\left({1 + \frac{\alpha - 1}{l}}\right)^2 + \frac{5\alpha^2}{nl^2}\right) 
\end{aligned}
\end{equation}
where we assigned $C_{0,0} = 1$. It follows:
\begin{equation}
\begin{aligned}
\mathbb{E}[\mathcal{G}(x,x)^2] &\lessapprox \sum_{\bu,\bv}\sqrt{\mathbb{E}\left[\|y_{(\bu)}^L\|^4_2\right]\cdot\mathbb{E}\left[\|y_{(\bv)}^L\|^2_2\right]}\\
&\sim \left(\sum_{l_k=1}^L\frac{1}{l_k+1}\right)^2\cdot \frac{\alpha^2(n+5)}{n}\cdot \prod_{l=1}^L\left(\left({1 + \frac{\alpha - 1}{l}}\right)^2 + \frac{5\alpha^2}{nl^2}\right)\\
&\sim \log(L)^2\cdot \frac{\alpha^2(n+5)}{n}\cdot \prod_{l=1}^L\left(\left({1 + \frac{\alpha - 1}{l}}\right)^2 + \frac{5\alpha^2}{nl^2}\right)
\end{aligned}
\end{equation}
Similarly, we have:
\begin{equation}
\begin{aligned}
\mathbb{E}[\mathcal{G}(x,x)^2] 
&\gtrapprox \sum_{l_k}\mathbb{E}[\|J^{\bk}\|^4_2] \\
&= \sum_{l_k=1}^L
\frac{\alpha^2(n+5)}{n(l_k+1)^2}\cdot \prod_{l=1}^L\left(\left({1 + \frac{\alpha - 1}{l}}\right)^2 + \frac{5\alpha^2}{nl^2}\right) \\
&\sim \frac{\alpha^2(n+5)}{nL}\cdot \prod_{l=1}^L\left(\left({1 + \frac{\alpha - 1}{l}}\right)^2 + \frac{5\alpha^2}{nl^2}\right)
\end{aligned}
\end{equation}
This yields that:
\begin{equation}
\begin{aligned}
\frac{\mathbb{E}[\mathcal{G}(x,x)^2]}{\E[\mathcal{G}(x,x)]^2} &\lessapprox \frac{\frac{n+5}{n}\cdot \prod_{l=1}^L\left(\left({1 + \frac{\alpha - 1}{l}}\right)^2 + \frac{5\alpha^2}{nl^2}\right)}{\prod_{l=1}^L \left(1 + \frac{\alpha-1}{l}\right)^2}\\
&= \frac{n+5}{n}\cdot \prod_{l=1}^L\left(1 + \frac{5\alpha^2}{n(l+\alpha-1)^2}\right)\\
&\sim \exp\left[\sum_{l=1}^L\frac{5\alpha^2}{n(l+\alpha-1)^2}\right]\cdot \left(1 + \mathcal{O}(1/n)\right)\\
&\sim \exp\left[C/n\right]\cdot \left(1 + \mathcal{O}(1/n)\right)
\end{aligned}
\end{equation}
For the lower bound, we have:
\begin{equation}
\begin{aligned}
\frac{\mathbb{E}[\mathcal{G}(x,x)^2]}{\E[\mathcal{G}(x,x)]^2} &\gtrapprox \frac{\frac{n+5}{n} \cdot \prod_{l=1}^L\left(\left({1 + \frac{\alpha - 1}{l}}\right)^2 + \frac{5\alpha^2}{nl^2}\right)}{L\log(L)^2\prod_{l=1}^L \left(1 + \frac{\alpha-1}{l}\right)^2}\\
&\sim \frac{1}{L\log(L)^2}\cdot \exp\left[C/n\right]\cdot \left(1 + \mathcal{O}(1/n)\right)
\end{aligned}
\end{equation}
Since $\mathbb{E}[\mathcal{G}(x,x)^2]>\E[\mathcal{G}(x,x)]^2$, the lower bound is given by:
\begin{equation}
\frac{\mathbb{E}[\mathcal{G}(x,x)^2]}{\E[\mathcal{G}(x,x)]^2} \gtrapprox \max \left[1,\frac{1}{L\log(L)^2}\cdot \exp\left[C/n\right]\cdot\left(1 + \mathcal{O}(1/n)\right)\right]
\end{equation}
\end{proof}

\end{document}


\appendix

\onecolumn
\section{Appendix}
We dedicate the appendix to proving all results presented in the paper. The proofs of Theorems 1 and 2 (the form of the NTK for ResNet and DenseNet models) are given for sequentially pushing all width parameters to infinity, as it is much easier to prove. The results, however, hold regardless of the order in which the limits are considered. 
\begin{theorem}\label{res_ntk}
Given a depth $L$ ResNet, with positive initialization constants $\{\alpha_l\}_{l=1}^L$, it holds at initialization that $\mathcal{G}(x,x')$ converges (in law) to $K_L^R(x,x')$ as $n_0,n_{0,1}...n_{1,1}...n_L \rightarrow \infty$, such that:
\[
K_L^R(x,x') 
=
K_{L-1}^R(x,x')\Big(\alpha_L\prod_{h=1}^{m-1}\dot{\Sigma}^{L-1,h}(x,x') + 1 \Big) + \alpha_L\sum_{h=1}^m\Big( \Sigma^{L,h-1}(x,x')\prod_{h'=h}^{m-1}\dot{\Sigma}^{L,h'}(x,x')\Big)
\]
where $K_1^R(x,x') = \alpha_1\sum_{h=1}^m\Big( \Sigma^{0,h-1}(x,x')\prod_{h'=h}^{m-1}\dot{\Sigma}^{0,h'}(x,x')\Big)$, and:
\[
\dot{\Sigma}^{l,h}(x,x') = 
\begin{cases} 
  2\E_{u,v \sim \mathcal{N}(0,\Sigma^{l,h-1}(x,x'))}[\dot{\phi}(u)\dot{\phi}(v)]  & 1< h \\
  2\E_{u,v \sim \mathcal{N}(0,\Lambda^l(x,x'))}[\dot{\phi}(u)\dot{\phi}(v)]  & h=1 
  \end{cases}
\]
where $\dot{\phi(u)}$ denotes the derivative of $\phi$ by $u$.

\begin{proof}
Recall that we need to compute $K_L(x,x')^R = \lim_{n_0...n_L\rightarrow \infty}\sum_{l=1}^L \sum_{h=1}^m\langle \frac{\partial f(x,W)}{\partial W^{l,h}},\frac{\partial f(x,W)}{\partial W^{l,h}}\rangle$.\\
The derivative $\frac{\partial f(x,W)}{\partial W^{l,h}}$ can be written in a compact matrix form:
\[
\forall_{0<l\leq L},~\frac{\partial f(x,W)}{\partial W^{l,h}} =
\begin{cases} 
      \sqrt{\frac{a_l2^{m-h}}{\prod_{h'=h-1}^{m-1}n_{l-1,h'}}}q^{l-1,h-1}(x)\Big(C_h^l(x)A^{l+1}(x)\Big)^\top & 1<h\leq m  \\
      \sqrt{\frac{a_l2^{m-h}}{\prod_{h'=h-1}^{m-1}n_{l-1,h'}}}y^{l-1}(x)\Big(C_1^l(x)A^{l+1}(x)\Big)^\top & h=1 \end{cases}
\]
where:
\[
C_h^l(x) = \begin{cases}
\prod_{h'=h}^{m-1}\Big(\Dot{Z}^{l,h'}(x)W^{l,h'+1}\Big)& 1<h<m\\
I & else
\end{cases}
\]
and:
\[
A^{l}(x) = 
\begin{cases} 
      \Big(\sqrt{\frac{\alpha_l2^{m-1}}{\prod_{h'=0}^{m-1}n_{l-1,h'}}}W^{l,1}C_1^l(x) + I^l\Big)A^{l+1}(x) & 0<l\leq L  \\
      \frac{1}{\sqrt{n_L}}W^f & l = L  \end{cases}
\]

$Z^{l,h}$ is a diagonal matrix holding the binary activation variables of layer $h$ in residual branch $l$ in its diagonal,  
and $I^l =  [I_{n_l}, \textbf{0}_{n_{l-1} - n_l}]^\top \in \mathbb{R}^{n_{l-1} \times n_l}$ is a concatenation of an $n_l$ dimensional identity matrix, and a zero matrix $\textbf{0}_{n_{l-1} - n_l}\in \mathbb{R}^{n_l\times n_{l-1} - n_l}$.\\
It follows:
\[
&&\forall_{1<h\leq L}~\Big\langle\frac{\partial f(x,W)}{\partial W^{l,h}},\frac{\partial f(x',W)}{\partial W^{l,h}}\Big\rangle \\
&&= \frac{a_l2^{m-h}}{\prod_{h'=h-1}^{m-1}n_{l-1,h'}}\Big\langle q^{l-1,h-1}(x)\Big(C_h^l(x)A^{l+1}(x)\Big)^\top,q^{l-1,h-1}(x')\Big(C_h^l(x')A^{l+1}(x')\Big)^\top\Big)  \Big\rangle \\
&&=\frac{a_l2^{m-h}}{\prod_{h'=h-1}^{m-1}n_{l-1,h'}} \Big\langle q^{l-1,h-1}(x),q^{l-1,h-1}(x')\Big\rangle A^{l+1}(x)^\top C_h^l(x)^\top C_h^l(x') A^{l+1}(x')
\]

In the following, we take the limits $n_0,n_{0,1}...n_{1,1}...n_L \rightarrow \infty$ consecutively.
After taking the limits $n_0...n_{l-1,h-1} \rightarrow \infty$, it holds that $\frac{1}{n_{l-1,h-1}}\Big\langle q^{l-1,h-1}(x),q^{l-1,h-1}(x')\Big\rangle \rightarrow \Sigma^{l-1,h-1}(x,x')$.\\
We are left with:
\[
&&\lim_{n_0...n_{l-1,h-1}}\Big\langle\frac{\partial f(x,W)}{\partial W^{l,h}},\frac{\partial f(x',W)}{\partial W^{l,h}}\Big\rangle\\
&&= \frac{a_l2^{m-h}}{\prod_{h'=h}^{m-1}n_{l-1,h'}} \Sigma^{l-1,h-1}(x,x') A^{l+1}(x)^\top C_h^l(x)^\top C_h^l(x') A^{l+1}(x')\\
&&=\frac{a_l2^{m-h}}{\prod_{h'=h}^{m-1}n_{l-1,h'}} \Sigma^{l-1,h-1}(x,x') A^{l+1}(x)^\top C_{h+1}^l(x)W^{l,h+1\top}\Dot{Z}^{l,h}(x)\Dot{Z}^{l,h}(x')W^{l,h+1} C_{h+1}^l(x') A^{l+1}(x')
\]
Taking $n_{l-1,h} \rightarrow \infty$, it holds that $\frac{2}{n_{l-1,h}}W^{l,h+1\top}\Dot{Z}^{l,h}(x)\Dot{Z}^{l,h}(x')W^{l,h+1} \rightarrow \dot{\Sigma}^{l-1,h}(x,x')I$.\\
And so:
\[
&&\lim_{n_0...n_{l-1,h} \rightarrow \infty}\Big\langle\frac{\partial f(x,W)}{\partial W^{l,h}},\frac{\partial f(x',W)}{\partial W^{l,h}}\Big\rangle \\
&&= \frac{a_l2^{m-h-1}}{\prod_{h'=h+1}^{m-1}n_{l-1,h'}}\Sigma^{l-1,h-1}(x,x')\dot{\Sigma}^{l-1,h}(x,x') A^{l+1}(x)^\top C_{h+1}^l(x)C_{h+1}^l(x') A^{l+1}(x')\\
&&= a_l\Sigma^{l-1,h-1}(x,x')\prod_{h'=h}^{m-1}\dot{\Sigma}^{l-1,h'}(x,x')A^{l+1}(x)^\top A^{l+1}(x')
\]

Expanding $A^{l+1}(x)^\top A^{l+1}(x')$:
\[
&&\Big\langle 
A^{l+1}(x), A^{l+1}(x')\Big\rangle 
= A^{l+2}(x)^\top\Big(\sqrt{\frac{a_{l+1}2^{m-1}}{\prod_{h'=0}^{m-1}n_{l,h'}}}W^{l+1,1}C_1^{l+1}(x) + I^{l+1}\Big)^\top...\\
&&\Big(\sqrt{\frac{a_{l+1}2^{m-1}}{\prod_{h'=0}^{m-1}n_{l,h'}}}W^{l+1,1}C_1^{l+1}(x') + I^{l+1}\Big)A^{l+2}(x')\\
&&= \sqrt{\frac{a_{l+1}2^{m-1}}{\prod_{h'=0}^{m-1}n_{l,h'}}}A^{l+2}(x)^\top C_1^{l+1}(x)^\top W^{l+1,1\top}I^{l+1}A^{l+2}(x') \\
&&+\sqrt{\frac{a_{l+1}2^{m-1}}{\prod_{h'=0}^{m-1}n_{l,h'}}}A^{l+2}(x')^\top C_1^{l+1}(x')^\top W^{l+1,1\top}I^{l+1}A^{l+2}(x)\\
&&+\frac{a_{l+1}2^{m-1}}{\prod_{h'=0}^{m-1}n_{l,h'}}A^{l+2}(x)^\top C_1^{l+1}(x)^\top W^{l+1,1\top} W^{l+1,1}C_1^{l+1}(x') A^{l+2}(x')\\
&&+ \Big\langle 
A^{l+2}(x), A^{l+2}(x')\Big\rangle \\
&&=T_1+T_2+T_3+T_4
\]

Note that $\sqrt{\frac{1}{n_l}}\|W^{l+1,1\top}I^{l+1\top}\| \rightarrow 0$ as $n_l$ tends to infinity, and so $T_1,T_2 \rightarrow 0$. Taking $n_{l+1} \rightarrow \infty$, we have $\frac{1}{n_l}W^{l+1,1\top} W^{l+1,1} \rightarrow I$. Taking $n_0...n_{l,m} \rightarrow \infty$, we have that $T_3 \rightarrow \prod_{h'=1}^{m-1}\dot{\Sigma}^{l,h'}(x,x')\Big\langle 
A^{l+2}(x), A^{l+2}(x')\Big\rangle$. \\
And so it follows:
\[
\lim_{n_0...n_L \rightarrow \infty}\Big\langle 
A^{l+1}(x), A^{l+1}(x')\Big\rangle  = \lim_{n_0...n_L \rightarrow \infty}\Big\langle 
A^{l+2}(x), A^{l+2}(x')\Big\rangle \Big(\alpha_{l+1}\prod_{h'=1}^{m-1}\dot{\Sigma}^{l,h'}(x,x') + I \Big)\\
=\prod_{l'=l}^{L-1}\Big(a_{l'+1}\prod_{h'=1}^{m-1}\dot{\Sigma}^{l',h'}(x,x') + I \Big)\lim_{n_L\rightarrow \infty}\frac{1}{n_L}W^{f\top}W^f = \prod_{l'=l}^{L-1}\Big(\alpha_{l'+1}\prod_{h'=1}^{m-1}\dot{\Sigma}^{l',h'}(x,x') + I \Big)
\]
And finally:
\[
&&\lim_{n_0...n_L \rightarrow \infty}\Big\langle\frac{\partial f(x,W)}{\partial W^{l,h}},\frac{\partial f(x',W)}{\partial W^{l,h}}\Big\rangle \\
&&= \alpha_l \Sigma^{l-1,h-1}(x,x')\prod_{h'=h}^{m-1}\dot{\Sigma}^{l-1,h'}(x,x')\prod_{l'=l}^{L-1}\Big(\alpha_{l'+1}\prod_{h'=1}^{m-1}\dot{\Sigma}^{l',h'}(x,x') + I \Big)
\]
The derivation for the case of $h=1$ gives an identical result, and so summing over all the weights:
\[
&&K_L^R(x,x') = \lim_{n_0...n_L \rightarrow \infty}~~\sum_{l=1}^L \sum_{h=1}^m\langle \frac{\partial f(x,W)}{\partial W^{l,h}},\frac{\partial f(x,W)}{\partial W^{l,h}}\rangle\\
 &&= \sum_{l=1}^L\sum_{h=1}^m\alpha_l\Big( \Sigma^{l-1,h-1}(x,x')\prod_{h'=h}^{m-1}\dot{\Sigma}^{l-1,h'}(x,x')\Big)\prod_{l'=l}^{L-1}\Big(\alpha_{l'+1}\prod_{h'=1}^{m-1}\dot{\Sigma}^{l',h'}(x,x') + 1 \Big)\\
  &&= K_{L-1}^R(x,x')\Big(\alpha_L\prod_{h=1}^{m-1}\dot{\Sigma}^{L-1,h}(x,x') + 1 \Big) + \alpha_L\sum_{h=1}^m\Big( \Sigma^{L,h-1}(x,x')\prod_{h'=h}^{m-1}\dot{\Sigma}^{L,h'}(x,x')\Big)
\]
\end{proof}
\end{theorem}

\begin{theorem}\label{dense_ntk}
Given a depth $L$ DenseNet, with positive initialization constant $\alpha$, it holds at initialization that $\mathcal{G}(x,x')$ converges (in law) to $K_L^D(x,x')$ as $n_0, n_0'...n_L \rightarrow \infty$, such that:

\[
K_L^D(x,x') = K_{L-1}^D(x,x')\Big(\frac{\alpha\dot{\Sigma}^{L-1}(x,x')+L-1}{L} + \Big) +\frac{\alpha\Sigma^{L-1}(x,x')}{L}  
\]
where:
\[
\dot{\Sigma}^{l}(x,x') = 2\E_{u,v \sim \mathcal{N}(0,\Lambda^l(x,x')}[\dot{\phi}(u)\dot{\phi}(v)]
\]

\begin{proof}
Similarly to the ResNet case, the derivative $\frac{\partial f(x,W)}{\partial W^{l,h}}$ can be written in a compact matrix form:
\[
\forall_{0\leq h<l\leq L},~\frac{\partial f(x,W)}{\partial W^{l,h}} =
\sqrt{\frac{\alpha}{ln_{l-1}'}}[q^h(x)]_{n_{l-1}'}\Big(A^{l+1,l}(x)\Big)^\top
\]
where:
\[
A^{l,h}(x) = 
\begin{cases} 
     \sqrt{\frac{2\alpha}{ln_{l-1}'}}Z^h(x)I^{h,l-1}W^{l,h}A^{l+1,l}(x) + A^{l+1,h}(x) & 0\leq h < l < L  \\
     \sqrt{\frac{2\alpha}{ln_{l-1}'}}Z^h(x)I^{h,l-1}W^{l,h}A^{l+1,l}(x) & l = L\\
       \frac{1}{\sqrt{n_L}}W^f & l = L+1 \end{cases}
\]
where $Z^{l}$ is a diagonal matrix holding the binary activation variables of layer $l$ in its diagonal,
and $I^{h,l} =  [I_{n_l'}, \textbf{0}_{n_h - n_l'}]^\top \in \mathbb{R}^{n_h \times n_l'}$ is a concatenation of an $n_h$ dimensional identity matrix, and a zero matrix $\textbf{0}_{n_h - n_l'}\in \mathbb{R}^{n_l'\times n_h - n_l'}$.
We then have:
\begin{equation}\label{eq:dense_eq}
\forall_{0\leq h<l\leq L},~\Big\langle \frac{\partial f(x,W)}{\partial W^{l,h}},\frac{\partial f(x',W)}{\partial W^{l,h}}\Big\rangle = \frac{\alpha}{ln_{l-1}'}\Big\langle [q^h(x)]_{n_`{l-1}'},[q^h(x')]_{n_{l-1}'} \Big\rangle \Big\langle A^{l+1,l}(x),A^{l+1,l}(x') \Big\rangle
\end{equation}
In the following we take the limits $n_0, n_0'...n_L \rightarrow \infty$ consecutively.
After taking the limit $n_0,n_0'...n_{l-1}' \rightarrow \infty$, it holds that $\frac{\alpha}{ln_{l-1}'}\Big\langle [q^h(x)]_{n_{l-1}'},[q^h(x')]_{n_{l-1}'} \Big\rangle \rightarrow \frac{\alpha}{l}\Sigma^h(x,x')$.
We are left with computing the limit of $\Big\langle A^{l+1,l}(x),A^{l+1,l}(x') \Big\rangle$. It follows that:

\[\label{eq:rec}
&&\forall_{0\leq h<l<L},~~\Big\langle A^{l,h}(x),A^{l,h}(x') \Big\rangle = \sqrt{\frac{2\alpha}{ln_{l-1}'}}A^{l+1,h\top}(x)Z^h(x')I^{h,l-1}W^{l,h}A^{l+1,l}(x')\\
&&+\Big\langle A^{l,h}(x),A^{l,h}(x') \Big\rangle = \sqrt{\frac{2\alpha}{ln_{l-1}'}}A^{l+1,h\top}(x')Z^h(x)I^{h,l-1}W^{l,h}A^{l+1,l}(x)\\
&&+\frac{2\alpha}{ln_{l-1}'}A^{l+1,l\top}(x)W^{l,h\top}I^{h,l-1\top}Z^h(x)Z^h(x')I^{h,l-1}W^{l,h}A^{l+1,l}(x')
+A^{l+1,h\top}(x)A^{l+1,h}(x')\\
&&=T_1+T_2+T_3+T_4
\]

Expanding $T_1$:
\[
&&T_1 = \sqrt{\frac{2\alpha}{ln_{l-1}'}}A^{l+1,h\top}(x)Z^h(x')I^{h,l-1}W^{l,h}A^{l+1,l}(x')\\
&&=\Big(\sqrt{\frac{2\alpha}{(l+1)n_l'}}Z^h(x)I^{h,l}W^{l+1,h}A^{l+2,l+1}(x) + A^{l+2,h}(x)\Big)^\top\sqrt{\frac{2\alpha}{ln_{l-1}'}}Z^h(x')I^{h,l-1}W^{l,h}A^{l+1,l}(x')\\
&&=\frac{2\alpha}{\sqrt{l(l+1)n_{l-1}'n_l'}}A^{l+2,l+1\top}(x)W^{l+1,h\top}I^{h,l\top}Z^h(x)Z^h(x')I^{h,l-1}W^{l,h}A^{l+1,l}(x')\\
&&+ A^{l+2,h}(x)^\top \sqrt{\frac{2\alpha}{ln_{l-1}'}}Z^h(x')I^{h,l-1}W^{l,h}A^{l+1,l}(x')
\]

Looking at the first term of the expansion, notice that after taking the limit $n_0...n_{l-1}' \rightarrow \infty$, it holds that $\frac{1}{\sqrt{n_{l-1}'}}\|I^{h,l\top}Z^h(x)Z^h(x')I^{h,l-1}W^{l,h}\| \rightarrow 0$, and so we are left with the second term:
\[
\lim_{n_0...n_{l-1}' \rightarrow \infty} T_1 = \lim_{n_0...n_{l-1}' \rightarrow \infty}A^{l+2,h}(x)^\top \sqrt{\frac{2\alpha}{ln_{l-1}'}}Z^h(x')I^{h,l-1}W^{l,h}A^{l+1,l}(x')
\]
Recursively expanding $A^{l+2,h}(x)$, we get similar terms that vanish in the limit, therefore:
\[
\lim_{n_0...n_{l-1}' \rightarrow \infty} T_1 = \lim_{n_0...n_{l-1}' \rightarrow \infty}A^{L,h}(x)^\top \sqrt{\frac{2\alpha}{ln_{l-1}'}}Z^h(x')I^{h,l-1}W^{l,h}A^{l+1,l}(x')
 \rightarrow 0
\]
And so $\lim_{n_0...n_{l-1}' \rightarrow \infty} T_1,T_2 \rightarrow 0$. We are left with evaluating the limit of $T_3,T_4$. Expanding $T_3$:
\[
\lim_{n_0...n_{l-1}' \rightarrow \infty} T_3 = \lim_{n_0...n_{l-1}' \rightarrow \infty}\frac{2\alpha}{ln_{l-1}'}A^{l+1,l\top}(x)W^{l,h\top}I^{h,l-1\top}Z^h(x)Z^h(x')I^{h,l-1}W^{l,h}A^{l+1,l}(x')
\]
Note that it holds that $\lim_{n_0...n_{l-1}' \rightarrow \infty}\frac{2\alpha}{n_{l-1}'}W^{l,h\top}I^{h,l-1\top}Z^h(x)Z^h(x')I^{h,l-1}W^{l,h} \rightarrow \alpha\dot{\Sigma}^h(x,x')I$, and so:
\[
\lim_{n_0...n_{l-1}' \rightarrow \infty} T_3 = \frac{\alpha\dot{\Sigma}^h(x,x')}{l}A^{l+1,l\top}(x)A^{l+1,l}(x')
\]
Plugging back into Eq.~\ref{eq:rec}:
\[
&&\lim_{n_0...n_{l-1}' \rightarrow \infty}\Big\langle A^{l,h}(x),A^{l,h}(x') \Big\rangle = \lim_{n_0...n_{l-1}' \rightarrow \infty} T_3+T_4 \\
&&=\frac{\alpha\dot{\Sigma}^h(x,x')}{l}\Big\langle A^{l+1,l}(x),A^{l+1,l}(x')\Big\rangle + \Big\langle A^{l+1,h}(x),A^{l+1,h}(x')\Big\rangle 
\]
and:
\[
&&\lim_{n_0...n_{L} \rightarrow \infty}\Big\langle A^{L,h}(x),A^{L,h}(x') \Big\rangle = \lim_{n_0...n_L \rightarrow \infty} T_3\\
&&=\lim_{n_0...n_{L} \rightarrow \infty}\frac{\alpha\dot{\Sigma}^h(x,x')}{L}\Big\langle A^{L+1,l}(x),A^{L+1,l}(x')\Big\rangle = \frac{\alpha\dot{\Sigma}^h(x,x')}{L}
\]
Denoting by $m^{l,h}(x,x') = \lim_{n_0...n_{L} \rightarrow \infty}\Big\langle A^{l,h}(x),A^{l,h}(x') \Big\rangle$, it holds that:
\[
\lim_{n_0...n_{L} \rightarrow \infty}m_{L+1,h} = \frac{1}{n_L}W^{f\top}W^f = 1
\]
Plugging into Eq.~\ref{eq:dense_eq}, we arrive at:
\begin{equation}\label{dense_final}
\lim_{n_0...n_{L} \rightarrow \infty}\Big\langle,\frac{\partial f(x,W)}{\partial W^{l,h}},\frac{\partial f(x',W)}{\partial W^{l,h}}\Big\rangle = \frac{\alpha}{l}\Sigma^h(x,x')m^{l+1,l}(x,x')
\end{equation}
with the following recursion:
\begin{equation}
m_{l,h}(x,x') = \begin{cases} 
     \frac{\alpha\dot{\Sigma}^h(x,x')}{l}m^{l+1,l}(x,x') + m^{l+1,h}(x,x') & 0\leq h < l < L  \\
     \frac{\alpha\dot{\Sigma}^h(x,x')}{L} & 0\leq h < L,l=L\\
     1 & else\end{cases}
\end{equation}
Summing over all the weights, we have:
\begin{equation}\label{eq:dense_rec}
K_L^D(x,x') = \sum_{l=1}^L\frac{\sum_{h=0}^{l-1}\alpha\Sigma^h(x,x')}{l}m^{l+1,l}(x,x')\\
= K_{L-1}^D(x,x')\Big(\frac{\alpha\dot{\Sigma}^{L-1}(x,x')}{L} + \frac{L-1}{L}\Big) +\frac{\alpha\Sigma^{L-1}(x,x')}{L}  
\end{equation}
\end{proof}
\end{theorem}

\begin{lemma}\label{eq:finite}
Given a $\beta$-smooth function $f(u) \in [0,1]$, a learning rate $0 \leq \alpha < \frac{1}{\beta}$ and a constant $0<c<1$. Let $\{f(u_t)\}_{t=0}^\infty$ be the series of outputs given by applying gradient descent, such that $f(u_{t+1}) = f(u_t) - \alpha \nabla f(u_t)$.  Assuming the following holds for any point $u$:
\begin{equation}\label{grad_cond}
\|\nabla f(u)\|\geq \sqrt{2c\beta f(u)}
\end{equation}
it then holds that $f(u_\infty) = 0$.
\begin{proof}
Smoothness of $f$ implies that for any $u,v$ it holds $\|\nabla f(u) - \nabla f(v)\|\leq \frac{\beta}{2}\|u - v\|$, and the following hold:
\begin{equation}\label{lr}
f(v) \leq f(u)  + \langle \nabla f(u),v - u \rangle + \frac{\beta}{2}\|v - u\|^2
\end{equation}
Setting $v = u -\frac{1}{\beta}\nabla f(u)$, we have (using the fact that $f\geq 0$):
\[
\|\nabla f(u)\| \leq \sqrt{2\beta f(u)}
\]
together with the assumption that $\|\nabla f(u)\| \geq \sqrt{2c\beta f(u)}$, we have:
\[
\sqrt{2c\beta f(u)} \leq \|\nabla f(u)\| \leq \sqrt{2\beta f(u)}
\]
Setting $v = u -\alpha\nabla f(u)$ in Eq.~\ref{lr}:
\[
f(v)\leq f(u) - \alpha\|\nabla f(u)\|^2(1 - \frac{\alpha\beta}{2} ) \leq f(u) - 2c\alpha\beta f(u)(1 - \frac{\alpha\beta}{2})\\
=f(u)\Big(1 - 2c\alpha\beta (1 - \frac{\alpha \beta}{2})\Big)
\]
For $0\leq \alpha < \frac{1}{\beta}$, it holds that $0 \leq \Big(1 - 2c\alpha\beta (1 - \frac{\alpha \beta}{2})\Big) <1$, and so
It follows:
\[
f(u_\infty) \leq f(u_0)\Big(1 - 2c \alpha \beta(1 - \frac{\alpha \beta}{2})\Big)^\infty = 0
\]
\end{proof}
\end{lemma}

We make use of the following propositions and definitions to aid in the proofs of Theorems 3 and 4.
\begin{proposition}\label{p1}
Given a random vector $w = [w_1...w_n]$ such that each component is identically and symmetrically distributed i.i.d random variable with moments $\E(w_i^m) =c_m$ ($c_0 = 1,c_1 = 0$), a set of non negative integers $m_1...m_l$ such that $\sum_{i=1}^lm_i$ is even, and a random binary variable $z\in \{0,1\}$ such that $(z|w) = 1-(z|-w)$, then it holds that:
\[ \E[\prod_{i=1}^{l}w_{i}^{m_i} z] =\frac{\prod_{i=1}^lc_{m_i}}{2}
\]
\begin{proof}
We have:
\[
\prod_{i=1}^lc_{m_i} &=& \int_w \prod_{i=1}^lw_i^{m_i}p(w)dw  =  \int_{w|z=1} \prod_{i=1}^lw_i^{m_i}p(w)dw + \int_{w|z=0}\prod_{i=1}^lw_i^{m_i}p(w)dw\\
&=&\int_{w|z=1} \prod_{i=1}^lw_i^{m_i}p(w)dw + \int_{w|z=1}\prod_{i=1}^l(-w_i)^{m_i}p(w)dw\\
&=&\int_{w} \prod_{i=1}^lw_i^{m_i}zp(w)dw + \int_{w}\prod_{i=1}^l(-w_i)^{m_i}zp(w)dw
\]
For even $\sum_{i=1}^l m_i$, it follows that:
\[
&& \int_{w}\prod_{i=1}^l(-w_i)^{m_i}zp(w)dw = \int_{w}\prod_{i=1}^lw_i^{m_i}zp(w)dw\\
&& \prod_{i=1}^lc_{m_i} = 2\int_{w} \prod_{i=1}^lw_i^{m_i}zp(w)dw
\]
yielding: 
\[
\frac{\prod_{i=1}^lc_{m_i}}{2} = \int_{w} \prod_{i=1}^lw_i^{m_i}zp(w)dw = \E[\prod_{i=1}^{l}w_{i}^{m_i} z]
\]
\end{proof}
\end{proposition}

\begin{proposition}\label{p2}
Given a random vector $w = [w_1...w_n]$ such that each component is identically and symmetrically distributed i.i.d random variable with moments $\E(w_i^m) =c_m$ ($c_0 = 1,c_1 = 0$), two sets of non negative integers $m_1...m_l$, $n_1...n_l$, such that $\sum_{i=1}^lm_i$ , $\sum_{i=1}^ln_i$ are even , $\forall_i,m_i\geq n_i$, and a random binary variable $z\in \{0,1\}$, such that $(z|w) = 1-(z|-w)$, then it holds that:
\[ \E[\frac{1}{w_{i}^{n_i}}\prod_{i=1}^{l}w_{i}^{m_i} z] =\frac{\prod_{i=1}^lc_{m_i - n_i}}{2}
\]
\begin{proof}
This is trivially true from proposition 1 since $\sum_i (m_i - n_i)$ is even.
\end{proof}
\end{proposition}

\begin{definition}
\textbf{ResNet path parametrization}: A path from input to output $\gamma$ in a ResNet, with two layer residual branches (m=2), defines a product of weights along the path denoted by: 
\[
P_{\gamma} = \prod_{l=0}^{L+1} p_{\gamma,l}
\]
where:
\[
p_{\gamma,l} = \begin{cases} 
      1 & l \notin \gamma\\
      w_{\gamma,l}^1z_{\gamma,l}w_{\gamma,l}^2 & l \in \gamma, 0<l\leq L\\
      w_{\gamma,l} & l=\{0,L+1\} 
   \end{cases}
\]

Here, $w_{\gamma,l}^1,w_{\gamma,l}^2$ are weights associated with residual branch $l$, $w_{\gamma,0},w_{\gamma,L+1}$ belong to the first and last linear projection matrices $W^s,W^f$, and $z_{\gamma,l}$ is the binary activation variable relevant for weight $w_{\gamma,l}^1$. (Note that $z_{\gamma,l}$ depends on $w_{\gamma,l}^1$, but not on $w_{\gamma,l}^2$ ).
$l \notin \gamma$ indicates if layer $l$ is skipped. 
\end{definition}

\begin{definition}
\textbf{DenseNet path parametrization}: Similarly to the ResNet case, a path $\gamma$ from input in to output in a DensdNet, defines a product of weights along the path denoted by:
\[
P_{\gamma} = \prod_{l=0}^{L+1} p_{\gamma,l}
\]
where:
\[
p_{\gamma,l} = \begin{cases} 
      1 & l \notin \gamma\\
      w_{\gamma,l}z_{\gamma,l} & l \in \gamma, 0<l\leq L\\
      w_{\gamma,l} &  l=\{0,L+1\} 
   \end{cases}
\]

Here, $w_{\gamma,l}$ is a weight associated with layer $l$, $w_{\gamma,0},w_{\gamma,L+1}$ belong to the first and last linear projection matrices $W^s,W^f$, and $z_{\gamma,l}$ is the binary activation variable relevant for weight $w_{\gamma,l}$.
$l \notin \gamma$ indicates if layer $l$ is skipped. 
\end{definition}

\begin{theorem}\label{the:eq}
For any architecture $\mathcal{N}$ described in Sec 4, the following holds at initialization for any non-negative even integer $m$:
\begin{equation}\label{eq:equi}
   \forall_{W^k \in \mathcal{N}},~ \E\big[(f_{(k)})^{m} \big] =  \E\big[(f^{k})^{m} \big]
\end{equation}
\begin{proof}
We present the proof using the DenseNet path parameterization. Extending to ResNet parameterization is trivial and requires no additional arguments.
We aim to show that for an even $m$, and $\forall_{W^k \in \mathcal{N}}$:
\[
\E\big[(f_{(k)})^{m} \big] =  \E\big[(f^{k})^{m} \big]
\]
The output $f^{k}$ can be expressed as follows:
\[
f^k = \sum_{\gamma \in \{\gamma_{k}\}}c_\gamma \prod_{l=0}^{L+1} p_{\gamma,l}
\]
Since the output $f_{(k)}$ is composed of products of weights and activations along the same paths $\gamma \in \{\gamma_{k}\}$ as $f^{k}$ (with different activation variables), we only need to prove the following:\\
For any weight matrix $W^k \in \mathcal{N}$, and a set of $m$ paths $\gamma^1...\gamma^m \in \{\gamma_{k}\}$, it holds that:
\[
&&\E_{\mathcal{N}}[\prod_{i=1}^mP_{\gamma^i}] = \E_{\mathcal{N}_{(k)}}[\prod_{i=1}^mP_{\gamma^i}]
\]
where $\E_{\mathcal{N}}$ stands for expectation using the full architecture, and $\E_{\mathcal{N}_{(k)}}$ stands for expectation using the reduced architecture.

We have that:
\[
&&\E_{\mathcal{N}}[\prod_{i=1}^mP_{\gamma^i}] = \E_{\mathcal{N}}[\prod_{l=0}^{L+1}(\prod_{i=1}^mp_{\gamma^i,l})]
\]
From the linearity of the last layer, it follows that:
\[
\E_{\mathcal{N}}[\prod_{i=1}^mP_{\gamma^i}] = \E_{\mathcal{N}}[\prod_{l=0}^{L}(\prod_{i=1}^mp_{\gamma^i,l})] \E_{\mathcal{N}}[(\prod_{i=1}^mp_{\gamma^i,{L+1}})] = \E_{\mathcal{N}}[\prod_{l=0}^{L}(\prod_{i=1}^mp_{\gamma^i,l})] \E_{\mathcal{N}}[\prod_{i=1}^mw_{\gamma^i,L+1}]
\]
We denote by $\{w_u^{L+1}\}_{u=1}^s,s\leq m$ the set of $s$ unique weights in $\{w_{\gamma^i,L+1}\}_{i=1}^m$, with corresponding multiplicity $\{m_u^{L+1}\}$, such that $\sum_u m_u^{L+1} = m$. It follows that:
\[
\E_{\mathcal{N}}[\prod_{i=1}^mP_{\gamma^i}] = \E_{\mathcal{N}}[\prod_{l=0}^{L}(\prod_{i=1}^mp_{\gamma^i,l})] \E_{\mathcal{N}}[\prod_{u} (w_u^{L+1})^{m_u^{L+1}}] = \E_{\mathcal{N}}[\prod_{l=0}^{L}(\prod_{i=1}^mp_{\gamma^i,l})] \prod_u c_{m_u^{L+1}}
\]
where $c_{m_u^{L+1}}$ is the $m_u^{L+1}$'th moment of a normal distribution.

Since the computations done by all considered architectures form a markov chain, such that the output of any layer depends only on up-stream weights, denoted by $R^{l-1}$, we have that:
\[
\E_{\mathcal{N}}[\prod_{i=1}^mp_{\gamma^i,L}] = \E_\mathcal{N}\Big[\E_{\mathcal{N}}[\prod_{i=1}^mp_{\gamma^i,L}\Big|R^{L-1}]\Big] = \E_\mathcal{N}\Big[\E_{\mathcal{N}}[\prod_{i=1}^mp_{\gamma^i,L}\Big|q^0...q^{L-1}]\Big]
\]
Conditioned on $q^0...q^{L-1}$, the pre-activations $y^L$ are zero mean iid Gaussian variables. In addition, the activations $q^L = 2\phi(y^L)$ are iid distributed. We denote by $\{z_u\}_{u=1}^s$ the set of unique activation variables in the set $\{z_{\gamma^i,L}\}_{i=1}^m$. For each $z_u$, we denote by $\{w_{u,v}^L\}$ the set of unique weights in $\{w_{\gamma^i,L}\}$ multiplying $z_u$, with corresponding multiplicity $m_{u,v}^L$, such that $\sum_{u,v}m_{u,v}^L = m$, and $\sum_{v}m_{u,v}^L = m_u^{L+1}$. Note that, from the symmetry of the normal distribution, it holds that odd moments vanish, and so we only need to consider even $m_u^{L=1}$ for all $u$.  From the independence of the set $\{z_u\}$, the expectation takes a factorized form:
\[
\E_\mathcal{N}\Big[\E_{\mathcal{N}}[\prod_{i=1}^mp_{\gamma^i,L}\Big|q^0...q^{L-1}]\Big] &=& \E_\mathcal{N}\Big[\prod_{u=1}^s\E_{\mathcal{N}}[ z_u\prod_{v}(w_{u,v}^L)^{m_{u,v}^L}\Big|q^0...q^{L-1}]\Big]\\
&=&\E_\mathcal{N}\Big[\prod_{u=1}^s\E_{\mathcal{N}}[ z_u\prod_{v}(w_{u,v}^L)^{m_{u,v}^L}\Big|\sum_{h=0}^{L-1}\|q^h\|>0]\Big]
\]
Note that for both the full and reduced architecture, flipping the sign of all weights in layer $k$, will flip the activation variables (except for a set of zero measure defined by $\sum_{h=0}^{k-1}W^{k,h\top}q^h = 0$, which does not affect the expectation). And so, using Proposition \ref{p1}:
\[
\E_\mathcal{N}\Big[\prod_{u=1}^s\E_{\mathcal{N}}[ z_u\prod_{v}(w_{u,v}^L)^{m_{u,v}^L}\Big|\sum_{h=0}^{l-1}\|q^h\|>0]\Big] = \E[\mathbbm{1}_{\sum_{h=0}^{l-1}\|q^h\|>0}]\prod_{u=1}^s(\frac{\prod_{v}c_{m_{u,v}^L}}{2})
\]
It then follows:
\[\label{factorized}
\E_{\mathcal{N}}[\prod_{l=0}^{L}(\prod_{i=1}^mp_{\gamma^i,l})] &=& \E_{\mathcal{N}}[\mathbbm{1}_{\sum_{h=0}^{L-1}\|q^h\|>0}\prod_{l=0}^{L-1}(\prod_{i=1}^mp_{\gamma^i,l})]\prod_{u=1}^s(\frac{\prod_{v}c_{m_{u,v}^L}}{2})\\
&=& \E_{\mathcal{N}}[\prod_{l=0}^{L-1}(\prod_{i=1}^mp_{\gamma^i,l})]\prod_{u=1}^s(\frac{\prod_{v}c_{m_{u,v}^L}}{2})
\]
Similarly, it holds for the reduced architecture:
\[
\E_{\mathcal{N}_{(k)}}[\prod_{l=0}^{L}(\prod_{i=1}^mp_{\gamma^i,l})] &=&\E_{\mathcal{N}_{(k)}}[\mathbbm{1}_{\sum_{h=0}^{L-1}\|q_{(k)}^h\|>0}\prod_{l=0}^{L-1}(\prod_{i=1}^mp_{\gamma^i,l})]\prod_{u=1}^s(\frac{\prod_{v}c_{m_{u,v}^L}}{2})\\
&=& \E_{\mathcal{N}_{(k)}}[\prod_{l=0}^{L-1}(\prod_{i=1}^mp_{\gamma^i,l})]\prod_{u=1}^s(\frac{\prod_{v}c_{m_{u,v}^L}}{2})
\]
Recursively going through $l = L-1...0$ completes the proof for DenseNet.
\end{proof}
\end{theorem}

\begin{theorem}\label{the:dual}
For any architecture $\mathcal{N}$ described in Sec 4, the following holds at initialization:
\begin{enumerate}\label{eq:du}
  \item $\forall_{W^k \in \mathcal{N}}~~\E\big[\|J^k\|^2 \big] = E\big[(f_{(k)})^2 \big]$
  \item $\forall_{W^k \in \mathcal{N}}~~\frac{E\big[(f_{(k)})^4 \big]}{c_4} \leq \E\big[\|J^k\|^4 \big] \leq E\big[(f_{(k)})^4 \big]$
\end{enumerate}
\begin{proof}
We present the proof using the DenseNet path parameterization. Extending to ResNet parameterization is trivial and requires no additional arguments.
Neglecting scaling coefficients for notational simplicity, for any weight matrix $W^k \in \mathcal{N}$,
the Jacobian $J^{k}$ can be expressed as follows:
\[
\E\Big[\|J^k\|^2\Big] = \E\Big[\|\frac{\partial f^k}{\partial W^k}\|^2\Big] = \sum_{i,j}\E\Big[(\sum_{\gamma|w_{i,j}^k\in \gamma}\frac{1}{w_{i,j}^{k}}P_{\gamma})^2\Big]
\]
where $\sum_{\gamma|w_{i,j}^k\in \gamma}$ represents sum over paths that include weight $w_{i,j}^k$. From applying Eq.~\ref{factorized} recursively, the expectation is factorized as follows:
\[\label{second}
\E\Big[\|\frac{\partial f^k}{\partial W^{k}}\|^2\Big]&=&\sum_{i,j}\sum_{\gamma|w_{i,j}^k\in \gamma}\E \Big[(\frac{1}{w_{i,j}^k}p_{\gamma,k})^2\Big|\sum_{h=0}^{k-1}\|q^h\|>0\Big]\prod_{l\neq k} \E \Big[(p_{\gamma,l})^2\Big|\sum_{h=0}^{l-1}\|q^h\|>0\Big]
\]
Using Propositions \ref{p1} and \ref{p2} , we have that
\[
\forall_{\gamma|w_{i,j}^k\in \gamma},~ \Big[(\frac{1}{w_{i,j}^k}p_{\gamma,k})^2\Big|\sum_{h=0}^{k-1}\|q^h\|>0\Big] &=& \E \Big[(\frac{w_{i,j}^kz_{\gamma,k}}{w_{i,j}^k})^2\Big|\sum_{h=0}^{k-1}\|q^h\|>0\Big] = \frac{\mathbbm{1}_{\sum_{h=0}^{k-1}\|q^h\|>0}}{2}\\
&=& \E \Big[(p_{\gamma,k})^2\Big|\sum_{h=0}^{k-1}\|q^h\|>0\Big]
\]
Inserting into Eq.~\ref{second}, and using Theorem 3 proves the first claim.\\
For the second claim, it follows:
\[
\E\Big[\|J^k\|^4\Big] &=& 
\E\Big[\|\frac{\partial f^{k}}{\partial W^k}\|^2\|\frac{\partial f^{k}}{\partial W^k}\|^2\Big]\\
&=& \sum_{i,j}\sum_{i',j'}\E \Big[(\sum_{\gamma|w_{i,j}^{k}\in \gamma}\frac{1}{w_{i,j}^{k}}P_{\gamma})^2(\sum_{\gamma|w_{i',j'}^{k}\in \gamma}\frac{1}{w_{i',j'}^{k}}P_{\gamma})^2\Big]\\
&=& \sum_{i,i',j,j'}\E \Big[\frac{1}{(w_{i,j}^{k})^2(w_{i',j'}^k)^2}\sum_{\gamma^1,\gamma^2|w_{i,j}^{k}\in \gamma^1,\gamma^2}~~\sum_{\gamma^3,\gamma^4|w_{i',j'}^{k}\in \gamma^3,\gamma^4}P_{\gamma^1}P_{\gamma^2}P_{\gamma^3}P_{\gamma^4}\Big]
\]
From applying Eq.~\ref{factorized} recursively, the expectation is factorized as follows:
\[\label{fourth}
\E\Big[\|J^k\|^4\Big] = \sum_{i,i',j,j'}\sum_{\gamma^1,\gamma^2|w_{i,j}^{k}\in \gamma^1,\gamma^2}~~\sum_{\gamma^3,\gamma^4|w_{i',j'}^{k}\in \gamma^3,\gamma^4}\bigg[\E \Big[\frac{\prod_{h=1}^4p_{\gamma^h,k}}{(w_{i,j}^{k})^2(w_{i',j'}^k)^2}\Big|\sum_{h=0}^{k-1}\|q^h\|>0\Big]...\\
\prod_{l\neq k} \E \Big[\prod_{h=1}^4p_{\gamma^h,l}\Big|\sum_{h=0}^{l-1}\|q^h\|>0\Big]\bigg]
\]
Using Propositions \ref{p1} and \ref{p2}, we have that
\[
&&\forall_{\gamma^1,\gamma^2|w_{i,j}^k\in \gamma^1,w_{i',j'}^k\in \gamma^2},\E \Big[\frac{\prod_{h=1}^4p_{\gamma^h,k}}{(w_{i,j}^{k})^2(w_{i',j'}^k)^2}\Big|\sum_{h=0}^{k-1}\|q^h\|>0\Big]\\
&&= \E \Big[\frac{(w_{i,j}^{k})^2(w_{i',j'}^k)^2 z_{\gamma^1,k}z_{\gamma^2,k}}{(w_{i,j}^{k})^2(w_{i',j'}^k)^2}\Big|\sum_{h=0}^{k-1}\|q^h\|>0\Big]\\
&&= \begin{cases} 
      \frac{\mathbbm{1}_{\sum_{h=0}^{k-1}\|q^h\|>0}}{2} & w_{i,j}^k \equiv w_{i',j'}^k\\
      \frac{\mathbbm{1}_{\sum_{h=0}^{k-1}\|q^h\|>0}}{4} & else 
   \end{cases}\\
&&= \begin{cases} 
      \frac{1}{c_4}\E \Big[\prod_{h=1}^4p_{\gamma^h,k}\Big|\sum_{h=0}^{k-1}\|q^h\|>0\Big]\ & w_{i,j}^k \equiv w_{i',j'}^k\\
      \E \Big[\prod_{h=1}^4p_{\gamma^h,k}\Big|\sum_{h=0}^{k-1}\|q^h\|>0\Big]\ & else 
   \end{cases}
\]

Inserting into Eq.~\ref{fourth}, using Theorem 3, and the fact that $c_4>1$, the second claim is proven. 
\end{proof}
\end{theorem}

We use the following proposition to aid in the proofs of Theorems 5 and 6.
\begin{proposition}\label{p4}
For vanilla fully connected network, with intermediate outputs given by:
\[
\forall_{0\leq l \leq L},~y^{l} = \sqrt{2}\phi(\frac{1}{\sqrt{n_{l-1}}}W^{l\top}y^{l-1})
\]
, where the weight matrices $W^l \in \mathbb{R}^{n_{l-1}\times n_l}$ are normally distributed, the following holds at initialization:
\[
\E[\|y^l\|^2] &=& \frac{n_l}{n_{l-1}}\E[\|y^{l-1}\|^2]\\
\E[\|y^l\|^4] &=&  \frac{n_l(n_l+5)}{n_{l-1}^2}\E[\|y^{l-1}\|^4]
\]
\begin{proof}
Absorbing the scale $\sqrt{\frac{2}{n_{l-1}}}$ into the weights, we
denote by $Z^l$ the diagonal matrix holding in its diagonal the activation variables $z_j^l$ for unit $j$ in layer $l$, and so we have:
\[
y^{l} = Z^lW^{l\top}y^{l-1}
\]
Conditioning on $R^{l-1} = \{W^1...W^{l-1}\}$ and taking expectation:
\[
\mathbb{E}\Big[\|y^l\|^2|R^{l-1} \Big] = y^{l-1\top}\E \Big[W^lZ^lW^{l\top}\Big]y^{l-1}\\
=\sum_{j=1}^{n_l}\sum_{i_1,i_2=1}^{n_{l-1}}y_{i_1}^{l-1}y_{i_2}^{l-1}\E\Big[w_{i_1,j}^lw_{i_2,j}^lz_j^l|R^{l-1}\Big]
\]
From Proposition \ref{p1}, it follows that:
\[
\E[\|y^l\|^2] = \E\Big[\E[\|y^l\|^2|R^{l-1}]\Big] = \frac{n_L}{n_{L-1}}\E[\|y^{l-1}\|^2]
\]

Similarly:
\[
&&\E\Big[\|y^l\|^4 \Big|R^{l-1}\Big] = \E\Big[(y^{L-1\top}W^LZ^LW^{L\top}y^{L-1})^2\Big|R^{l-1}\Big]\\
&&=\sum_{j_1,j_2,i_1,i_2,i_3,i_4} y_{i_1}^{l-1}  y_{i_2}^{l-1}  y_{i_3}^{L-1} y_{i_4}^{l-1}\E\Big[w_{i_1,j_1}^lw_{i_2,j_1}^lw_{i_3,j_2}^lw_{i_4,j_2}^lz_{j_1}^lz_{j_2}^l|R^{l-1}\Big]
\]
From Proposition 1, and the independence of the activation variables conditioned on $R^{l-1}$:
\[
&&\E\Big[w_{i_1,j_1}^lw_{i_2,j_1}^lw_{i_3,j_2}^lw_{i_4,j_2}^lz_{j_1}^lz_{j_2}^l|R^{l-1}\Big] \\
&&=\E\Big[w_{i_1,j_1}^lw_{i_2,j_1}^lw_{i_3,j_2}^lw_{i_4,j_2}^lz_{j_1}^lz_{j_2}^l|R^{l-1}\Big]\Big(\mathbbm{1}_{j_1=j_2,i_1=i_2=i_3=i_4} + 3\mathbbm{1}_{j_1=j_2,i_1=i_2,i_3=i_4,i_1\neq i_3}  ...\\
&&+\mathbbm{1}_{j_1\neq j_2,i_1=i_2,i_3=i_4}\Big)
\]
and so:
\[
\E\Big[\|y^l\|^4\Big] &=& \frac{n_lc_4^l}{2}\sum_{i=1}\E\Big[(y_i^{l-1})^4\Big] + \frac{6n_l}{n_{l-1}^2}\sum_{i_1\neq i_2}\E\Big[(y_{i_1}^{l-1})^2(y_{i_2}^{l-1})^2\Big]\\
&+& \frac{n_l(n_l-1)}{n_{l-1}^2}\sum_{i_1, i_2}\E\Big[(y_{i_1}^{l-1})^2(y_{i_2}^{l-1})^2\Big]\\
&=&\frac{n_l(c_4^l - 3(c_2^l)^2)}{2}\sum_{i}\E\Big((y_i^{l-1})^4\Big) + \frac{n_l(n_l+5)}{n_{l-1}^2}\E\Big[\|y^{l-1}\|^4\Big]\\
 &=& \frac{n_l\Delta}{n_{l-1}^2}\sum_{i}\E\Big((y_i^{l-1})^4\Big) + \frac{n_l(n_l+5)}{n_{l-1}^2}\E\Big[\|y^{l-1}\|^4\Big]
\]
For Gaussian distributions, $\Delta = 0$, proving the claim.

\end{proof}

\end{proposition}

\begin{proposition}\label{p5}
For a vanilla fully connected linear network, with intermediate outputs given by:
\[
\forall_{0\leq l \leq L},~y^{l} = \frac{1}{\sqrt{n_{l-1}}}W^{l\top}y^{l-1}
\]
, where the weight matrices $W^l \in \mathbb{R}^{n_{l-1}\times n_l}$ are normally distributed, the following holds at initialization:
\[
\E[\|y^l\|^2] &=& \frac{n_l}{n_{l-1}}\E[\|y^{l-1}\|^2]\\
\E[\|y^l\|^4] &=&  \frac{n_l(n_l+2)}{n_{l-1}^2}\E[\|y^{l-1}\|^4]
\]
\begin{proof}
The proof follows the derivation of Proposition \ref{p4} exactly, and will be omitted for brevity.
\end{proof}
\end{proposition}

\begin{theorem}\label{res_ntk_var}
For a constant width ResNet  ($n_0,n_{0,1}...n_{1,1}...n_L = n$), with positive initialization constants $\{\alpha_l\}_{l=1}^L$, there exists a constant $C>0$ such that:
\[
\max\Big[1,\frac{\sum_{u} \alpha_u^2}{\sum_{u,v} \alpha_u\alpha_v}\xi\Big] \leq \eta(n,L) \leq \xi
\]
where:
\[
 \xi = \exp\Big[\frac{C}{n}\sum_{l=1}^L \frac{\alpha_l}{1+\alpha_l} \Big]\Big(1 + \mathcal{O}(\frac{1}{n})\Big)
\]

\begin{proof}
Using the result of Theorem 4, and using Cauchy–Schwarz inequality, an upper bound to $\eta$ can be derived:
\[
\eta = \frac{\E[\mathcal{G}(x,x)^2]}{K_L(x,x)^2}  = \frac{\sum_{u,v}\E[\|J^u\|^2\|J^v\|^2]}{K_L(x,x)^2} \leq \frac{\sum_{u,v}\sqrt{\E[\|J^u\|^4]\E[\|J^v\|^2]}}{K_L(x,x)^2} \leq \frac{\sum_{u,v}\sqrt{\E[\|f_{(u)}\|^4]\E[\|f_{(v)}\|^2]}}{K_L(x,x)^2}
\]
The lower bound is similarly derived using Theorem 4:
\[
\eta \geq  \frac{\sum_{k}\E[\|J^k\|^4]}{K_L(x,x)^2} \geq \frac{1}{c_4}\frac{\sum_{k}\E[\|f_{(k)}\|^4]}{K_L(x,x)^2}
\]

The asymptotic behaviour of $\eta$ is therefore governed by the propagation of the fourth moment $\E[\|y_{(k)}^l\|^4]$ through the model.

In the following proof, for the sake of notation simplicity, we omit the notation $k$ in $y^l_{(k)}$, and assume that $y^l$ stands for the reduced network  $y^l_{(k)}$.
The recursive formula for the intermediate outputs of the reduced network are given by:
\[
y^l = 
\begin{cases} 
      y^{l-1} + \sqrt{\alpha_l}y^{l-1,m} & 0<l\leq L, l\neq k  \\
      \sqrt{\alpha_l}y^{l-1,m} & l=k
   \end{cases}
\]
where:
\[  y^{l-1,h} = \begin{cases} 
      \sqrt{\frac{1}{n}}W^{l,h\top}q^{l-1,h-1} & 1<h\leq m  \\
      \sqrt{\frac{1h}{n}}W^{l,h\top}y^{l-1} & h=1 
   \end{cases}
\]
with $q^{l-1,h} = \sqrt{2}\phi(y^{l-1,h})$.

We have for layer $L$, using the results of Propositions \ref{p4} and \ref{p5}:
\[\label{rm}
    \E\Big[\|y^L\|^2 \Big] &=&  \E\Big[\|y^{L-1}\|^2\Big] +  \frac{\alpha_L}{n}\E\Big[y^{L-1,m-1\top}W^{L,m}W^{L,m\top}y^{L-1,m-1}\Big]\\ &=&   \E\Big[\|y^{L-1}\|^2\Big] + \alpha_L\E\Big[\|y^{L-1,m-1}\|^2\Big] = \E\Big[\|y^{L-1}\|^2\Big]\Big(1 + \alpha_L\Big)\\
    &=& \E\Big[\|y^{k}\|^2\Big]\prod_{l=k+1}^L\Big(1 + \alpha_l\Big) = \E\Big[\|y^{k-1}\|^2\Big]\alpha_k\prod_{l=k+1}^L\Big(1 + \alpha_l\Big)\\
     &=&\alpha_k\E[\|y^0\|^4]\prod_{l\neq k}\Big(1 + \alpha_l\Big) 
\]
For the fourth moment, using the results of proposition \ref{p4} and \ref{p5} (taking into account that odd powers will vanish in expectation), it holds: 
\[\label{res}
    \E\Big[\|y^L\|^4 \Big] =  \E\Big[\|y^{L-1}\|^4\Big] +  \alpha_L^2\E\Big[\|y^{L-1,m}\|^4\Big]+ 4\alpha_L\E\Big[(y^{L-1,m\top}y^{L-1})^2\Big]
    +2\alpha_L\E\Big[\|y^{L-1,m}\|^2\|y^{L-1}\|^2\Big]
\]
We now handle each term separately:
\[
    \E\Big[\|y^{L-1,m}\|^4\Big] = \E\Big[\E[\|y^{L-1,m}\|^4|R^{L-1}]\Big]
\]
Using the results of Propositions \ref{p4} and \ref{p5}:
\[ 
 &&\E\Big[\|y^{L-1,m}\|^4|R^{L-1}\Big] = (1+\frac{2}{n})(1+\frac{5}{n})^{m-1}\|y^{L-1}\|^4 \sim (1+\frac{5}{n})^{m}\|y^{L-1}\|^4\\
    &&\E\Big[(y^{L-1,m-1\top}y^{L-1})^2\Big] = \frac{1}{n}\sum_{j_1j_2i_1i_2}\E\Big[y^{L-1,m-1}_{i_1}y^{L-1,m-1}_{i_2}y^{L-1}_{j_1}y^{L-1}_{j_2}w^{L,m}_{i_1,j_1}w^{L,m}_{i_2,j_2} \Big]\\
     &&= \frac{1}{n}\E\Big[\|y^{L-1,m-1}\|^2\|y^{L-1}\|^2 \Big] = \frac{1}{n}\E\Big[\|y^{L-1}\|^4 \Big]\\
    &&\E\Big[\|y^{L-1,m}\|^2\|y^{L-1}\|^2\Big] = \E\Big[\|y^{L-1}\|^4 \Big]
\]
Plugging it all into Eq.~\ref{res}, and denoting:
\[
\beta_L = 1+2\alpha_L(1+\frac{2}{n}) + \alpha_L^{2}(1+\frac{5}{n})^{m}\\
\]
we have after recursing through $l=L-1...k+1$:
\[
 \E\Big[\|y^L\|^4 \Big] \sim \E\Big[\|y^k\|^4 \Big]\prod_{l=k+1}^L\beta_l
\]

In the reduced architecture, the transformation from layer $k-1$ to layer $k$ is given by an $m$ layer fully connected network, with a linear layer on top, we can use the results from the vanilla case, and assigning $\|y^0\|^4 = 1$:
\begin{equation}
    \E\Big[\|y^L\|^4 \Big] = \alpha_k^{2}(1+\frac{2}{n})(1+\frac{5}{n})^{m-1}\prod_{l\neq k}^L\beta_l \sim \alpha_k^{2}(1+\frac{5}{n})^{m}\prod_{l\neq k}^L\beta_l
\end{equation}
Denoting $\rho = (1+\frac{5}{n})^\frac{m}{2}$, and using the following inequality:
\[
\beta_l \sim \Big(1 + \alpha_l\rho\Big)^2
\]
it follows that:
\[
\E[\mathcal{G}(x,x)^2] &\lessapprox& \sum_{u,v}\sqrt{\E[\|y_{(u)}^L\|^4]\E[\|y_{(v)}^L\|^2]}\\
&\sim&  (1+\frac{5}{n})^{m}\sum_{u,v}\alpha_u\alpha_v\sqrt{(\prod_{l\neq u}^L\beta_l)(\prod_{l\neq v}^L\beta_l)}\\
&=&(1+\frac{5}{n})^{m}\sum_{u,v}\alpha_u\alpha_v\Big(\prod_{l\neq u}(1 + \rho\alpha_l)\Big)\Big(\prod_{l\neq v}(1 + \rho\alpha_l)\Big)
\]
Similarly, we have:
\[
\E[\mathcal{G}(x,x)^2] \gtrapprox \sum_{k}\E[\|J^k\|^4] \sim (1+\frac{5}{n})^{m}\sum_{u}\alpha_u^2\prod_{l\neq u}^L\beta_l = (1+\frac{5}{n})^{m}\sum_{u}\alpha_u^2\prod_{l\neq u}^L(1 + \rho\alpha_l)^2
\]
Using Eq.~\ref{rm}, we have that:
\[
K_L^D(x,x)^2 = \sum_{u,v} \alpha_u \alpha_v\Big(\prod_{l\neq u}(1 + \alpha_l)\Big)\Big(\prod_{l\neq v}(1 + \alpha_l)\Big) 
\]
yielding:
\[
\frac{\E[\mathcal{G}(x,x)^2]}{K_L^R(x,x)^2} &\lessapprox& (1+\frac{5}{n})^{m}\frac{\sum_{u,v}\alpha_u\alpha_v\Big(\prod_{l\neq u}(1 + \rho\alpha_l)\Big)\Big(\prod_{l\neq v}(1 + \rho\alpha_l)\Big)}{\sum_{u,v} \alpha_u \alpha_v\Big(\prod_{l\neq u}(1 + \alpha_l)\Big)\Big(\prod_{l\neq v}(1 + \alpha_l)\Big)} \\
&\sim& (1+\frac{5}{n})^{m}\frac{\sum_{u,v}\alpha_u\alpha_v\Big(\prod_{l=1}^L(1 + \rho\alpha_l)\Big)\Big(\prod_{l=1}^L(1 + \rho\alpha_l)\Big)}{\sum_{u,v} \alpha_u \alpha_v\Big(\prod_{l=1}^L(1 + \alpha_l)\Big)\Big(\prod_{l=1}^L(1 + \alpha_l)\Big)}\\
&=&(1+\frac{5}{n})^{m}\frac{\Big(\prod_{l=1}^L(1 + \rho\alpha_l)\Big)^2}{ \Big(\prod_{l=1}^L(1 + \alpha_l)\Big)^2} = (1+\frac{5}{n})^{m}\Big(\prod_{l=1}^L(1 + \frac{\alpha_l(\rho - 1)}{1 + \alpha_l})\Big)^2\\
&\sim& \exp \bigg[\frac{C}{n}\sum_l \frac{\alpha_l}{1+\alpha_l} \bigg]\Big(1 + \mathcal{O}(\frac{1}{n})\Big)
\]
for the lower bound, we have:
\[
\frac{\E[\mathcal{G}(x,x)^2]}{K_L^R(x,x)^2} &\gtrapprox& (1+\frac{5}{n})^{m}\frac{\sum_{u}\alpha_u^2\Big(\prod_{l\neq u}^L(1 + \rho\alpha_l)\Big)^2}{\sum_{u,v} \alpha_u \alpha_v\Big(\prod_{l\neq u}(1 + \alpha_l)\Big)\Big(\prod_{l\neq v}(1 + \alpha_l)\Big)} \\
&\sim& \frac{\sum_{u} \alpha_u^2}{\sum_{u,v} \alpha_u\alpha_v}\exp \bigg[\frac{C}{n}\sum_{l=1}^L \frac{\alpha_l}{1+\alpha_l} \bigg]\Big(1 + \mathcal{O}(\frac{1}{n})\Big)
\]

Since $\E[\mathcal{G}(x,x)^2]>K_L^R(x,x)^2$, the lower bound is given by:
\[
\frac{\E[\mathcal{G}(x,x)^2]}{K_L^R(x,x)^2} &\gtrapprox& \max \Bigg[1,\frac{\sum_{u} \alpha_u^2}{\sum_{u,v} \alpha_u\alpha_v}\exp \bigg[\frac{C}{n}\sum_{l=1}^L \frac{\alpha_l}{1+\alpha_l} \bigg]\Big(1 + \mathcal{O}(\frac{1}{n})\Big)\Bigg]
\]
\end{proof}
\end{theorem}

\begin{theorem}\label{dense_ntk_var}
For a constant width DenseNet ($n_0, n_0'...n_L = n$), with initialization constant $\alpha>0$, there exists constants $C_1,C_2>0$ such that:
\[\label{surprise}
\max\Big[1,\frac{C_1}{L\log(L)^2}\xi\Big] \leq \eta(n,L) \leq \xi
\]
where:
\[
 \xi = \exp\Big[\frac{C_2}{n} \Big]\Big(1 + \mathcal{O}(\frac{1}{n})\Big)
\]
\begin{proof}
In the following proof, for the sake of notation simplicity, we omit the notation $k$ in $y^l_{(k)}$, and assume that $y^l$ stands for the reduced network  $y^l_{(k)}$.
The recursive formula for the intermediate outputs of the reduced network are given by:
\[
y^l = 
\begin{cases} 
     
 \sqrt{\frac{\alpha}{nl}}\sum_{h=k}^{l-1}W^{l,h\top}q^h
   & k<l\leq L \\
      \sqrt{\frac{\alpha}{nl}}\sum_{h=0}^{l-1}W^{l,h\top}q^h & 1\leq l <k\\
      \sqrt{\frac{\alpha}{nk}}W^{k,k-1\top}q^{k-1} & l = k
   \end{cases}
\]

with $q^h = \sqrt{2}\phi(y^h)$.
We have:
\[
    \mu_L = \mathbb{E}\Big[\|q^L\|^2 \Big] = \frac{2\alpha}{Ln}\mathbb{E}\Big[(\sum_{l=k}^{L-1}q^{l\top}W^{Ll})Z^L(\sum_{l=k}^{L-1}q^{l\top}W^{Ll}) \Big] = \frac{\alpha}{L}\sum_{l=k}^{L-1}\mu_l 
\]

Telescoping the mean:
\[
\mu_L = \frac{\alpha}{L}\sum_{l=k}^{L-1}\mu_{l} = \frac{\alpha\mu_{L-1}}{L} + \frac{L-1}{L}\mu_{L-1} =  \mu_{L-1}(1 + \frac{\alpha-1}{L}) =\mu_{k+1}\prod_{l=k+2}^L (1 + \frac{\alpha-1}{l})\\
= \frac{\alpha}{k+1}\mu_{k}\prod_{l=k+2}^L (1 + \frac{\alpha-1}{l}) =  \frac{\alpha}{k+1}\mu_{0}\prod_{l\neq k+1}^L (1 + \frac{\alpha-1}{l}) \sim \frac{\alpha}{k+1}\prod_{l=1}^L (1 + \frac{\alpha-1}{l})
\]
and so:
\[
K_L^D(x,x)^2 = (\sum_{k=1}^L \mu_L)^2 = (\sum_{k=1}^L\frac{\alpha}{k+1})^2\prod_{l=1}^L (1 + \frac{\alpha-1}{l})^2 \sim \alpha^2\log(L)^2 \prod_{l=1}^L (1 + \frac{\alpha-1}{l})^2
\]

For the fourth moment:

\[
    &&\mathbb{E}\Big[\|q^L\|^4 \Big] = \frac{4\alpha^2}{n^2L^2}\mathbb{E}\Big[\Big(\sum_{l=k}^{L-1}(q^{l\top}W^{Ll})Z^L\sum_{l=k}^{L-1}(q^{l\top}W^{Ll})\Big)^2 \Big] \\
  &&  = \frac{4\alpha^2}{n^2L^2}\mathbb{E}\Big[\Big(\sum_{l_1=k}^{L-1}(q^{l_1\top}W^{Ll_1})Z^L\sum_{l_2=k}^{L-1}(q^{l_2\top}W^{Ll_2})\sum_{l_3=k}^{L-1}(q^{l_3\top}W^{Ll_3})Z^L\sum_{l_4=k}^{L-1}(q^{l_4\top}W^{Ll_4})\Big]
\]
Using the results from the vanilla architecture, and denoting $C_{l,l'} = \E\Big[\|q^{l}\|^2\|q^{l'}\|^2 \Big]$
, it then follows:
\begin{equation}\label{c}
    C_{L,L} = \frac{\alpha^2(n+5)}{nL^2}\sum_{l_1,l_2=k}^{L-1}C_{l_1,l_2}
\end{equation}
From Eq.~\ref{c}, it also holds that:
\begin{equation}\label{c2}
   \sum_{l_1,l_2=k}^{L-2}C_{l_1,l_2} = \frac{n(L-1)^2}{(n+5)\alpha^2}C_{L-1,L-1}
\end{equation}
It then follows:
\[
    \mathbb{E}(\|q^L\|^4) &=& C_{L,L} \\
    &=& \frac{\alpha^2}{L^2}(1+\frac{5}{n})\sum_{l_1,l_2=k}^{L-1}C_{l_1l_2}\\
    &=&\frac{\alpha^2}{L^2}(1+\frac{5}{n})\Big(C_{L-1,L-1} + \sum_{l_1,l_2=k}^{L-2}C_{l_1l_2} + 2\sum_{l=k}^{L-2}C_{L-1,l} \Big)\\
    &=&\frac{\alpha^2}{L^2}(1+\frac{5}{n})\Big(C_{L-1,L-1} +  \frac{(L-1)^2n}{\alpha^2(n+5)}C_{L-1,L-1} + 2\sum_{l=k}^{L-2}C_{L-1,l} \Big)
\]

The following also holds:
\begin{equation}
\forall_{l_1> l_2\geq k},~C_{l_1,l_2} = \frac{\alpha}{nl_1}\E\Big[(\sum_{l=k}^{l_1-1}q^{l\top}W^{l_1,l}Z^{l_1})^2\|q^{l_2}\|^2\Big]
 = \frac{\alpha}{l_1}\sum_{l=k}^{l_1-1}C_{l,l_2}
\end{equation}
and so:
\[
C_{L,L} &=& \frac{\alpha^2(n+5)}{nL^2}\Big(C_{L-1,L-1} +  \frac{(L-1)^2n}{\alpha^2(n+5)}C_{L-1,L-1} + \frac{2\alpha}{L-1}\sum_{l_1=k}^{L-2}\sum_{l_2=k}^{L-2}C_{l_1,l_2} \Big)\\
&=&\frac{\alpha^2(n+5)}{nL^2}\Big(C_{L-1,L-1} +  \frac{(L-1)^2n}{\alpha^2(n+5)}C_{L-1,L-1} 
+ \frac{2n(L-1)}{\alpha(n+5)}C_{L-1,L-1}  \Big)\\
 &=& \frac{\alpha^2(n+5)}{nL^2}C_{L-1,L-1}\Big(1
 +  \frac{(L-1)^2n}{\alpha^2(n+5)} + \frac{2n(L-1)}{\alpha(n+5)}\Big)\\
  &=& C_{L-1,L-1}\bigg(\Big({1 + \frac{\alpha - 1}{L}}\Big)^2 + \frac{5\alpha^2}{nL^2}\bigg)
\]

Telescoping through $l=L-1...k+1$:
\[
C_{L,L} = C_{k+1,k+1}\prod_{l=k+2}^L \bigg(\Big({1 + \frac{\alpha - 1}{l}}\Big)^2 + \frac{5\alpha^2}{nl^2}\bigg)
\]
For the reduced architecture, the transition from $q^k$ to $q^{k+1}$ is a vanilla ReLU block, and so using the result from the vanilla architecture:
\[
C_{L,L} &=& C_{k,k}\frac{\alpha^2(n+5)}{n(k+1)^2}\prod_{l=k+2}^L \bigg(\Big({1 + \frac{\alpha - 1}{l}}\Big)^2 + \frac{5\alpha^2}{nl^2}\bigg)\\
&=& \frac{\alpha^2(n+5)}{n(k+1)^2}\prod_{l\neq k+1}\bigg(\Big({1 + \frac{\alpha - 1}{l}}\Big)^2 + \frac{5\alpha^2}{nl^2}\bigg)\\
&\sim& \frac{\alpha^2(n+5)}{n(k+1)^2}\prod_{l=1}^L\bigg(\Big({1 + \frac{\alpha - 1}{l}}\Big)^2 + \frac{5\alpha^2}{nl^2}\bigg) 
\]
where we assumed $C_{0,0} = 1$. It follows:
\[
\E[\mathcal{G}(x,x)^2] &\lessapprox& \sum_{u,v}\sqrt{\E[\|y_{(u)}^L\|^4]\E[\|y_{(v)}^L\|^2]}\\
&\sim& \Big(\sum_{k=1}^L\frac{1}{k+1}\Big)^2 \frac{\alpha^2(n+5)}{n}\prod_{l=1}^L\bigg(\Big({1 + \frac{\alpha - 1}{l}}\Big)^2 + \frac{5\alpha^2}{nl^2}\bigg)\\
&\sim& \log(L)^2\frac{\alpha^2(n+5)}{n}\prod_{l=1}^L\bigg(\Big({1 + \frac{\alpha - 1}{l}}\Big)^2 + \frac{5\alpha^2}{nl^2}\bigg)
\]

Similarly, we have:
\[
\E[\mathcal{G}(x,x)^2] \gtrapprox \sum_{k}\E[\|J^k\|^4] &=& \sum_{k=1}^L
\frac{\alpha^2(n+5)}{n(k+1)^2}\prod_{l=1}^L\bigg(\Big({1 + \frac{\alpha - 1}{l}}\Big)^2 + \frac{5\alpha^2}{nl^2}\bigg) \\
&\sim& \frac{\alpha^2(n+5)}{nL}\prod_{l=1}^L\bigg(\Big({1 + \frac{\alpha - 1}{l}}\Big)^2 + \frac{5\alpha^2}{nl^2}\bigg)
\]

yielding:
\[
\frac{\E[\mathcal{G}(x,x)^2]}{K_L^D(x,x)^2} &\lessapprox& \frac{\frac{(n+5)}{n}\prod_{l=1}^L\bigg(\Big({1 + \frac{\alpha - 1}{l}}\Big)^2 + \frac{5\alpha^2}{nl^2}\bigg)}{\prod_{l=1}^L \Big(1 + \frac{\alpha-1}{l}\Big)^2}\\
&=& \frac{(n+5)}{n}\prod_{l=1}^L\Big(1 + \frac{5\alpha^2}{n(l+\alpha-1)^2}\Big)\\
&\sim& \exp\Big[\sum_{l=1}^L\frac{5\alpha^2}{n(l+\alpha-1)^2}\Big]\Big(1 + \mathcal{O}(\frac{1}{n})\Big)\\
&\sim& \exp\Big[\frac{C}{n}\Big]\Big(1 + \mathcal{O}(\frac{1}{n})\Big)
\]

For the lower bound, we have:
\[
\frac{\E[\mathcal{G}(x,x)^2]}{K_L^D(x,x)^2} &\gtrapprox& \frac{\frac{(n+5)}{n}\prod_{l=1}^L\bigg(\Big({1 + \frac{\alpha - 1}{l}}\Big)^2 + \frac{5\alpha^2}{nl^2}\bigg)}{L\log(L)^2\prod_{l=1}^L \Big(1 + \frac{\alpha-1}{l}\Big)^2}\\
&\sim& \frac{1}{L\log(L)^2}\exp\Big[\frac{C}{n}\Big]\Big(1 + \mathcal{O}(\frac{1}{n})\Big)
\]
Since $\E[\mathcal{G}(x,x)^2]>K_L^R(x,x)^2$, the lower bound is given by:
\[
\frac{\E[\mathcal{G}(x,x)^2]}{K_L^R(x,x)^2} &\gtrapprox& \max \Bigg[1,\frac{1}{L\log(L)^2}\exp\Big[\frac{C}{n}\Big]\Big(1 + \mathcal{O}(\frac{1}{n})\Big)\Bigg]
\]
\end{proof}
\end{theorem}